\DeclareMathAlphabet{\mathcal}{OMS}{cmsy}{m}{n}
\definecolor{darkgreen}{rgb}{0.,.66,0.}
\newtheorem{definition}{Definition}
\newtheorem{theorem}{Theorem}
\newtheorem{lemma}{Lemma}
\newtheorem{corollary}{Corollary}
\newtheorem{assumption}{Assumption}
\DeclareMathOperator*{\im}{Im}
\newcommand{\defn}[1]{\begin{definition} #1 \end{definition}}
\newcommand{\set}[1]{\left\{ #1 \right\}}
\newcommand{\abs}[1]{\left| #1 \right|}
\newcommand{\absM}[1]{\abs{#1}_{\Mbar}}
\newcommand{\absMinv}[1]{\abs{#1}_{\Minv}}
\newcommand{\angleM}[2]{\langle #1, #2 \rangle_{\Mbar}}
\newcommand{\paren}[1]{\left( #1 \right)}
\newcommand{\brak}[1]{\left[ #1 \right]}
\newcommand{\vfof}[2]{\frac{\partial #1}{\partial #2}}
\newcommand{\into}{\rightarrow}
\newcommand{\goesto}{\rightarrow}
\newcommand{\dt}[1]{\frac{d\,}{d#1}}
\newcommand{\e}[1]{\ensuremath{\mathcal{#1}}}
\newcommand{\hds}{\e{H}=(\e{J},\Gamma,\e{D},F,\e{G},R)}
\newcommand{\sm}{\backslash}
\newcommand{\eqn}[1]{\begin{equation*}\begin{aligned}[b] #1 \end{aligned}\end{equation*}}
\newcommand{\eqnn}[1]{\begin{equation}\begin{aligned} #1 \end{aligned}\end{equation}}
\newcommand{\bd}{\partial}
\newcommand{\KN}{{\e{K}_n}}
\newcommand{\KT}{{\e{K}_t}}
\newcommand{\td}[1]{\widetilde{#1}}
\newcommand{\vphi}{\varphi}
\newcommand{\Lie}{\e{L}}
\newcommand{\obar}[1]{\overline{#1}}
\newcommand{\ubar}[1]{\underline{#1}}
\definecolor{grey50}{rgb}{0.5,0.5,0.5}
\definecolor{ltblue}{rgb}{0.25,0.25,1.0}
\newcommand{\changed}[1]{{#1}} 
\begin{document}
\title{A Hybrid Systems Model for Simple Manipulation and Self-Manipulation Systems}

\author{Aaron~M.~Johnson\thanks{Corresponding author;
Robotics Institute,
Carnegie Mellon, Pittsburgh, PA, USA, e-mail: amj1@andrew.cmu.edu}
\and Samuel~A.~Burden
\and D.~E.~Koditschek}

\date{\vspace{-3ex}}
\maketitle

\begin{tikzpicture}[overlay, remember picture]
\path (current page.north west) ++(2.5,0) node[below right] {\fbox{\parbox{\dimexpr\textwidth-\fboxsep-\fboxrule\relax}{  \footnotesize  This paper has be published in the International Journal of Robotics Research, \textcopyright 2016 Sage Publishing. Cite as: \\ 
Aaron M. Johnson, Samuel E. Burden, and D. E. Koditschek. ``A Hybrid Systems Model for Simple Manipulation and Self-Manipulation Systems.'' \emph{International Journal of Robotics Research}, 35(11): 1354--1392. September 2016.
DOI: \href{https://doi.org/10.1177/0278364916639380}{https://doi.org/10.1177/0278364916639380} }}};
\end{tikzpicture}

\mathchardef\minus="002D

\newcommand{\mycoprod}{%
               \mathrel{\raisebox{.1em}{%
               \reflectbox{\rotatebox[origin=c]{180}{$\prod$}}}}}

\newcommand{\argmin}{\operatornamewithlimits{argmin}}
\newcommand{\argmax}{\operatornamewithlimits{argmax}}
\def\mathbi#1{\textbf{\em #1}}

\def \Mbar {\overline{\mathbf{M}}}
\def \Mbare {\overline{\mathbf{M}}_\epsilon}
\def \Md {\overline{\mathbf{M}}^{\dagger}}
\def \Cbar {\overline{\mathbf{C}}}
\def \Ctd {\td{\mathbf{C}}}
\def \Nbar {\overline{\mathbf{N}}}
\def \Minv {\Mbar^{-1}}
\def \bfq  {\mathbf{q}}
\def \biq  {\mathbf{T}\mathbf{q}} 
\def \Tq   {\biq} 
\def \biy  {\mathbf{T}\mathbf{y}} 
\def \Ty   {\biy} 
\def \bff  {\mathbf{f}}
\def \bfa  {\mathbf{a}}
\def \bfW  {\mathbf{W}}
\def \bfS  {\mathbf{S}}
\def \bfA  {\mathbf{A}}
\def \bfB  {\mathbf{B}}
\def \bfAd  {\mathbf{A}^{\dagger}}
\def \bfAdT  {\mathbf{A}^{\dagger T}}
\def \bfc  {\mathbf{c}}
\def \bfD  {\mathbf{D}}
\def \bfE  {\mathbf{E}}
\def \bfG  {\mathbf{G}}
\def \bfH  {\mathbf{H}}
\def \bfJ  {\mathbf{J}}
\def \bfK  {\mathbf{K}}
\def \bfM  {\mathbf{M}}
\def \bfN  {\mathbf{N}}
\def \bfP  {\mathbf{P}}
\def \bfU  {\mathbf{U}}
\def \bfu  {\mathbf{u}}
\def \bfx  {\mathbf{x}}
\def \bfy  {\mathbf{y}}
\def \bfY  {\mathbf{Y}}
\def \bfz  {\mathbf{z}}
\def \qimp {\mathbf{P}}
\def \limp {\widehat{\bfP}}
\def \dimp {\widetilde{\bfP}}
\def \Id  {\mathbf{Id}}
\def \calC {\mathcal{C}}
\def \calD {\mathcal{D}}
\def \calF {\mathcal{F}}
\def \calG {\mathcal{G}}
\def \calH {\mathcal{H}}
\def \calI {\mathcal{I}}
\def \calJ {\mathcal{J}}
\def \calK {\mathcal{K}}
\def \calN {\mathcal{N}}
\def \calQ {\mathcal{Q}}
\def \calR {\mathcal{R}}
\def \calT {\mathcal{T}}
\def \calU {\mathcal{U}}
\def \nGamma {\mathnormal{\Gamma}}
\def \bbR {\mathbb{R}}
\def \rmq {\mathrm{q}}
\def \itA  {\mathit{A}}
\def \itC  {\mathit{C}}
\def \itF  {\mathit{F}}
\def \itG  {\mathit{G}}
\def \itH  {\mathit{H}}
\def \itL  {\mathit{L}}
\def \itO  {\mathit{O}}
\def \itP  {\mathit{P}}
\def \itR  {\mathit{R}}
\def \itS  {\mathit{S}}
\def \itX  {\mathit{X}}
\def \itY  {\mathit{Y}}
\def \itSr  {\mathit{S_r}}
\def \itSl  {\mathit{S_l}}
\def \Real {\mathbb{R}}
\def \Nat {\mathbb{N}}
\def \Zed {\mathbb{Z}}
\def \TF {\mathbb{B}}
\def \ddx  {\delta_{\dot{x}}}
\def \ddz  {\delta_{\dot{z}}}
\def \dz   {\delta_z}
\def \dtp {\dot{\:\!\phi}}
\def \simp {B}
\def \Cl {\mathrm{Cl}\,}
\def \St {\mathrm{St}\,}
\def \nrhd {\not\!\rhd}
\def \CP {\mathrm{CP}}
\def \FA {\mathrm{FA}}
\def \FFA {\mathrm{FFA}}
\def \IV {\mathrm{IV}}
\def \PIV {\mathrm{PIV}}
\def \PRED {\mathrm{PRED}}
\def \TD {\mathrm{TD}}
\def \NTD {\mathrm{NTD}}
\def \LO {\mathrm{LO}}

\begin{abstract}

Rigid bodies, plastic impact, persistent contact, Coulomb friction, and massless limbs are ubiquitous simplifications 
introduced to reduce the complexity of mechanics models despite the obvious physical inaccuracies that each incurs individually. 
In concert, it is well known that the interaction of such idealized approximations can lead to conflicting and even paradoxical results. 
As robotics modeling moves from the consideration of isolated behaviors to the analysis of tasks requiring their composition, 
a mathematically tractable framework for building models that combine these simple approximations yet achieve reliable results is overdue.   
In this paper we present a formal  hybrid dynamical system model 
that introduces suitably restricted compositions of these familiar abstractions 
with the guarantee of consistency analogous to global existence 
and uniqueness in classical dynamical systems.
The hybrid system developed here provides a discontinuous but self-consistent approximation to the continuous (though possibly very stiff and fast) dynamics of a physical robot undergoing intermittent impacts.
The modeling choices sacrifice some quantitative numerical efficiencies while maintaining qualitatively correct and analytically
tractable results with consistency guarantees promoting their use in formal reasoning about mechanism,
feedback control, and behavior design in robots that make and break contact with their environment.

\end{abstract}

\section{Introduction}\label{sec:intro}

Simple models of complex robot--world interactions are key to understanding, implementing and generalizing 
behaviors as well as identifying and composing their reusable 
constituents to generate new behaviors \cite[]{paper:Full-JEB-1999}. There is strong appeal to using familiar physical simplifications such
as rigid bodies, plastic impacts, persistent contact, Coulomb friction, and massless limbs in building up simple robotics 
models. Their coarse approximation to the underlying physical processes of interest are widely understood to offer 
the right combination of analytical tractability and physical realism in isolation. However, it is also widely understood 
that such individually useful simplifications can introduce catastrophic side-effects when  
combined (e.g.\ in \cite{painleve1895leccons,keller1986impact,mason1988inconsistency,dupont1994jamming,trinkle1997dynamic,chatterjee1999realism}  
and others, as discussed in Section~\ref{sec:lit}).

In this paper we assemble a framework of reasonable physical assumptions and 
accompanying mechanics to develop a formalism for combining them
at will in the construction of a simple hybrid system model for contact robotics 
that yields a provably consistent\footnote{
Here, \emph{consistent} refers to a combination of properties detailed in Section~\ref{sec:consistency}
analogous to the guarantee of global existence and uniqueness of solutions for a 
classical dynamical system.}
and empirically useful approximation to many behavioral settings of interest. 
As an example of the value of such mathematical  models, new work \cite[]{Brill_De_Johnson_Koditschek_2015} 
uses the formal properties of our self-manipulation model to develop rigorous correctness (or, non-existence) 
proofs for desirable robot behaviors -- in that case, gap crossing and ledge mounting. However, while
the primary goal of this paper is not numerical analysis, simulation does 
provide a useful way to visualize key features of the model and the utility of some 
of the simplifying assumptions. Numerical results obtained through a custom
Mathematica
simulation \changed{(described in Section~\ref{sec:dis:numerical})} are used throughout the paper to illustrate key concepts, and to suggest the fidelity to physical
settings of interest.

For example, our model generates simulations\footnote{For this simulation the
middle and rear legs are used with a maximum current limit of $20$A, a \emph{pseudo-impulse} (defined in Section~\ref{sec:pseudoimpulse}) magnitude
of $\delta_t=0.03$ (hand selected to give the qualitatively best overall results), relative
leg timing of $t_2 = 0.01$ (i.e., the middle legs are started $0.01$s before the rear legs), 
and once a leg has lifted off the ground it is slowly rotated upwards out of the way.
Remaining model parameters are as listed in \cite[Sec.~III, Appendix~G]{johnson_selfmanip_2013}.} of the leaping behavior depicted in 
Figure~\ref{fig:leapsim} that recreate the empirical results of \cite{paper:johnson-icra-2013} 
qualitatively (i.e., predicts the same salient features though not necessarily the same metric results),
yet enjoys a combination of mathematical properties that we believe will provide a foundation for reasoning about 
and thereby generalizing the platform design and control strategies that gave rise to such behaviors. 
Of course, physical fidelity is not mathematically demonstrable 
and the relevance of the modeling choices we propose (i.e., the empirical sway of this formally self-consistent 
model) can only be established over the long run in practice by the breadth of physical phenomena they usefully 
approximate, regardless of the simplification and ease of analysis they~afford.

\begin{figure}
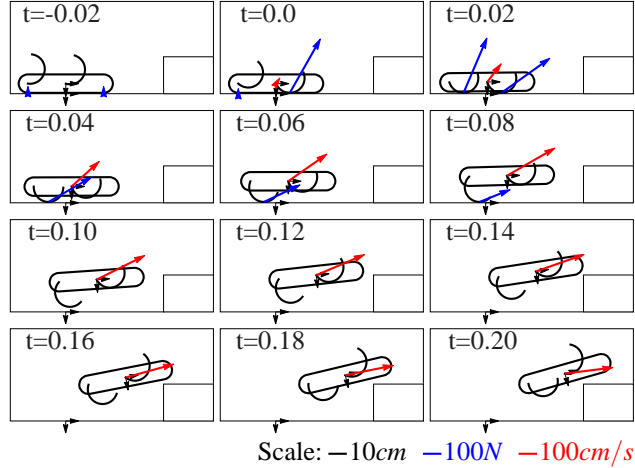

\centering
\def\svgwidth{8.3cm}
\include{leapframes}
  \vspace{-8pt}
\caption{Keyframes from RHex simulation leaping onto a 20cm ledge. Blue arrows show contact forces while
the red arrow shows body velocity. 
}
\label{fig:leapsim}
\end{figure}

The paper is structured as follows. This section finishes with a summary of contributions, 
followed by a discussion of their relation to prior work. Section~\ref{sec:imp} introduces the various simplifying physical
modeling assumptions and draws out some of the mathematical consequences bearing on their relationships 
to alternative formulations and to each other. Section~\ref{sec:hyb} assembles from these pieces a formal 
hybrid dynamical system model and proves its consistency. Section~\ref{sec:discussion} reviews the scope 
of physical settings admitted by our assumptions and discusses the most delicate aspects of their interplay 
with our formal results, providing additional examples that help give a broader context for the applicability 
of the theory. Section~\ref{sec:Conclusion} concludes with some final thoughts on the
implications of this work and future directions. An extensive Appendix works through the
details of selected proofs and provides additional background material. 

\subsection{Contributions of the Paper}
\label{sec:cont}

This paper extends a framework for manipulation \cite[]{book:mls-1994} and self-manipulation \cite[]{johnson_selfmanip_2013} 
modeling into a formal hybrid dynamical systems specification whose discrete 
modes are indexed by the active contact constraint set in a manner guaranteed to produce 
a unique execution from every initial condition under mild conditions on the motor feedback control laws. 
The foundation on which we rest this physically simple and mathematically tractable modeling framework 
arises from Assumptions~\ref{ass:rigid}--\ref{ass:friction}, introduced in Section~\ref{sec:imp} and discussed
further in Section~\ref{sec:dis:ass}, comprising various familiar phenomenological 
representations and physically natural hypotheses, including: rigid bodies (\ref{ass:rigid}), 
massless limbs (\ref{ass:contact}), plastic impact (\ref{ass:plastic}), and static friction (\ref{ass:friction}). 
It is known that in general these properties are not mutually consistent, however we
formally demonstrate that the particular set of assumptions included here provides a well defined, 
deterministic, and computationally well-behaved model. 
The physical fidelity may, in some important applications contexts that we point out, necessarily remain something of a 
leap of faith relative to the still incomplete state of \emph{the theory of} rigid body 
mechanics. To the best of our knowledge this is the first time any succinctly stated list of physical 
assumptions about rigid body mechanics has been shown to yield a consistent hybrid dynamical system with 
unique and globally defined executions.

Our central technical contribution is the derivation of a consistent  extension of Lagrangian dynamics, Newtonian
impact laws, and complementarity contact conditions to systems that have certain rank deficiencies in their inertia
tensor that agrees with (i.e., when rank is restored, maintains equivalence to) the nonsingular case 
(Lemmas~\ref{thm:mlequiv}, \ref{thm:dynamics}, \&~\ref{thm:impulse} and Theorems~\ref{thm:fac} \&~\ref{thm:ivc}).
The possibly massless dynamics motivate a reformulation of complementarity as a logical equivalence 
(Lemma~\ref{thm:comp}) so that its unique solvability (for both force--acceleration and impulse--velocity 
complementarity problems, Assumptions~\ref{ass:fac} \& \ref{ass:ivc}, respectively) 
\changed{is shown} to imply a unique partition of the \emph{guard set}  (i.e., those states which are to
undergo a mode transition) into disjoint components labeled deterministically 
by the destination mode of the transition (Theorem~\ref{thm:disjoint}).
These conditions are expressed in terms of a higher order scalar relation 
($\prec$, Definition~\ref{def:through}), and we exhibit certain properties of this relation that 
clarify its role in determining the guard set (Lemmas~\ref{lem:closure}--\ref{lem:trend}).

Even without the introduction of massless limbs there exist
many opportunities for repeated (and even \emph{Zeno}\footnote{An 
execution of a hybrid dynamical system exhibits the Zeno phenomena if it undergoes an infinite 
number of discrete or logical switches in finite time (Definition~\ref{def:zeno}).}) discrete transitions that seem unlikely to add much 
physical insight (and, speaking practically, generally degrade the numerical performance of simulations 
based upon this model). Hence, to resolve the qualitative problem of spurious transitions at arbitrarily
low velocities (Lemma~\ref{lem:pseudoimp}), we introduce a new \emph{pseudo-impulse}, 
which acts on the discrete transitional logic (rather than the continuous dynamics), imposing an implicit bound 
on contact velocity below which such contacts persist (Theorem~\ref{thm:pseudoimp}), precluding certain 
Zeno phenomena (Theorem~\ref{thm:zeno}). 

As a structure to combine these physical models and assumptions, this work presents the 
formal definition of the \emph{self-manipulation hybrid system} in Definition~\ref{def:smhs} (along with 
Definitions~\ref{def:hs}--\ref{def:ex}), and the formal demonstration of its 
consistency (including that it is deterministic and 
non-blocking, Theorems~\ref{thm:agree}--\ref{thm:uniqueex} and Lemmas~\ref{lem:lip} \&~\ref{lem:lip:ml}),
incorporating a well-behaved notion of \emph{completion} in case of a Zeno execution
(Definition~\ref{def:zeno}, Theorem~\ref{thm:zenolim}, Corollary~\ref{cor:zenolim}) by adapting to this more elementary setting the 
measure theoretic arguments of \cite{Ballard2000}. 

\changed{
\subsection{Reader's Guide}

We anticipate that different readers will approach this paper with diverse goals and backgrounds.
In this section we suggest sections that may have the greatest relevance with 
respect to specific interests. All readers are
encouraged to start with the setup and notation explained in Section~\ref{sec:not} and summarized
in Tables~\ref{tab:Symbols} and~\ref{tab:Symbols2}.
\begin{itemize}
\item For readers interested in instantiating a hybrid dynamical system model of a specific robot, 
the most relevant section is Section~\ref{sec:smsystem}, which is based
on the assumptions and derivations in Section~\ref{sec:imp} and 
definitions in Section~\ref{sec:hs}. The continuous dynamics in any particular contact
mode is based on \cite{book:mls-1994} for manipulation and \cite{johnson_selfmanip_2013} 
for self-manipulation systems (whose differences are summarized in Section~\ref{sec:manip}).
See also the discussion on numerical simulation of this system in Section~\ref{sec:dis:numerical}.
\item Some readers may be interested in the relationship of this model to the still 
evolving theory of rigid body mechanics. They will likely wish to focus on our treatment of:
massless bodies and singular inertia tensors (Section~\ref{sec:astar}, with implications
throughout Section~\ref{sec:imp} and in Section~\ref{sec:dis:ml}), Lagrangian dynamics (Section~\ref{sec:contdyn}), 
impulsive dynamics (Sections~\ref{sec:impactmap}, \ref{sec:pseudoimpulse}, and~\ref{sec:dis:pseudo}), 
complementarity systems (Section~\ref{sec:complementarity}), friction (Section~\ref{sec:friction}),
and accumulation points (Section~\ref{sec:zeno}).
See also the discussion in Section~\ref{sec:dis:ass} on these mechanics assumptions.
\item Those most interested in the properties of this model considered as a mathematical 
object will likely wish to focus on the hybrid dynamical system itself which is 
presented in Sections~\ref{sec:hs} and~\ref{sec:smsystem},
with various properties related to existence and uniqueness of executions proven in Sections~\ref{sec:correct} and~\ref{sec:consistency}. Zeno considerations
are presented in Section~\ref{sec:zeno}, based in part on the pseudo-impulse presented 
in Section~\ref{sec:pseudoimpulse}, and discussed further in Section~\ref{sec:zenodisc}.
\item Readers interested in the particular example of the RHex robot used in, e.g., Figure~\ref{fig:leapsim}
can find details on the model (lengths, masses, etc) in \cite{johnson_selfmanip_2013}
and details on the behaviors this paper aims to model in \cite{paper:johnson-icra-2013}.
Of particular interest for modeling similar robots may be the treatment of massless
legs (Sections~\ref{sec:astar} and~\ref{sec:dis:ml}), the pseudo-impulse (Section~\ref{sec:pseudoimpulse},
in particular the example in Section~\ref{sec:pseudoimpex}), and the full self-manipulation
hybrid system model (Section~\ref{sec:smsystem}).
\end{itemize}
}
\subsection{Relation to Prior Literature}
\label{sec:lit}

This paper aims to promote simplified physics based models of robotic systems for purposes of 
analysis. Doing so entails integrating results and ideas that have developed somewhat independently 
across several different longstanding technical fields. For surveys of some of these ideas (with a 
focus on numerical considerations), see e.g.\ \cite{brogliato2002numerical,gilardi2002literature}.

\subsubsection[Numerical Simulation Methods]{Numerical Simulation Methods}\label{sec:sim}

While this paper is focused on a model for analysis and not simulation, it is informative to consider
how other simulation strategies compare.
The model developed here generates trajectories from the flow of hybrid dynamical systems defined by 
differential-algebraic equations (DAEs) 
between discrete transitions and so, in the language of \cite[Sec.~6.3]{brogliato2002numerical}, 
simulations of these trajectories could be obtained%
\footnote{E.g.\ using the algorithm proposed in~\cite{burden2013metrization} for hybrid control systems.}
via an \emph{event-driven} scheme, as opposed to a 
\emph{penalized-constraint}/\emph{continuous-contact} scheme, or a \emph{time-stepping} scheme.

Event-driven schemes have a long history, e.g.\ \cite[]{wehage1982dynamic}, \cite[]{pfeiffer2008multibody},
\cite[Sec.~6.7]{brogliato2002numerical}, and include the hybrid dynamical systems formulations outlined 
in the next section. Typically they entail alternating between integration of smooth dynamics involving
(usually) finite forces from contacts and the discontinuous handling of constraint addition or deletion (the ``events''). 
Here, we extend these methods and codify the event-driven scheme
in terms of a formal hybrid dynamical system. 
In contrast, some event-driven schemes formulate the contact dynamics as always consisting of impulses, e.g.~\cite{mirtich1995impulse}.
These impulse-based simulations combine both smooth and discontinuous contact interactions
into impulses, with a continuous-time ballistic trajectory in between events.

Time-stepping schemes, which also account for contact interactions only using impulses by integrating
applied forces over small time steps, are numerically efficient especially for systems with large numbers
of constraints, see e.g.\ \cite[]{stewart1996implicit} \cite[]{anitescu1997formulating}, or \cite[Sec.~7]{brogliato2002numerical}. 
These models can be relaxed (by allowing contact forces and impulses to arise even before contact occurs)
to enable efficient numerical simulation and motion synthesis \cite[]{drumwright2011modeling,todorov2012mujoco}.
These methods allow contact constraints to be added or removed at any 
time step, but only once per time step. Furthermore, no distinction between continuous contact
forces and discontinuous impulses is made.
In this way these methods relax the requirements of the \emph{Principle of Constraints}, i.e.\ that, ``Constraints shall be maintained
by forces, so long as this is possible; otherwise, and only otherwise, by impulses'' \cite[p.~79]{kilmister1966} 
\citep[as noted e.g.\ in][Section 1]{stewart1996implicit}.
Their advantage in avoiding many of the well explored physical 
paradoxes of rigid body mechanics (including Zeno phenomena \cite[]{drumwright2010avoiding} as well as apparent contradictions between 
frictional forces and impulses discussed in Section~\ref{sec:intfric}) seems to come at the cost of persistence 
of contact. In contrast, here, persistence is one of the key simplifying modeling assumptions, expressing our intuitive 
experience of limbs interacting with the world, enabling some of the other assumptions, and affording our strong formal results.
Despite being targeted at a different numerical integration scheme, many of the results in this paper, such
as the consistent handling of massless limbs, are potentially applicable to time-stepping schemes. 

\subsubsection{Hybrid Dynamical Systems}

This paper models manipulation and self-manipulation systems using a hybrid systems paradigm
that assumes instantaneous transitions.
Though we develop our (so-called) \emph{self-manipulation hybrid dynamical system} for a similar class of mechanical systems as that considered in~\cite[Ex.~3.3]{van1998complementarity}, we specialize from the more general class of \emph{hybrid automata} considered in~\cite[Def.~II.1]{LygerosJohansson2003} to facilitate connections with the broader hybrid systems literature.
Our self-manipulation system is closely related to the \emph{$n$-dimensional hybrid system} of \cite[Def.~2.1]{simic2005towards},
the \emph{simple hybrid system} of \cite[Def.~1]{OrAmes2011}, 
and \emph{hybrid dynamical system} of \cite[Def.~1]{BurdenRevzen2013} as we require:
(i) multiple disjoint domains of varying dimension, disallowed by \cite{simic2005towards, OrAmes2011}; 
(ii) guards with arbitrary codimension, disallowed by \cite{BurdenRevzen2013}; and we desire
(iii) more analytical and geometric structure than is provided by the general framework in \cite{LygerosJohansson2003}, specifically domains that are differentiable manifolds and guards that are sub-analytic. 
Note that (i) is precluded in \cite{simic2005towards, OrAmes2011} only for notational expediency since any multitude of domains may be embedded as disjoint submanifolds of a high-dimensional Euclidean space.
The condition (ii) is excluded by \cite{BurdenRevzen2013} since it is generally incompatible with the results contained therein.

One property of hybrid systems that is crucial to establish for the present setting
is that the guards are disjoint, i.e.\ no state is a member of two distinct guards, so there is no ambiguity as to which
reset map to apply. 
This key property yields the proof that the model is 
\emph{deterministic} \cite[Def.~III.2]{LygerosJohansson2003}.
Furthermore the system is set up such that every point on the boundary of the domain where the flow points
outward is a member of a guard, thus guaranteeing that the system is \emph{non-blocking} \cite[Def.~III.1]{LygerosJohansson2003}, 
i.e.\ the execution continues for infinite time.

The self-manipulation hybrid system developed in this paper uses the active contact constraints to define the discrete 
state or status (that we call the \emph{mode}), as in e.g.\ \cite{hurmuzlu1994rigid,brogliato2002numerical}.
However even when starting with a simple Lagrangian hybrid system without modes for every contact condition it appears
to be useful to add such states to allow executions to be completed beyond a so-called ``Zeno 
equilibrium'' \cite[]{ames_acc_2006,OrAmes2011}. 
Furthermore, the pseudo-impulse we introduce avoids certain Zeno executions 
by allowing the system to remain in a constrained mode after finitely many transitions, in a manner analogous 
to but formally distinct from the \emph{truncation} proposed in \cite{ames_acc_2006,OrAmes2011}.

\subsubsection{Consistent Complementarity}
\label{sec:lit:comp}

Any formulation that allows for persistent contact through impact must determine which contacts
to make active and which to remove\footnote{The removal ends up being the harder question, 
as ``there is no problem in deciding when and which 
constraint to add to the active set since there is a constraint function to base the decision on. 
The problem of dropping constraints is more delicate...'' \cite[p.~283]{lotstedt1982mechanical}.}.
When there is no impulse (i.e., no constraint to add, but one or more constraints have violated the
unilateral constraint cone), the removal process is called \emph{force--acceleration complementarity}, 
as it is commonly modeled by a complementarity problem involving contact force and 
separating acceleration, e.g.\ \cite[Eqn.~12]{trinkle1997dynamic}, \cite[Eqn.~10]{brogliato2002numerical},
where in the simplest case of a single contact point with zero or negative contact force it is simply removed.
This complementarity problem framework can introduce paradoxical consequences in certain physical problem 
settings, for example in taking the rigid limit of a deformable body \cite[]{chatterjee1999realism}. 
It can also be computationally efficient to relax the hard constraints of the complementarity conditions,
resulting in a convex optimization problem \cite[]{todorov2011convex,drumwright2011modeling}.

An impulse induced from one or more contact constraints becoming active generally necessitates the 
removal of other constraints, specifically, those that require a negative impulse to remain. 
When invoked as a modeling principle, this \emph{impulse--velocity complementarity} precludes 
a simultaneous impulse and separation velocity at a particular contact,
e.g.\ \cite[Eqn. 2.10b]{lotstedt1982mechanical}, \cite[Eqn.~9]{brogliato2002numerical}.
Imposing this modeling discipline affords the well established benefit of yielding a unique post-collision 
state for collisions modelled as plastic frictionless 
impacts \cite[]{ingleton1966problem,cottle1968problem,van1998complementarity,heemels2000linear}. 
Unfortunately further generalizations can lead to inconsistencies and 
ambiguities \cite[]{chatterjee1999realism,hurmuzlu1994rigid,ivanov1995multiple,seghete2010variational}.
The existence and uniqueness of a solution must therefore be separately 
established in each physical circumstance that includes friction -- or merely be~assumed.

Massless legs introduce new problems into the complementarity problem.
The massless leg condition in general, as introduced in \cite[Assumption~C.6]{johnson_selfmanip_2013} and also
used in countless prior works, e.g.\ \cite[]{blickhan1989spring,kajita1992dynamic,Holmes_Full_Koditschek_Guckenheimer_2006}, allows 
for the neglect of certain states deemed inconsequential to the dynamics of interest when 
unconstrained (of course, the appropriateness of this neglect is task dependent 
rather than in any way intrinsic to the underlying physics, c.f.\ \cite[Sec.~IV.C.5]{johnson_selfmanip_2013} 
or \citep{Balasubramanian_2008_6145}).
Indeed a massless leg that is not touching the ground is unconstrained and its position 
can be taken as arbitrary (or regarded as evolving according to dynamics sufficiently 
decoupled as to be considered independent), as used
in the behavior analysis in \cite[Sec.~IV.C.3]{johnson_selfmanip_2013}. 
However the complementarity condition as used in
e.g.\ \cite[Eqn. 2.10b]{lotstedt1982mechanical}, \cite[Eqn.~9]{brogliato2002numerical}, and listed in~\eqref{eq:UjP2}--\eqref{eq:UkP2} 
is ill-posed in the absence of mass since there is no well-defined separation velocity or acceleration, nor anything
precluding all massless contact points from always separating  
(at least for the dynamic model of interest here, as opposed to a quasistatic model, \cite[]{trinkle1995quasistatic}).
Instead here we reformulate the complementarity condition as~\eqref{eq:UjP}--\eqref{eq:UkP} to not
depend on the separation velocity.

\subsubsection{Impact Mechanics}
\label{sec:lit:impact}

The usual Newtonian impact law (as in e.g.\ \cite[Eqn.~3]{chatterjee1998new}, \cite[Eqn.~11.65]{featherstone2008rigid}
and many others) can be thought of as a mass-orthogonal projection onto the constraint
manifold as used in e.g.\ augmented Lagrangian techniques \cite[Eqn.~25]{bayo1996augmented}.
More generally, \cite{moreau1985} showed that impact problems can be modeled using measure differential inclusions.
The algebraic plastic impact law involves inversion of the inertia tensor, which precludes the 
possibility of massless limbs and necessitates the reformulation given in this paper. 
Even if there are no truly massless links, a nearly massless body segment 
yields a poorly-conditioned inertia tensor \cite[Sec.~5.1.1]{thesis:johnson-2014}, leading to similar formulations 
as the one presented here in \cite[Eqn.~9]{Westervelt_Grizzle_Koditschek_2003}
or, for continuous time dynamics, 
\cite[Sec.~4.3]{Holmes_Full_Koditschek_Guckenheimer_2006} \cite[Eqn.~3.17]{featherstone2008rigid}. 

In this paper we restrict our attention to systems modeled as exhibiting only perfectly plastic impact 
(perfectly inelastic impact). In the elastic impact case, it is necessary to consider the relative 
stiffness of contact points; depending on the restitution law invoked, multiple outcomes are consistent 
with the constitutive assumptions \cite[]{hurmuzlu1994rigid, chatterjee1998new}. Though it is possible to bypass 
this technical obstacle by introducing an additional constitutive hypothesis, e.g.\ \cite[H3~in Sec.~3.3]{Ballard2000}, 
it remains to be validated (either theoretically or experimentally) that such assumptions accurately 
represent the physical system's behavior. Plastic impact avoids these
inconsistencies, but more importantly we claim plastic impact provides a more
useful model of the robotic systems of interest. Elastic impact is clearly needed in some
robotics applications such as juggling \cite[]{BuehlerKoditschek1994,schaal1993open}, tapping \cite[]{huang1998experiments} 
or ping-pong \cite[]{Andersson_1989}, but plastic impact, where there is no restitution and therefore no separation velocity after impact,
is a more desirable model for most forms of locomotion (when it is important to keep feet on the 
ground) \cite[]{Westervelt_Grizzle_Koditschek_2003,chatterjee2002persistent} and manipulation
(when it is important to keep fingers on the object) \cite[]{chatterjee2002persistent,wang1987modeling}. 

The new pseudo-impulse presented here, in addition to the Zeno results mentioned above, eliminates other evidently unwanted
transitions by allowing the continuous-time forces to 
play a role in the impact process which is primarily ``logical'' (as opposed to energetic). This role
may be best summarized by comparison to the most common alternatives. For example, instead of introducing a variable
coefficient of restitution \cite[]{quinn2005finite} (which our plastic impacts of interest already eliminate),
the pseudo-impulse is not applied to the continuous (energetic) system directly but instead used to 
regularize the complementarity driven hybrid switching logic. Or as a second point of comparison, rather than introducing a 
fixed dead zone in impact energy \cite[]{pagilla2001stable} or velocity \cite[Sec.~6.4]{brogliato2002numerical}, the magnitude of the effect on our model's 
hybrid logic is not fixed but rather scales with the continuous time forces.
An effect similar to this pseudo-impulse condition
is also introduced by time-stepping
simulations \cite[]{stewart1996implicit,anitescu1997formulating}, 
which, true to their name, always consider forces over small but finite time-steps. Under such schemes 
the magnitude of this effect is not a fixed, independent, user-imposed parameter since
it must remain proportional to the duration of each
time-step. Our preference for the independent, fixed choice reflects both mathematical convenience 
(the clearly defined hybrid dynamical system with its formal properties) as well as our taste in 
preferring to work with robotics models targeted for specific physical environments and settings.

\subsubsection{The Effect of Friction Models}
\label{sec:intfric}

While this paper focuses on the impact problem, which friction greatly 
complicates \cite[]{keller1986impact,wang1987modeling,McGeer_Wobbling_1989,wang1994simulation,trinkle1997dynamic}, 
even simulating continuous-time dynamics of rigid bodies with friction can be difficult (formally $NP$-hard \cite[]{baraff1991coping})
due to the possibility of ``jamming'' events \cite[]{mason1988inconsistency,dupont1994jamming},
first attributed to \cite{painleve1895leccons}. In this paper, following the model from \cite{johnson_selfmanip_2013},
strong assumptions about frictional contact avoid these issues and enable integration of the dynamics as a differential-algebraic equation (DAE).
As noted above, an alternative method to numerically solving these problems is
the time-stepping approaches pursued in, e.g., \cite{stewart1996implicit,anitescu1997formulating},
which resolve these issues by allowing for impulses at any time step.
To resolve these issues in more general extensions of the system presented here
(in particular those that are not well modeled by the frictional assumption, \ref{ass:friction}),
the hybrid dynamical system could similarly be extended by allowing impulses at times without collisions,
with such jamming events considered with additional guards and reset maps.
We refer the interested reader to ``Is Painlev\'{e} a real obstacle?'' \cite[Sec.~8.1]{brogliato2002numerical} for further discussion of these issues.

\begin{table}[t]
  \begin{center}
    \begin{tabular}{ l l}
      \toprule
      \midrule
      $\bfa:\mathcal{Q} \rightarrow \mathcal{C}$ & Base constraint function (\ref{sec:not})\\
      $\bfA:T\mathcal{Q} \rightarrow T\mathcal{C}$ & Velocity constraint function (\ref{sec:not})\\
      $\bfAd:T^*\mathcal{Q} \rightarrow T^*\mathcal{C}$ & Force constraint function (\ref{eq:astardef})\\
      $C^r\!, r \in \Nat\cup\set{\infty,\omega}$ &  $C^r$ differentiable function (\ref{sec:not})\\
      $\CP_\PRED:T\calQ  \rightarrow 2^\calK$ & Solution to the $\PRED$ predicate (\ref{eq:CP})\\
      $\Cbar: T\calQ^2 \rightarrow T^*\calQ$ & Coriolis forces (\ref{eq:dyn})\\
      $\FA:2^\calK \times T\calQ \rightarrow \TF$ &  Force--acceleration predicate (\ref{eq:CPFA})\\
      $i,j,k \in \mathcal{K}$ & Contact constraints (\ref{sec:not})\\     
      $I,J,K \subseteq \mathcal{K}$ & Set of active contact constraints (\ref{sec:not})\\
      $\calI \subseteq \mathcal{K}$ & Complementarity scope (\ref{eq:IScope})\\
      $\Id, \Id_\bfq$ & Identity matrix, of dimension $|\calQ|$ (\ref{sec:not})\\
      $\IV:2^\calK \times T\calQ \rightarrow \TF$ &  Impulse--velocity predicate (\ref{eq:CPIV})\\
      $\mathcal{K}:=\KN \cup \KT \subset \mathbb{N}$& Set of all contact constraints (\ref{sec:not})\\
      $\Mbar : T^2 \mathcal{Q} \rightarrow T^*\mathcal{Q}$ & Inertia tensor (\ref{sec:astar})\\
      $\Md : T^*\mathcal{Q} \rightarrow T^2\mathcal{Q}$ & Constrained inverse inertia tensor (\ref{eq:astardef})\\
      $\Nbar: \calQ \rightarrow T^*\calQ$ & Potential forces (e.g.\ gravity) (\ref{eq:dyn})\\
      $\NTD:T\calQ \rightarrow \TF $ & New touchdown predicate (\ref{eq:NTD})\\
      $\qimp \in T^*\mathcal{Q}$ & Impulse in state space (\ref{sec:impactmap})\\
      $\limp, \dimp \in T^*\mathcal{C}$ & Impulses in constraint space (\ref{eq:impulse}), (\ref{eq:pseudoimp})\\
      $\PIV:2^\calK \times T\calQ \rightarrow \TF$ &  Pseudo-impulse $\IV$ predicate (\ref{eq:PIV})\\
      $\bfq \in \mathcal{Q}:=\Theta \times SE(\mathrm{d})$ & Continuous state (\ref{sec:not})\\
      $\biq:=(\bfq,\dot{\bfq}) \in T\mathcal{Q} $ & Continuous state and velocity (\ref{sec:not})\\
      $\TD: \KN \times T\calQ \rightarrow \TF$ & Touchdown predicate (\ref{eq:TDsimp})\\
      $\bfU : T^*\mathcal{C} \rightarrow \Real^{|\calC|}$ & Unilateral constraint cone (\ref{sec:not})\\
      $\alpha:\calK \into \KN$ & Corresponding normal constraint \eqref{eq:alpha}\\
      $\delta_t \in \Real^+$ & Small time duration of impact (\ref{eq:pseudoimp})\\
      $\Delta\dot{\bfq} \in T\mathcal{Q}$ & Instantaneous change in velocity (\ref{sec:impactmap})\\
      $\lambda \in T^*\mathcal{C}$ & Lagrange multipliers \eqref{eq:ldyn}\\
      $\Lambda:T^2\mathcal{C} \rightarrow T^*\mathcal{C}$ & Constrained contact inertia tensor (\ref{eq:astardef}) \\
      $\Upsilon \in T^*\calQ$ & External forces and torques (\ref{eq:dyn})\\
      $\prec, \succ, \preceq, \succeq, \equiv$ & Trending negative/positive (Def.~\ref{def:through})\\
      \bottomrule
    \end{tabular}
     \caption{Key symbols used throughout this paper, with section or equation number of introduction marked. See also Table~\ref{tab:Symbols2} for symbols introduced in Section~\ref{sec:hyb}.
     }
     \label{tab:Symbols}
  \end{center}
\end{table}

\section{Modeling Assumptions}
\label{sec:imp}

The continuous Lagrangian dynamics of self-manipulation is specified in \cite{johnson_selfmanip_2013} 
using the notation and terminology of \cite{book:mls-1994} and summarized in Section~\ref{sec:not}.
We continue to work within that framework here and briefly list the subtle differences between these
two classes of systems in Section~\ref{sec:manip}. 
However the impulsive dynamics (instantaneous changes in velocity when a new contact is added)
were not specified in either, and so \changed{we introduce} a plastic impact model in Section~\ref{sec:impactmap} 
and explore the induced complementarity conditions in Section~\ref{sec:complementarity}. 
In addition, \changed{we} make explicit how the massless leg (Section~\ref{sec:astar}) and frictional assumptions (Section~\ref{sec:friction})
made in \cite{johnson_selfmanip_2013} affect both the continuous time (Section~\ref{sec:contdyn}) and impulsive dynamics,
leading to a new formulation for the dynamics that is equivalent to the usual formulation when there are no massless links.
Finally, Section~\ref{sec:pseudoimpulse} introduces a new pseudo-impulse that eliminates certain Zeno executions and related chattering behavior.

\subsection{Setup and Notation}
\label{sec:not}

The notation used in this paper is chosen to be consistent with \cite[Table~I]{johnson_selfmanip_2013} 
(and agreeing where possible with \cite{book:mls-1994})
or is defined as it is used and summarized in Table~\ref{tab:Symbols}. 
The base 
component of the state is denoted, $\bfq \in \calQ$,
while the full \emph{state} is, $\biq := ( \bfq,\dot{\bfq} )$, and
this state completely describes the motion of interest, as, 
\begin{assumption}[Rigid Bodies]
\label{ass:rigid}
The robot is made up of a finite number of rigid bodies whose configuration
lies in a connected complete $C^\omega$ Riemannian manifold $\e{Q}$.
\end{assumption}

Since the configuration spaces of many extant robots are not linear (e.g., due to rotary joints, rigid body rotations, or constrained mechanisms), it is most natural to invoke the general framework of differentiable manifolds to model the state space.
For concreteness we consider the case where $\calQ:= \Theta \times SE(\mathrm{d})$
consists of joint angles and the special Euclidean group of dimension $\mathrm{d}$, 
but our formal results are stated for an arbitrary connected complete $C^\omega$ Riemannian manifold $\calQ$. 
We recognize that this generality necessitates mathematical formalisms and notation that are not uniformly adopted in the robotics community (exceptions such as~\cite{book:mls-1994} notwithstanding);
we aim whenever possible to translate unfamiliar objects into standard terminology and provide a terse overview of the background material needed to parse the more general case in Appendix~\ref{app:dg}.

We are concerned with sets of contact constraints (e.g., $I,J,K\subset\e{K}$) that we shall call
\emph{modes} or \emph{contact modes} hereinafter, subsets of indices whose particular elements
(e.g., $i, j, k\in \e{K}$) index the contact constraints that prevail at some 
instant \cite[Sec.~II.C]{johnson_selfmanip_2013} \cite[Sec.~5.2.1]{book:mls-1994}.
In addition to contact with the robot's environment, contact constraints may include 
cases of self-contact as well as joint limits.
The universe of all possible constraint indices from which these subsets are taken is denoted $\calK=\KN\cup\KT$,
partitioned by those that are in the \emph{normal} (non-penetrating) direction and
those that are in \emph{tangential} (non-sliding) direction. Similarly, for any set 
of constraints specified by mode $I$, define the subsets $I_n := I \cap \KN$ and $I_t := I \cap \KT$, where
clearly $I = I_n \cup I_t$ and $I_n \cap I_t = \varnothing$.

Contact constraints in the normal direction\footnote{Note that normal direction constraints
for non-adhesive contact is unilateral, although within a contact mode they can be considered bilateral until
the constraint force is violated \citep[e.g.][Sec.~4]{lotstedt1982mechanical}.}, $i\in\KN$, specify a 
holonomic constraint of the form $\{(\bfq,\dot{\bfq}) \in T\e{Q} :$ $\bfa_i(\bfq) = 0\}$ 
for $\bfa_i:\calQ\rightarrow \Real$ 
(and whose corresponding velocity constraint $\bfA_i:T\calQ\rightarrow \Real$ is
equivalent to the Jacobian $D\bfa_i$, \cite[Eqn.~11]{johnson_selfmanip_2013}),
while those in the tangential direction, $i\in\KT$, specify a 
nonholonomic constraint of the form $\set{(\bfq,\dot{\bfq}) \in T\e{Q} : \bfA_i(\bfq)\dot{\bfq} = 0}$ where again $\bfA_i:T\calQ\rightarrow \Real$.
For a given contact mode $I$, the space of constrained positions is a manifold $\calC_I$ of dimension $|I|$ 
(i.e., the number of constraints in $I$).
 
In the interest of notational clarity, we generally express functional dependence on contact modes via subscript, 
e.g., $X_I(\bfq,...) := X(I,\bfq,...)$, and when it is clear from context, we further
suppress the subscript, e.g.\ $X(\bfq,...)$. 
For example, and used extensively throughout this paper,
fixing an ordering on $\e{K}$ we obtain the velocity constraints active in mode $I$, $\bfA_I:T\calQ \rightarrow T\calC_I$, as a selection of rows from the set of all velocity constraints $\bfA_\calK$,~i.e.,\footnote{However,
note that most functions of the mode are not a simple
projections, and so e.g.\ $\bfAd_I$, defined in~\eqref{eq:astardef}, $\bfAd_I \neq \pi_I \bfAd_\calK$,
but rather $\bfAd_I$ is as defined in~\eqref{eq:astardef}, i.e.\ constructed with the corresponding~$\bfA_I$.}
\begin{align}
\bfA_I(\bfq) := \bfA(I,\bfq) = \pi_I \bfA_\calK(\bfq),
\end{align}
where $\pi_I$ is the Boolean projection matrix formed by the rows of canonical unit vectors associated with the elements in the index set $I$. 
Similarly for a single constraint $i$, 
$\bfA_i := \pi_i \bfA_\calK = \bfA_{\{i\}}$.

We make the following assumption on the combined maps,
\begin{assumption}[Simple Constraints]
\label{ass:rank}
All constraints are independent, that is for all contact modes $I$, the maps $\bfa_{I_n}:\calQ\rightarrow\calC_{I_n}$ and 
$\bfA_{I}:T\calQ \rightarrow T\calC_{I}$ are constant rank.
\end{assumption}
We refer the reader to Appendix~\ref{app:dg} for the definition of rank of a $C^r$ map; in coordinates, this condition states that the gradient vectors of each coordinate of the respective maps are linearly independent at every point.
If this condition failed to hold, the configuration space could possess singularities that could preclude existence and/or uniqueness of trajectories for the mechanical system.
Note that this precludes the possibility of redundant constraints, though there are methods
of resolving such redundancies, e.g.\ in \cite{greenfield2005solving}.
In particular, this requirement is met if $\bfa_{\KN}\in C^\omega(\e{Q},\Real^{\abs{\KN}})$ and $\bfA_{\e{K}}\in C^\omega(T\e{Q},\Real^{\abs{\e{K}}})$ 
are constant rank\footnote{This stronger assumption would not be true if there were two parallel constraints that, due to geometry, 
could not simultaneously be active, in which case the original requirement must be checked for all $I$.}.

We note that there is an assignment,
\begin{align}
\alpha:\calK\into\KN, \qquad \alpha|_{\KN} = \Id,
\label{eq:alpha}
\end{align}
of contacts to normal contacts such that $\alpha|_{\KT}$ maps tangential contacts to the corresponding normal contact
(where $\Id$ is the appropriate identity matrix).
Note that
for each $k\in\KT$ and $j=\alpha(k)$, $\bfA_k$ is orthogonal to $\bfA_{j}$.

It is well established that 
the motion of mutually constrained rigid bodies can be effectively modeled using polynomial 
maps \cite[]{Wampler_Sommese_2013}, hence imposing contact constraints arising from their 
interaction with the piecewise polynomial representations of the environment (commonly 
adopted by the sensory community \cite[]{Lalonde_Vandapel_Hebert_2007}) leads to,
\begin{assumption}[Analytic Constraints]
\label{ass:analytic}
All constraints are analytic functions, that is for all 
contact modes $I$, the maps $\bfa_{I_n}:\calQ\rightarrow\calC_{I_n}$ and 
$\bfA_{I}:T\calQ \rightarrow T\calC_{I}$ are $C^\omega$.
\end{assumption}

Given an analytic vector field subject to an analytic constraint, as shown in Lemma~\ref{lem:trend} it is possible to determine whether the constraint remains active over a nonzero time horizon by evaluating Lie derivatives at a single instant in time.
If either the vector field or constraint were merely smooth, the differential equation determined by the vector field would, in general, need to be solved over a nonzero time horizon to determine whether the constraint remained active.

\begin{assumption}[Persistent Contact]
\label{ass:contact}
Contact with the world occurs through a finite number of active constraints indexed by $I \subset \calK$ 
that apply continuous time forces. 
Furthermore, contact persists until the next event (e.g.\ touchdown or liftoff).
\end{assumption}

This assumption is related to the \emph{Principle of Constraints}, as discussed in Section~\ref{sec:sim}.
Its adoption partitions trajectories so that at all times between instantaneous touchdown or liftoff 
events there persists a well-defined set of active constraints (enabling the systematic a priori enumeration and
analysis of these constraint sets and their sequences, e.g.\ \cite{paper:johnson-icra-2013}).
This contrasts with simulations generated by time-stepping algorithms, wherein contact~\cite{stewart1996implicit} or interpenetration~\cite{anitescu1997formulating} are resolved only at multiples of the timestep, and no distinction between
forces and impulse are made (indeed this relaxation is what enables the efficient and consistent simulation in such formulations).

The impact problem can be summarized as determining which constraints to add or remove from
the active set. The active set continues to constrain the system so long as the unilateral
constraint cone \cite[Eqn.~7]{johnson_selfmanip_2013} is positive, $\bfU(\lambda)\geq0$, 
where $\lambda \in T^*\mathcal{C}$ is the vector of Lagrange
multipliers (constraint forces) \cite[Eqn.~33]{johnson_selfmanip_2013}. 
Included in $\bfU$ is both the non-attachment condition 
that normal direction forces are positive as well as the friction cone that relates the magnitude
of the normal and tangential components.

In the complementarity problems, the following definition simplifies
statements involving higher-order derivatives of the state
that seem to arise unavoidably 
(as stated in \cite[Sec.~3]{van1998complementarity}, \cite[Sec.~1]{heemels2000linear}, 
formalizing the concepts represented in e.g.\ \cite[Fig.~11.4]{featherstone2008rigid}, \cite[Sec.~27.2]{handbook}),

\begin{definition}
\label{def:through}
Given a smooth function $h:M\into\Real$ \emph{defined over a smooth manifold $M$}, a point $x \in M$, and a smooth vector field $F:M\into TM$, 
we say that $h$ is \emph{trending negative} with respect to the vector field $F$ at $x$, denoted $h(x) \prec_F 0$,
(or $h(x) \prec 0$ if the context specifies $F$),~if,
\begin{align}\label{eq:thru}
  \exists\; m \geq 0 : (\Lie^m_F h)(x) < 0 \wedge \forall\; \ell < m : (\Lie^\ell_F h)(x) = 0,
\end{align}
where $\Lie^m_F h:M\into\Real$ is the $m^{th}$ Lie derivative\footnote{See e.g.\ \cite[Ch.~9]{Lee2012}, 
and note our convention that, $\Lie_F^0 h = h$.} of $h$ with respect to the vector field $F$.
Similarly, we say that $h$ is \emph{trending positive} at $x$, denoted $h(x)\succ 0$, when $-h(x)\prec 0$.
We say that $h$ is \emph{identically zero} at $x$, denoted $h(x) \equiv 0$, when $\forall\; \ell \in \mathbb{N} : (\Lie^\ell_F h)(x) = 0$.
Finally, we say that $h$ is \emph{trending non-negative} at $x$, denoted $h(x)\succeq 0$, when $h(x)\succ 0$ or $h(x) \equiv 0$, 
and that $h$ is \emph{trending non-positive} at $x$, denoted $h(x)\preceq 0$, when $h(x)\prec 0$ or $h(x) \equiv 0$.
\end{definition}

\noindent
We refer the reader to Appendix~\ref{app:dg} for the definition of a vector field $F:M\into TM$; in the case where $M = \Real^n$, the tangent bundle $TM$ can be canonically identified with $\Real^n$ to obtain a more familiar function $\td{F}:\Real^n\into\Real^n$ that determines an ordinary differential equation $\dot{\td{x}} = \td{F}(\td{x})$.

That is, $h(x) \prec 0$ if and only if the following vector,
\begin{align}
\left[h(x), \;  (\Lie_F h)(x), \;  (\Lie^2_F h)(x), \; ... \right], \label{eq:trendvec}
\end{align}
is lexicographically smaller than zero \cite[Def.~3.5]{Bertsimas_1997}. 
As an example of when these properties are important, consider the examples in Figure~\ref{fig:ptex}.
In each case, the initial configuration (taken as the bottom point of the circle)
is $\bfq = [x\;y]^T=[0\;0]^T$, and the initial velocity is $\dot{\bfq} = [v\;0]^T,\ v>0$. Assume that the particle
has unit mass ($\Mbar = \Id_\mathrm{2}$), and that there are no non-contact forces ($\Nbar=\mathbf{0}$, 
$\Cbar=\mathbf{0}$, $\Upsilon =\mathbf{0}$, as defined in Section~\ref{sec:contdyn}).
In all cases the particle is touching the constraint ($\bfa(\bfq)=0$) 
but has no impacting or separating velocity ($\bfA\dot{\bfq}=0$), so
there is no impulse (as defined in Section~\ref{sec:impactmap}). 
Furthermore in c) and d) there is no impacting or separating acceleration ($\bfA\ddot{\bfq}+\dot{\bfA}\dot{\bfq}=0$).
However in a) and c) the constraint function is trending positive, $\bfa(\bfq)\succ0$, while
in b) and d) the constraint function is trending negative $\bfa(\bfq)\prec0$. 

\begin{figure}
\centering
\def\svgwidth{16cm}
\include{ptex}
  \vspace{-8pt}
\caption{
Four examples of a planar point particle ($\e{Q}=\Real^2$) with a single constraint ($\e{K} = \set{1}$), defined as 
(a) $\bfa=x^2 + 4 y$, 
(b) $\bfa=-x^2+4y$, 
(c) $\bfa=x^3+8y$, 
and 
(d) $\bfa=-x^3+8y$. 
Note that if the particle velocity is directed to the right ($\dot{\bfq} = [v\;0]^T,\ v > 0$), as illustrated, then:
the constraint function is trending positive ($\bfa(\bfq)\succ0$) in (a) and (c); 
the constraint function is trending negative ($\bfa(\bfq)\prec0$) in (b) and (d). 
}
\label{fig:ptex}
\end{figure}

Furthermore, we make use of the following properties of this trending relation,
\begin{lemma}
\label{lem:closure}
The closure of $\{x : h(x) \prec 0\}$ or $\{x : h(x) \preceq 0\}$ is $\{x:h(x)\leq 0\}$, while the closure of
$\{x : h(x) \succ 0\}$ or $\{x : h(x) \succeq 0\}$ is $\{x:h(x)\geq 0\}$.
\end{lemma}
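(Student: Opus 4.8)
The plan is to squeeze each trending sublevel set between the ordinary open sublevel set $\{x : h(x) < 0\}$ and the ordinary closed sublevel set $\{x : h(x) \le 0\}$, thereby reducing the lemma to the single topological identity $\overline{\{x : h(x) < 0\}} = \{x : h(x) \le 0\}$. First I would verify the pointwise chain
\[
\{x : h(x) < 0\} \subseteq \{x : h(x) \prec 0\} \subseteq \{x : h(x) \preceq 0\} \subseteq \{x : h(x) \le 0\}.
\]
The first inclusion is immediate from Definition~\ref{def:through} on taking $m = 0$ in \eqref{eq:thru}; the second is definitional; and the third holds because $h(x)\preceq 0$ means either that the first nonvanishing Lie derivative is negative (whence it is either $h(x)$ itself with $h(x)<0$, or every lower-order derivative including $h(x)=\Lie^0_F h(x)$ vanishes) or that $h(x)\equiv 0$, so that in every case $h(x)\le 0$. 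Since $h$ is continuous, $\{x : h(x) \le 0\}$ is closed, so passing to closures pins every set in the chain between $\overline{\{h < 0\}}$ and $\{h \le 0\}$.

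It then suffices to establish the reverse containment $\{x : h(x) \le 0\} \subseteq \overline{\{x : h(x) < 0\}}$. Points with $h(x) < 0$ already lie in $\{h<0\}$, so the only work is at the boundary $\{x : h(x) = 0\}$, where I must show that every zero of $h$ is a limit of points at which $h$ is strictly negative. This is the step that requires regularity: since the constraint functions to which this relation is applied are of constant rank by Assumption~\ref{ass:rank}, $0$ is a regular value of $h$, so near any zero $x_0$ the submersion normal form (equivalently, the implicit function theorem) furnishes local coordinates in which $h$ is a single coordinate function; the half-space on which that coordinate is negative then yields a sequence in $\{h<0\}$ converging to $x_0$, so $x_0\in\overline{\{h<0\}}$. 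Combining with the first paragraph forces all the closures in the chain to equal $\{h\le 0\}$.

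The statement for $\succ$ and $\succeq$ then follows by symmetry: linearity of the Lie derivative gives $\Lie^k_F(-h) = -\Lie^k_F h$, hence $h(x)\succ 0 \iff (-h)(x)\prec 0$ and $h(x)\succeq 0 \iff (-h)(x)\preceq 0$, while $\{h\ge 0\}=\{-h\le 0\}$; applying the first half of the lemma to $-h$ delivers the second. I expect the boundary step to be the main obstacle, and it is genuinely essential rather than cosmetic: for a general smooth $h$ attaining a zero at a critical point the reverse inclusion can fail (e.g.\ $h=x^2$ with $F\equiv 1$ gives $\{h\le 0\}=\{0\}$ but $\{h\prec 0\}=\varnothing$), so the proof must lean on the regular-value property supplied by Assumption~\ref{ass:rank}, not on the trending relation alone.
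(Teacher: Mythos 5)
Your proposal follows the same squeeze strategy as the paper --- the paper's entire proof is the single observation that $\{x:h(x)<0\}\subset\{x:h(x)\preceq 0\}\subset\{x:h(x)\leq 0\}$ for any vector field --- but you carry it to completion, and the step you add is precisely the one the paper omits. The inclusion chain, together with closedness of $\{x:h(x)\leq 0\}$, yields only $\overline{\{x:h(x)\prec 0\}}\subseteq\overline{\{x:h(x)\preceq 0\}}\subseteq\{x:h(x)\leq 0\}$; the equality claimed by Lemma~\ref{lem:closure} additionally requires the reverse containment $\{x:h(x)\leq 0\}\subseteq\overline{\{x:h(x)<0\}}$, i.e.\ that every zero of $h$ is a limit of points where $h$ is strictly negative. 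As your example shows ($h(x)=x^2$ with the vector field $\dot{x}=1$ gives $\{x:h(x)\leq 0\}=\{0\}$ yet $\{x:h(x)\prec 0\}=\{x:h(x)\preceq 0\}=\varnothing$), this density can fail even for analytic $h$, so the lemma is false as literally stated for the arbitrary smooth $h$ of Definition~\ref{def:through}; your repair via Assumption~\ref{ass:rank} (independence makes $0$ a regular value of each constraint, and the submersion normal form then produces the approximating sequence in $\{x:h(x)<0\}$) is correct and is exactly the right hypothesis for the constraint functions $\bfa_k$ to which the lemma is applied. The one thing your repair does not cover: the paper also applies this lemma to the contact-force functions $\bfU_i(\lambda_I)$ when computing the guard closure~\eqref{eq:barGI} in Theorem~\ref{thm:nb}, and no regular-value property is assumed for those maps anywhere, so the regularity issue you flagged remains open in that use case. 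In short, your proof is correct modulo the added regularity hypothesis, takes the paper's route, and in completing it exposes a genuine gap in the paper's own one-line argument.
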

This is easy to see as $\{x:h(x)<0\} \subset \{x:h(x)\preceq0\} \subset \{x:h(x) \leq 0\}$ for any vector field.

\begin{lemma}
Given a smooth vector field, $F:M \into TM$, a point in a smooth boundaryless manifold, $x \in M$, and a smooth positive function, $g:M\rightarrow \Real^+$, any other smooth function, $h:M \into \Real$, is trending negative if and only if its product with $g$ has the same property, i.e.,
\begin{align}
h(x) \prec_F 0 \Leftrightarrow g(x) \cdot h(x) \prec_F 0.
\end{align}
\label{lem:twofun}
\end{lemma}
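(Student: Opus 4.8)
The plan is to exploit the fact that the Lie derivative along $F$ acts on smooth functions as a derivation, $\Lie_F(gh) = (\Lie_F g)\,h + g\,(\Lie_F h)$, and to iterate this into the generalized Leibniz rule
\[
\Lie^m_F(gh) = \sum_{k=0}^m \binom{m}{k}\,(\Lie^k_F g)\,(\Lie^{m-k}_F h),
\]
which holds pointwise on $M$ since $g$, $h$, and $F$ are all smooth. By Definition~\ref{def:through}, the relation $h(x)\prec_F 0$ says exactly that the first nonzero entry of the sequence $\big((\Lie^\ell_F h)(x)\big)_{\ell\ge 0}$ is negative, so the entire argument reduces to tracking how this leading term transforms under multiplication by $g$.

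First I would assume $h(x)\prec_F 0$ and let $m\ge 0$ be the order furnished by the definition, so that $(\Lie^\ell_F h)(x)=0$ for all $\ell<m$ while $(\Lie^m_F h)(x)<0$. Substituting into the Leibniz rule, for each $j<m$ every summand carries a factor $(\Lie^{j-k}_F h)(x)$ with $j-k\le j<m$, hence vanishes at $x$; thus $(\Lie^j_F(gh))(x)=0$ for all $j<m$. At order $j=m$, every term with $k\ge 1$ likewise carries a factor $(\Lie^{m-k}_F h)(x)=0$, so only the $k=0$ term survives and $(\Lie^m_F(gh))(x)=g(x)\,(\Lie^m_F h)(x)$. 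Since $g(x)>0$ and $(\Lie^m_F h)(x)<0$, this quantity is strictly negative, and Definition~\ref{def:through} then yields $g(x)\cdot h(x)\prec_F 0$ with the same order $m$.

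For the converse I would observe that $g$ smooth and positive makes $1/g$ a smooth positive function on $M$, and that $h=(1/g)\cdot(gh)$. Applying the forward implication already established, with $g$ replaced by $1/g$ and $h$ replaced by $gh$, converts $g(x)\cdot h(x)\prec_F 0$ back into $h(x)\prec_F 0$. Combining the two implications gives the claimed equivalence (and in fact shows the leading order $m$ is preserved in both directions).

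The only genuinely delicate ingredient is the generalized Leibniz formula for iterated Lie derivatives; I expect the main obstacle to be a routine induction on $m$ rather than anything conceptual, the inductive step using that $\Lie_F$ is a single derivation together with the Pascal identity $\binom{m}{k}+\binom{m}{k-1}=\binom{m+1}{k}$. Everything downstream is a term-by-term vanishing argument driven by the minimality of $m$, with the positivity of $g$ entering only through the single surviving $k=0$ term.
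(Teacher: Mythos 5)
Your proof is correct, and your forward implication $(h(x) \prec_F 0) \Rightarrow (g(x)\cdot h(x) \prec_F 0)$ coincides with the paper's: both apply the generalized Leibniz rule for iterated Lie derivatives and use minimality of the order $m$ to annihilate every summand except the $k=0$ term, so that $(\Lie^m_F(g\cdot h))(x) = g(x)\,(\Lie^m_F h)(x)$ and positivity of $g$ preserves the sign. Where you genuinely depart from the paper is in the converse. The paper proves $(g(x)\cdot h(x)\prec_F 0) \Rightarrow (h(x)\prec_F 0)$ directly, by a strong induction on $\ell \le m_{gh}$ establishing the identity $(\Lie^\ell_F(g\cdot h))(x) = g(x)\,(\Lie^\ell_F h)(x)$, from which the vanishing and sign conditions for $h$ are read off. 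You instead observe that $1/g$ is again a smooth positive function and that $h = (1/g)\cdot(g h)$, so the converse is literally the forward implication applied to the pair $(1/g,\, gh)$. Your route is shorter and eliminates the second induction entirely, leaving only the routine induction behind the Leibniz formula (which the paper also invokes without proof); the paper's route yields as a by-product the explicit identity relating the Lie derivatives of $g\cdot h$ and of $h$ at $x$ for all orders up to $m_{gh}$, a slightly stronger intermediate fact, though nothing downstream in the paper requires it. Both arguments also show that the trending order $m$ is the same for $h$ and for $g\cdot h$, as you note.
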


\noindent
See Appendix~\ref{app:twofun} for a proof.

\begin{lemma}\label{lem:trend}
Let $h:M \into \Real$ be a $C^\omega$ function and $F:M \into TM$ be a $C^\omega$ vector field over a $C^\omega$ boundaryless manifold $M$, 
and let $\chi:(-\varepsilon,+\varepsilon)\into M$ denote an integral curve for $F$ through $x := \chi(0)$.
Then $h$ is trending positive at $x$ with respect to $F$, 
$h(x) \succ_{F} 0$,
if and only if there exists $\delta \in (0,\varepsilon)$ such that,
\eqnn{
\forall\; s\in(0,+\delta) : h\circ\chi(s) > 0.
}
\end{lemma}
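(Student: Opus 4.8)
The plan is to reduce the statement to an elementary fact about real-analytic functions of one variable by pulling $h$ back along the integral curve. Define $\phi:(-\varepsilon,+\varepsilon)\into\Real$ by $\phi(s) := h\circ\chi(s)$. Since $F$ is $C^\omega$, a standard result on analytic ordinary differential equations guarantees that its local flow, and hence the integral curve $\chi$, is $C^\omega$; as $h$ is $C^\omega$ by hypothesis, the composite $\phi$ is then real-analytic at $s=0$. First I would establish the bookkeeping identity relating the derivatives of $\phi$ to the Lie derivatives appearing in Definition~\ref{def:through}: because $\chi'(s) = F(\chi(s))$ and $\Lie_F h = dh(F)$, the chain rule gives $\phi'(s) = (\Lie_F h)(\chi(s))$, and a routine induction yields $\phi^{(m)}(s) = (\Lie^m_F h)(\chi(s))$ for every $m\geq 0$ (the base case being $\Lie^0_F h = h$). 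Evaluating at $s=0$ gives $\phi^{(m)}(0) = (\Lie^m_F h)(x)$, so the Taylor coefficients of $\phi$ at $0$ are, up to factorials, exactly the Lie derivatives of $h$ along $F$ at $x$.

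With this identity in hand, the condition $h(x)\succ_F 0$ translates verbatim into a statement about $\phi$: by Definition~\ref{def:through} (recalling that $\succ$ means $-h\prec 0$), it asserts the existence of an $m\geq 0$ with $\phi^{(m)}(0) > 0$ and $\phi^{(\ell)}(0) = 0$ for all $\ell < m$, i.e.\ that the first nonvanishing Taylor coefficient of $\phi$ at $0$ exists and is positive. It therefore remains to show that, for a real-analytic $\phi$, having a positive leading (first nonzero) Taylor coefficient is equivalent to $\phi$ being strictly positive on some punctured right-neighborhood $(0,+\delta)$.

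For this final step I would use the analytic factorization. If the first nonvanishing coefficient has order $m$, then $\phi(s) = s^m g(s)$, where $g$ is real-analytic near $0$ with $g(0) = \phi^{(m)}(0)/m! \neq 0$. By continuity $g$ retains the sign of $g(0)$ on a neighborhood of $0$, and since $s^m > 0$ for $s>0$, the sign of $\phi(s)$ on a right-neighborhood equals the sign of $\phi^{(m)}(0)$. Hence $\phi^{(m)}(0) > 0$ forces $\phi > 0$ on some $(0,\delta)$, proving the forward direction; conversely, if $\phi > 0$ on $(0,\delta)$ then $\phi$ is not identically zero (so a first nonzero coefficient exists) and that coefficient cannot be negative (else $\phi < 0$ just to the right of $0$), hence is positive, giving $h(x)\succ_F 0$. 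The only remaining case is when all coefficients vanish: by analyticity $\phi\equiv 0$ near $0$, so $h(x)\equiv 0$, which is neither trending positive nor strictly positive to the right, and both sides of the claimed equivalence fail together.

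I expect the crux, and the place where Assumption~\ref{ass:analytic} is indispensable, to be this last step: it is precisely analyticity that allows the (possibly infinite) jet of $\phi$ at $0$ to control the sign of $\phi$ on an entire interval, via the factorization above and the identity theorem. Under mere smoothness $\phi$ could have all derivatives vanishing at $0$ without vanishing identically (a flat bump), so the finite data of the Lie derivatives would fail to determine whether the constraint reactivates, exactly as anticipated in the discussion following Assumption~\ref{ass:analytic}. The analyticity of the integral curve $\chi$, as the flow of an analytic vector field, is the second ingredient that must be cited to ensure $\phi$ is analytic in the first place.
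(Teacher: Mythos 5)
Your proposal is correct and follows essentially the same route as the paper: both reduce the claim to the scalar real-analytic function $h\circ\chi$ along the integral curve and compare its leading Taylor coefficient with the Lie derivatives $(\Lie^m_F h)(x)$. The paper's own argument is only a two-sentence sketch (analyticity for the direction from positivity to trending, and a mean-value-theorem contradiction for the converse), so your unified factorization $\phi(s)=s^m g(s)$ with $g(0)\neq 0$ is simply a more detailed, self-contained rendering of the same idea.
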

\noindent
The requirement that the manifold be boundaryless is introduced to simplify the statement of this Lemma;
the Lemma clearly applies to the interior of a manifold with corners (which is, after all, simply 
a manifold without boundary) \cite[Def.~2.1]{Joyce2012}.

To see that the lemma is true, note that if $\chi$ is an integral curve for $F$ such that $h\circ\chi(s)$ 
is positive for $s > 0$ sufficiently small, then since $h$ is analytic we conclude~\eqref{eq:thru} is satisfied.
The other direction follows easily by contradiction using the mean value Theorem.
We note that this is not true if $h$ or $F$ are merely $C^\infty$. Also note that the conditions of the lemma
do not imply that  $\forall\; s\in(-\delta,0) : h\circ\chi(s) < 0$ for two reasons: 1) it is possible that
$h(x)\neq 0$, and 2) even for $h(x)=0$, grazing contact would handled incorrectly, 
\changed{as in Figure~\ref{fig:ptex}, example (a)}.

Lemma~\ref{lem:trend} implies a 
computationally efficient way to test these trending conditions is to simply integrate a flow until
it reaches a zero crossing\changed{, as discussed further in Section~\ref{sec:dis:numerical}}.

\subsection{Manipulation and Self-Manipulation}
\label{sec:manip}

This section briefly summarizes the self-manipulation formalism introduced in \cite{johnson_selfmanip_2013},
as it relates to manipulation, e.g.\ as presented in \cite{book:mls-1994}. Each defines a number of frames on
the robot and its environment -- the palm frame, the object frame, the contact frame, etc. In an effort
to keep the problems as similar as possible, the following conventions were adopted in \cite{johnson_selfmanip_2013},
\begin{itemize}
\item In self-manipulation, the robot is the object being manipulated and so to properly consider 
the forces and torques on the object the robot's palm frame, $\itP$, and the object frame, $\itO$, 
are chosen to be coincident, \cite[Sec.~II-B]{johnson_selfmanip_2013}.
\item Thus motions that, in a manipulation problem, move an object to the right really move 
the robot to the left, and so the self-manipulation grasp map (a component of $\bfA$) is a reflection of 
the manipulation grasp map, $\bfG_s:=-\bfG$, \cite[Eqn.~15]{johnson_selfmanip_2013}.
\item By convention the contact frame is defined at any point of contact with the $z$-axis pointing into the 
object (away from the finger tip), \cite[Sec.~5.2.1]{book:mls-1994}. In self-manipulation the convention 
of \cite[Sec.~II-C]{johnson_selfmanip_2013} is to keep the contact frame consistent with respect to the legs,
and so the $z$-axis points away from the robot and into the ground. This results in a unilateral
constraint cone,~$\bfU$, that is negative, \cite[Eqn.~76,~78]{johnson_selfmanip_2013}.
\item Since the palm reference frame is accelerating with respect to the world, the inertia tensor,~$\Mbar$,
\cite[Eqn.~26]{johnson_selfmanip_2013}, and by extension the Coriolis terms,~$\Cbar$, \cite[Eqn.~30]{johnson_selfmanip_2013}, 
are more coupled and lack the block diagonal structure present in manipulation problems, \cite[Eqn.~6.24]{book:mls-1994}.
\end{itemize}
It should be no surprise that the problem formulations are structurally equivalent since the underlying
kinematics and dynamics are indifferent to the problem category. 
However
owing to the notational differences summarized above, through the remainder of this paper we choose to
write out the problems in terms of a self-manipulation system, with the understanding that the results
contained herein apply equally well to manipulation systems once these transformations are incorporated.

\subsection{Massless Considerations}
\label{sec:astar}

To properly define the dynamics of a partially massless system, consider a parametrized family 
of singular semi-Riemannian metrics,
\begin{align}
\Mbare(\bfq) : \calQ \times [0,\bar{\epsilon}] \rightarrow \bbR^{\rmq \times \rmq}, \label{eq:mbare}
\end{align}
such that $\Mbar_0(\bfq) := \Mbar$ is the (possibly) degenerate inertia tensor for the 
system \cite[Eqn.~26]{johnson_selfmanip_2013} and may be
singular, while $\epsilon$ assigns a small mass and inertia to any putatively massless 
links such that $\Mbare(\bfq)$ is full-rank for all $\epsilon>0$ (for our present purposes, it is sufficient to 
use a limiting model such as $\Mbare := \Mbar_0 +\epsilon \mathbf{Id}_\rmq$ rather than some more specific physically motivated one). 
We invoke the general definition of Riemmanian metric here since it provides the coordinate-invariant formulation of the familiar mass or inertia matrix associated with a collection of rigid bodies, and refer the reader to~\cite{Lee1997} for a formal definition and Section~\ref{sec:contdyn} for additional details.
The dynamics of the system in contact mode $I$ can be expressed (as shown below) using the inverse of the following block
matrix containing $\Mbare$, defining\footnote{ Note 
that \cite[Eqn.~40]{johnson_selfmanip_2013} used the notation $\bfA^*$ while in this paper we use $\bfAd$ to
signify the slight difference in definition used here, and to avoid confusion with the pullback of $\bfA$, usually noted as $\bfA^*$, but which happens to be $\bfAdT$.} $\bfAd:T^*\mathcal{Q}\rightarrow T^*\mathcal{C}$, $\;\Md:T^*\calQ \rightarrow T^2 \calQ$, and $\Lambda:T^2\calC \rightarrow T^*\calC$ as,
\begin{align}
&\left[ \begin{array}{ll}
\Md_I & \bfAdT_I \\
\bfAd_I & \Lambda_I
\end{array} \right]  := \lim_{\epsilon->0}\left( \left[ \begin{array}{cc}
\Mbar_\epsilon& \bfA^T_I\\
\bfA_I & 0 
\end{array} \right]\right)^{-1}  \label{eq:astardef} \\
&\qquad=\left(\lim_{\epsilon->0} \left[ \begin{array}{cc}
\Mbar_\epsilon& \bfA^T_I\\
\bfA_I & 0 
\end{array} \right]\right)^{-1}=  \left[ \begin{array}{cc}
\Mbar_0& \bfA^T_I\\
\bfA_I & 0 
\end{array} \right]^{-1}.
 \label{eq:astardefe}
\end{align}
From this definition, note that the following properties hold,
\begin{align}
\bfAd \bfA^T=\bfA \bfAdT&=\Id, \qquad &
\Md \bfA^T = \bfA \Md = 0, \label{eq:dagprop}\\ 
\Md\Mbar + \bfAdT \bfA &= \Id, \qquad &
\bfAd\Mbar+\Lambda\bfA = 0. \label{eq:dagprop2}
\end{align}

To ensure that the inverse of the matrix in \eqref{eq:astardefe} (sometimes called the ``Lagrangian matrix 
of coefficients'', e.g.\ \cite[Eqn.~7.79]{papalambros2000principles}, and
sometimes used in robotics for numerical reasons, e.g.\ \cite[Sec.~4.3]{Holmes_Full_Koditschek_Guckenheimer_2006})
is well-defined, we require some modeling assumptions on the nature of
the massless appendages. Thus if the inverse exists, 
this $\epsilon$-parametrized curve takes its image in $GL(n)$ (the group of invertible matrices over $\Real^n$) 
within which matrix inversion is a continuous operation, hence the limit commutes with the inverse operation, 
and $\Mbare^\dagger$ is a well defined smooth curve defined over all $\epsilon\in[0,\bar{\epsilon}]$.

To meet this requirement, massless appendages are allowed here only in a limited form,
\begin{assumption}[Constrained Massless Limbs]
\label{ass:cml}
For all limbs in contact with the world, any rank deficiencies of the inertia 
tensor~$\Mbar$ \cite[Eqn.~26]{johnson_selfmanip_2013} are ``corrected''
by velocity constraints~$\bfA$ sufficient to guarantee that any remaining allowed physical movement excites some 
associated kinetic energy, that is, the block matrix in~\eqref{eq:astardefe} is invertible. 
\end{assumption}

If the ``rank correction'' condition in this assumption were violated, then it would not be possible in general to determine the system's instantaneous acceleration solely from the internal, applied, and Coriolis forces; it could happen that either no accelerations are consistent with the net forces, or an infinite set of accelerations are.
This condition admits its most physically straightforward
expression via the requirement that the inertia tensor is nonsingular when written with
respect to generalized or reduced coordinates, $\td{\mathbf{M}}$ (i.e., any local chart arising from an implicit
function solution to the constraint equation \cite[Eqn.~10]{johnson_selfmanip_2013}). 
However, for purposes of this paper, we find it more useful to work with the Lagrange-d'Alembert 
formulation of the constrained dynamics, \cite[Eqn.~33]{johnson_selfmanip_2013},  
hence, we translate that natural assumption into more formal algebraic terms governing 
the relationship between the lifted (velocity) constraints, $\bfA$ \cite[Eqn.~11]{johnson_selfmanip_2013}, 
and the overall inertia tensor~$\Mbar$ as follows,
\begin{lemma}
\label{thm:mlequiv}
The matrix~$\bigl[\begin{smallmatrix}\Mbar & \bfA^T \\ \bfA& 0\end{smallmatrix} \bigr]$,~\eqref{eq:astardefe},
is invertible if and only if the inertia tensor expressed in generalized 
or reduced coordinates, $\td{\mathbf{M}}$ \cite[Eqn.~36]{johnson_selfmanip_2013}, is invertible
\cite[Sec.~II.K, Assumption~A.4]{johnson_selfmanip_2013}.
\end{lemma}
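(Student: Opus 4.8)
The plan is to reduce the claim to a standard fact about saddle-point (KKT) matrices, taking care that because $\Mbar$ need not be invertible the Schur complement $-\bfA\Mbar^{-1}\bfA^T$ is unavailable, so a null-space argument is required in its place. First I would invoke Assumption~\ref{ass:rank} to record that $\bfA := \bfA_I$ has full row rank $|I|$, and introduce a matrix $\bfW$ whose columns form a basis for the constraint-admissible velocity subspace $\ker\bfA$, so that $\bfA\bfW = 0$ and $\bfW$ has full column rank $\rmq - |I|$. I would then recall from the reduced-coordinate construction of \cite[Eqn.~36]{johnson_selfmanip_2013} that the inertia tensor in generalized (reduced) coordinates is precisely the restriction of $\Mbar$ to this subspace, $\td{\mathbf{M}} = \bfW^T\Mbar\bfW$, and observe that since any two admissible bases $\bfW, \bfW'$ differ by $\bfW' = \bfW T$ with $T$ invertible, one has $\bfW'^T\Mbar\bfW' = T^T\td{\mathbf{M}}\,T$; hence invertibility of $\td{\mathbf{M}}$ does not depend on the chosen reduced coordinates and the statement is well-posed.

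With this setup in place the equivalence follows from a single kernel computation. Suppose $\bigl[\begin{smallmatrix}\Mbar & \bfA^T\\\bfA&0\end{smallmatrix}\bigr]\bigl[\begin{smallmatrix}u\\v\end{smallmatrix}\bigr]=0$, i.e.\ $\Mbar u + \bfA^T v = 0$ and $\bfA u = 0$. The second equation gives $u = \bfW z$ for some $z$; left-multiplying the first by $\bfW^T$ and using $\bfW^T\bfA^T = (\bfA\bfW)^T = 0$ yields $\td{\mathbf{M}} z = 0$. For the forward direction ($\td{\mathbf{M}}$ invertible $\Rightarrow$ the matrix \eqref{eq:astardefe} invertible) this forces $z = 0$, hence $u = 0$, whence $\bfA^T v = 0$; since $\bfA$ has full row rank $\bfA^T$ is injective, so $v = 0$ and the block matrix has trivial kernel. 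Conversely, if $\td{\mathbf{M}} z = 0$ with $z \neq 0$, set $u = \bfW z \neq 0$; then $\bfA u = 0$ and $\bfW^T(\Mbar u) = \td{\mathbf{M}} z = 0$, so $\Mbar u \in \ker\bfW^T = (\ker\bfA)^\perp = \operatorname{range}(\bfA^T)$, which guarantees a solution $v$ of $\bfA^T v = -\Mbar u$; the resulting nonzero vector $(u,v)$ lies in the kernel, so the block matrix is singular.

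The main obstacle — really the only nontrivial point — is the identification $\td{\mathbf{M}} = \bfW^T\Mbar\bfW$ together with the basis-independence of its invertibility, i.e.\ confirming that ``the inertia tensor expressed in generalized coordinates'' of \cite[Eqn.~36]{johnson_selfmanip_2013} is exactly the restriction of the metric $\Mbar$ to $\ker\bfA$. I would therefore devote the bulk of the write-up to making this correspondence precise (the columns of $\bfW$ are the velocity directions $\dot\bfq = \bfW\dot\bfy$ induced by the implicit-function parametrization of the constraint manifold, and $\td{\mathbf{M}}$ is the kinetic-energy metric in the coordinates $\bfy$), after which both implications collapse to the kernel computation above. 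Since everything is evaluated pointwise in $\bfq$, no analytic or manifold structure beyond the constant-rank hypothesis of Assumption~\ref{ass:rank} is needed.
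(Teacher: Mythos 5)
Your proposal is correct, and it reaches the conclusion by a genuinely different route than the paper. You argue directly about kernels: the bordered matrix $\bigl[\begin{smallmatrix}\Mbar & \bfA^T \\ \bfA & 0\end{smallmatrix}\bigr]$ has a nontrivial null vector if and only if $\bfW^T\Mbar\bfW$ does, where the columns of $\bfW$ span $\ker\bfA$, using full row rank of $\bfA$ (so $\bfA^T$ is injective) and the identity $\ker\bfW^T = (\ker\bfA)^\perp = \operatorname{range}(\bfA^T)$ to produce the multiplier $v$ in the converse direction; you also explicitly verify that invertibility of $\bfW^T\Mbar\bfW$ is independent of the basis chosen for $\ker\bfA$, which makes the statement well-posed. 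The paper instead invokes the Rank Theorem to pass to adapted coordinates in which $\bfA = [\bfB \;\; 0]$ with $\bfB$ invertible, takes $\bfY = [0 \;\; \Id]$ so that the parametrization Jacobian becomes $\bfH = \bigl[\begin{smallmatrix}0\\ \Id\end{smallmatrix}\bigr]$ and $\td{\mathbf{M}} = \bfH^T\Mbar\bfH = \Mbar_{22}$, and then reduces invertibility of the three-by-three block matrix to invertibility of $\Mbar_{22}$ via a cited block-matrix result. The two identifications agree, since the paper's $\bfH$ is precisely one choice of your $\bfW$ (it satisfies $\bfA\bfH = 0$ and has full column rank). What each approach buys: yours is self-contained and coordinate-free, needs no normal form and no external block-matrix corollary, and makes the basis-independence explicit; the paper's is computationally transparent once the adapted coordinates are in hand, at the cost of invoking the Rank Theorem and an outside reference for the final reduction.
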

\noindent
as shown in Appendix~\ref{app:pfmle}. See Section~\ref{sec:dis:ml} for a discussion of physical scenarios that meet 
this requirement.

When not constrained on the ground, any such massless links or limbs must then be removed from consideration as 
mechanical degrees-of-freedom: since they are massless, when unconstrained, the associated 
joints can be considered to have arbitrary configuration. Their evolution is instead treated according to the principle,
\begin{assumption}[Unconstrained Massless Limbs]
\label{ass:uml}
For all limbs not in contact with the world, any components of the state that do not excite some kinetic energy 
must be removed from the usual dynamics and instead considered to evolve in isolation according to 
some independent, decoupled dynamics.\footnote{ 
That is, in contact mode $I$, the configuration manifold $\e{Q}$ 
decomposes as a product of manifolds $\e{Q} = \e{Q}_I \times \td{\e{Q}}_I$, 
where $\e{Q}_I$ corresponds to a subset of the system coordinates such that the matrix in~\eqref{eq:astardefe} is nonsingular,
and $\td{\e{Q}}_I$ corresponds to the remaining coordinates.
The dynamics for the coordinates of $\td{\e{Q}}_I$ is given by some vector field $\td{F}_I$.
Here we have written
the dynamics as a second order vector field so that the dynamics of the full system may be written in a notationally
consistent manner.
This is not required; regardless of how the dynamics are defined for these coordinates, there is no coupling of energy with the rest of the system through the inertia tensor. \label{fn:uml}}
\end{assumption}

In the same vein as the remark following Assumption~\ref{ass:cml} (Constrained Massless Limbs), 
we observe that it is not possible to uniquely determine accelerations of unconstrained massless limbs due to corresponding degeneracy in the inertia tensor.
Excluding such limb states from the coupled Lagrangian mechanics governing the remaining body and limb segments enables us in the sequel to specify a differential-algebraic equation that admits unique solutions.
As the dynamics of the excluded states do not affect those of the remaining states, for the rest of this section
we abuse notation and suppress the subscript $I$ from the state space $\calQ$, so that 
unless stated otherwise we are concerned with only the ``active'' component $\calQ_I$ of the 
decomposed state space for the mode of interest.
See also Section~\ref{sec:zenodisc} for a discussion of Zeno (Def.~\ref{def:zeno}) considerations with massless legs.

\subsection{Continuous Dynamics}
\label{sec:contdyn}

With this notation, the continuous-time dynamics of \cite[Eqn.~33]{johnson_selfmanip_2013} 
in contact mode $I$ can be expressed as,
\begin{align}
\ddot{\bfq}_I &:= \Md_I \left(\Upsilon_I  - \Cbar_I\dot{\bfq} - \Nbar_I\right) - \bfAdT_I \dot{\bfA}_I \dot{\bfq}, \label{eq:dyn}\\
\lambda_I &:= \,\,\bfAd_I \left(\Upsilon_I - \Cbar_I\dot{\bfq} - \Nbar_I\right) - \Lambda_I \dot{\bfA}_I \dot{\bfq},\label{eq:ldyn}
\end{align}
where $\Upsilon_I$ is the applied forces, $\Cbar_I$ is the centripetal and Coriolis forces, and $\Nbar_I$
is the nonlinear and gravitational forces \cite[Eqn.~30, 31]{johnson_selfmanip_2013}.

When $\Mbar_\epsilon$,~\eqref{eq:mbare}, is invertible (including, possibly, even for $\epsilon = 0$), it 
is easy to verify the equivalences (and dropping for now the subscripted contact mode $I$),
\begin{align}
\Md &=\Mbar^{-1} - \Mbar^{-1} \bfA^T (\bfA \Mbar^{-1} \bfA^T)^{-1} \bfA \Mbar^{-1},\label{eq:Md}\\
\bfAdT &= \Mbar^{-1}\bfA^T(\bfA \Mbar^{-1} \bfA^T)^{-1},\label{eq:bfAdT}\\
\Lambda &= -(\bfA \Mbar^{-1} \bfA^T)^{-1},\label{eq:Lambda}
\end{align}
as shown in Appendix~\ref{app:la},~\eqref{eq:mdexp}. 
Note that constructions such as these are commonly used in robotics when $\Mbar$ is invertible, 
e.g.\ \cite[Eqns.~45--46]{Khatib_83} and many others (where their $\Lambda_r$ has the opposite sign of our $\Lambda$ and
their $\bar{J}$ corresponds to $\bfAdT$, although note that the definition~\eqref{eq:bfAdT} is exact and not defined
as a minimal-energy pseudo-inverse).

\begin{lemma}
\label{thm:dynamics}
When $\Mbar_0=\Mbar$ is invertible, the dynamics~\eqref{eq:dyn} and~\eqref{eq:ldyn} are 
equivalent to the more common expression (as stated e.g.\ in the last equations 
of \cite[Appendix~D]{johnson_selfmanip_2013}, or \cite[Eqn.~6.5,~6.6]{book:mls-1994}),
\begin{align}
\ddot{\bfq} &= \Mbar^{-1} \left( \Upsilon  - \Cbar\dot{\bfq} - \Nbar -\bfA^T \lambda \right), \label{eq:ddqmass}\\
\lambda &= (\bfA \Mbar^{-1} \bfA^T)^{-1}\left(\bfA\Mbar^{-1}\left(\Upsilon  - \Cbar\dot{\bfq} - \Nbar\right) + \dot{\bfA} \dot{\bfq} \right). \label{eq:lammass}
\end{align}
\end{lemma}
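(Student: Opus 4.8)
The plan is to verify the two claimed equivalences by substituting the explicit formulas~\eqref{eq:Md}, \eqref{eq:bfAdT}, and~\eqref{eq:Lambda} (valid when $\Mbar$ is invertible) into the abstract dynamics~\eqref{eq:dyn} and~\eqref{eq:ldyn}, and then algebraically rearranging to recover~\eqref{eq:ddqmass} and~\eqref{eq:lammass}. Since the excerpt already asserts these three explicit expressions hold whenever $\Mbar_\epsilon$ is invertible (with a pointer to Appendix~\ref{app:la}), this reduces the lemma to a direct computation rather than anything requiring the limiting/massless machinery.

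First I would establish~\eqref{eq:lammass}. Substituting~\eqref{eq:bfAdT} and~\eqref{eq:Lambda} into the definition~\eqref{eq:ldyn} gives
\eqn{
\lambda = \left(\bfA \Mbar^{-1} \bfA^T\right)^{-1} \bfA \Mbar^{-1}\left(\Upsilon - \Cbar\dot{\bfq} - \Nbar\right) + \left(\bfA \Mbar^{-1} \bfA^T\right)^{-1} \dot{\bfA}\dot{\bfq},
}
which is exactly~\eqref{eq:lammass} after factoring out the common $(\bfA\Mbar^{-1}\bfA^T)^{-1}$; note the sign on $\Lambda$ in~\eqref{eq:Lambda} flips the $\dot\bfA\dot\bfq$ term into a positive contribution, matching~\eqref{eq:lammass}.

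Next I would establish~\eqref{eq:ddqmass}. Substituting~\eqref{eq:Md} and~\eqref{eq:bfAdT} into~\eqref{eq:dyn} yields two groups of terms: the $\Md(\Upsilon - \Cbar\dot\bfq - \Nbar)$ group expands as $\Mbar^{-1}(\Upsilon - \Cbar\dot\bfq - \Nbar)$ minus $\Mbar^{-1}\bfA^T(\bfA\Mbar^{-1}\bfA^T)^{-1}\bfA\Mbar^{-1}(\Upsilon - \Cbar\dot\bfq - \Nbar)$, and the $-\bfAdT\dot\bfA\dot\bfq$ term equals $-\Mbar^{-1}\bfA^T(\bfA\Mbar^{-1}\bfA^T)^{-1}\dot\bfA\dot\bfq$. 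The key recognition is that the second part of the first group together with the last term is precisely $-\Mbar^{-1}\bfA^T$ multiplied by the bracketed expression for $\lambda$ in~\eqref{eq:lammass}; collecting these shows $\ddot\bfq = \Mbar^{-1}(\Upsilon - \Cbar\dot\bfq - \Nbar) - \Mbar^{-1}\bfA^T\lambda$, which is~\eqref{eq:ddqmass}.

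The main obstacle, such as it is, is purely bookkeeping: carefully tracking the sign conventions (in particular the negative sign in~\eqref{eq:Lambda}) and verifying that the cross-term $-\Mbar^{-1}\bfA^T(\bfA\Mbar^{-1}\bfA^T)^{-1}\bfA\Mbar^{-1}(\Upsilon-\Cbar\dot\bfq-\Nbar)$ arising from the projector structure of $\Md$ combines cleanly with the $\dot\bfA\dot\bfq$ term to reconstitute $\bfA^T\lambda$. No genuine analytic difficulty arises, since invertibility of $\Mbar$ guarantees that $\bfA\Mbar^{-1}\bfA^T$ is invertible under the constant-rank hypothesis of Assumption~\ref{ass:rank}, so all the indicated inverses exist and the explicit formulas~\eqref{eq:Md}--\eqref{eq:Lambda} are legitimate; one simply confirms the substitutions align term-by-term with the target equations.
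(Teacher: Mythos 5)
Your proposal is correct and takes essentially the same approach as the paper's proof in Appendix~D: substituting the explicit expressions~\eqref{eq:Md}--\eqref{eq:Lambda} into~\eqref{eq:dyn}--\eqref{eq:ldyn}, recognizing that the cross-term from the projector structure of $\Md$ combines with the $\dot{\bfA}\dot{\bfq}$ term to reconstitute $-\bfA^T\lambda$. The only cosmetic difference is that the paper carries out the same algebra with $\Mbar_\epsilon$ inside $\lim_{\epsilon\to 0}$ for notational consistency with the definition~\eqref{eq:astardef}, whereas you work directly with the invertible $\Mbar$; the computation is identical.
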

The claim follows directly from substituting~\eqref{eq:Md}--\eqref{eq:Lambda}, the
explicit solution to~\eqref{eq:astardef} when $\Mbar$ is invertible, into~\eqref{eq:dyn}--\eqref{eq:ldyn}, 
as worked out in Appendix~\ref{app:pfdyn}.

Whether $\Mbar$ is invertible or not, we require,
\begin{assumption}[Lagrangian Dynamics]
In each contact mode $I$, the time evolution of the 
active coordinates of the system are governed by Lagrangian dynamics, and
the applied forces are such that the vector field defined by~\eqref{eq:dyn} for coordinates in $\e{Q}_I$ 
and\textsuperscript{\ref{fn:uml}} $\td{F}_I$ for coordinates in $\td{\e{Q}}_I$ is forward complete, i.e.\ the maximal integral curve through 
any initial condition is defined for all positive time.
\label{ass:complete}
\end{assumption}

Recall from the rigid body and unconstrained massless assumptions (\ref{ass:rigid} \&~\ref{ass:uml}) that the configuration space, $\e{Q}$,
is a manifold without boundary.
Thus the major obstacle to verifying Assumption~\ref{ass:complete} lies in preventing 
finite-time ``escape'' from the state space $T\e{Q}$, e.g.\ because the velocity grows without bound or there are ``open edges'' in the configuration manifold (i.e., the manifold is not compact).
If the configuration manifold were compact, then it would suffice to impose a 
global bound on the magnitude of the vector field in~\eqref{eq:dyn}.
If the configuration space were instead Euclidean, $\e{Q} = \Real^n$, then it would suffice to impose a global Lipschitz continuity condition on the vector field in~\eqref{eq:dyn}.
We note that configuration obstacles such as joint limits or self-intersections are treated as constraints in Section~\ref{sec:hyb}, and hence pose no obstacle to satisfying the above boundarylessness and completeness conditions on the configuration space.

However, since in examples of interest the configuration space is neither compact nor a vector space \changed{(as noted after Assumption~\ref{ass:rigid})}, we often require a more general condition.
One such condition is obtained from \cite[Thm.~10]{Ballard2000}; since we rely on this sufficient condition elsewhere in the paper, we transcribe it explicitly into our notation as follows.
When $\e{Q}$ is a complete connected configuration manifold and $\Mbar$ is a nondegenerate inertia tensor (i.e., at every $\bfq\in\e{Q}$ the coordinate representation of $\Mbar(\bfq)$ is invertible, thus here precluding the possibility of massless limbs, Assumption~\ref{ass:cml}), we let $d_{\Mbar}:\e{Q}\times\e{Q}\into\Real$ denote the distance metric induced by the Riemannian metric $\angleM{\cdot}{\cdot}$ associated with $\Mbar$ \cite[Ch.~6]{Lee1997}.
For any vector $\dot{\bfq} \in T_\bfq\e{Q}$ we define $\absM{\dot{\bfq}} := \angleM{\dot{\bfq}}{\dot{\bfq}}^{1/2}$.
For any covector $\bff \in T^*_\bfq\e{Q}$ we define $\absMinv{\bff} := \absM{\bff^\#}$, where $\bff^\#\in T_\bfq\e{Q}$ is the vector obtained by ``raising an index'' (in coordinates, $\bff^\# = \Minv\bff^T$) \cite[Ch.~3]{Lee1997}.

\begin{lemma}\label{lem:lip}
If the ambient configuration space $\e{Q}$ is a complete connected Riemannian manifold, 
$\Mbar$ is a nondegenerate inertia tensor, and 
the magnitude of $\Upsilon_I - \Nbar_I$ grows at most linearly with velocity and distance from some (hence any) point in $\e{Q}$,
i.e.\ if there exists $C\in\Real$, $\bfq_0\in\e{Q}$ such that,
\begin{align}
\forall&(\bfq,\dot{\bfq})\in T\e{Q}:
\label{eq:effbddJ}
\absMinv{\Upsilon_I(\bfq,\dot{\bfq}) - \Nbar_I(\bfq,\dot{\bfq})} \le C(1 + \absM{\dot{\bfq}} + d_{\Mbar}(\bfq_0,\bfq)),
\end{align}
then the flow associated with the vector field~\eqref{eq:dyn} is forward complete, i.e.\ the maximal integral curve through any initial condition is defined for all positive time, and hence
Assumption~\ref{ass:complete} is satisfied.
\end{lemma}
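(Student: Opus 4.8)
The plan is to invoke the completeness criterion of \cite[Thm.~10]{Ballard2000} directly, after verifying that the hypotheses of that theorem are met by the growth condition~\eqref{eq:effbddJ}. The essential idea is that forward completeness for a second-order mechanical system on a Riemannian manifold is guaranteed whenever the forcing term cannot drive either the velocity $\absM{\dot{\bfq}}$ or the Riemannian distance $d_{\Mbar}(\bfq_0,\bfq)$ to infinity in finite time, and a linear-growth bound on the force is precisely the classical sufficient condition (the manifold analogue of the global estimate $\|\tilde F(\tilde x)\| \le C(1+\|\tilde x\|)$ that rules out finite-time escape for an ODE on $\Real^n$). Since $\Mbar$ is assumed nondegenerate here, the constrained dynamics~\eqref{eq:dyn} reduce, via Lemma~\ref{thm:dynamics}, to the standard form~\eqref{eq:ddqmass}, so the vector field is exactly a forced geodesic spray and Ballard's framework applies verbatim.

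First I would fix an integral curve $(\bfq(t),\dot{\bfq}(t))$ of~\eqref{eq:dyn} on its maximal forward interval of existence $[0,T)$ and derive an a~priori bound on the ``energy-like'' quantity $\absM{\dot{\bfq}(t)}$. Differentiating $\tfrac12\absM{\dot\bfq}^2 = \tfrac12\angleM{\dot\bfq}{\dot\bfq}$ along the flow and using that the constraint forces $\bfA^T\lambda$ are $\Mbar$-orthogonal to the admissible velocities (so they do no work, consistent with~\eqref{eq:dagprop}) together with the Coriolis term being $\Mbar$-skew, the only contribution is from $\Upsilon_I - \Nbar_I$; by Cauchy--Schwarz in the $\Mbar$-metric this gives $\dt{t}\absM{\dot\bfq} \le \absMinv{\Upsilon_I - \Nbar_I}$. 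Substituting the hypothesis~\eqref{eq:effbddJ} and controlling $d_{\Mbar}(\bfq_0,\bfq(t))$ by $d_{\Mbar}(\bfq_0,\bfq(0)) + \int_0^t \absM{\dot\bfq(s)}\,ds$ (since the path length in the $\Mbar$-metric dominates the geodesic distance) yields a single scalar integral inequality in $u(t):=\absM{\dot\bfq(t)}$ of the form $u(t) \le a + b\int_0^t (1+u(s))\,ds$. A Gr\"onwall estimate then bounds $u$ on every finite interval, so neither the velocity nor, in turn, the distance $d_{\Mbar}(\bfq_0,\bfq(t))$ can blow up as $t\to T$.

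With these bounds in hand I would conclude by a standard escape-lemma argument: completeness of $\e{Q}$ in the $d_{\Mbar}$-metric means closed metric balls are complete, and the velocity bound confines $\dot\bfq$ to a bounded set in each tangent space; hence the trajectory stays in a $d_{\Mbar}$-bounded region with bounded velocity on $[0,T)$, so if $T$ were finite the limit $\lim_{t\to T}(\bfq(t),\dot\bfq(t))$ would exist in $T\e{Q}$ and the solution could be continued past $T$, contradicting maximality. Therefore $T=+\infty$, every maximal integral curve is defined for all positive time, and Assumption~\ref{ass:complete} holds. The main obstacle is the passage from $\Real^n$ intuition to the genuinely Riemannian setting: one must be careful that ``distance'' means the metric $d_{\Mbar}$ induced by $\Mbar$ (not an ambient coordinate norm), that completeness is used to guarantee the balls are the right compactness substitute in the non-compact, non-Euclidean case, and that the force estimate is stated invariantly via $\absMinv{\cdot}$ so that the dual pairing in the energy computation is handled consistently. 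Since~\cite[Thm.~10]{Ballard2000} packages exactly this estimate, the cleanest route is to show~\eqref{eq:effbddJ} matches Ballard's hypothesis and cite the theorem, relegating the Gr\"onwall bookkeeping to a remark.
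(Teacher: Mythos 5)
Your proposal is correct and takes essentially the same route as the paper: the paper's entire proof is the one-line observation that the result ``is simply an application of \cite[Thm.~10]{Ballard2000} in the absence of unilateral constraints,'' which is exactly your primary strategy of matching the growth hypothesis~\eqref{eq:effbddJ} to Ballard's estimate and citing that theorem. Your supplementary Gr\"onwall/escape-lemma sketch is sound and closely mirrors the machinery the paper itself transcribes from \cite{Ballard2000} in Appendix~\ref{app:mech} for the proof of Theorem~\ref{thm:zenolim}, so it adds detail rather than a different idea.
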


\begin{proof}
This is simply an application of \cite[Thm.~10]{Ballard2000} in the absence of unilateral constraints.
\end{proof}

We expect this condition to be met by any model based on a physical system, and is trivially
met if there is a global bound on the magnitude of the applied,  $\Upsilon_I$, and potential, $\Nbar_I$,  forces
(whereas, notice, the necessarily unbounded Coriolis and centripetal forces are accounted for by the Lemma 
and require no further consideration).

Unfortunately this condition assumes that the inertia tensor $\Mbar$ is nondegenerate, precluding the presence of massless limbs (Assumption~\ref{ass:cml}).
Allowing instead for a degenerate inertia tensor but enforcing the unconstrained massless limb assumption (\ref{ass:uml}), we now describe a set of sufficient conditions that ensure Assumption~\ref{ass:complete} holds.

\begin{lemma}\label{lem:lip:ml}
Suppose that in each contact mode $I$ the active constraints are either \emph{holonomic} or \emph{integrable} \cite[Sec.~6.1.1]{book:mls-1994}, meaning that there exists a reduced configuration manifold $\e{Y}_I$ (i.e., generalized coordinates)
such that every point in $\e{Q}_I$ lies in the image of an embedding $h : \e{Y}_I\into\e{Q}_I$ that is invariant under~\eqref{eq:dyn} \cite[Sec.~G]{johnson_selfmanip_2013} and restricted to which the reduced inertia tensor \cite[Eqn.~36]{johnson_selfmanip_2013} is nondegenerate.

If the hypotheses in Lemma~\ref{lem:lip} are satisfied for $\e{Y}_I$, its reduced inertia tensor, and its reduced dynamics \cite[Eqn.~34]{johnson_selfmanip_2013},
and furthermore the vector field $\td{F}_I$ governing unconstrained massless limbs is forward complete and uncoupled 
from 
the massive or constrained coordinates, i.e.,
\eqnn{\label{eq:FI}
\Tq_I = Dh(\Ty_I), \
\dot{\td{\bfq}}_I = \td{F}_I(\td{\bfq}_I),
}
then Assumption~\ref{ass:complete} is satisfied.
\end{lemma}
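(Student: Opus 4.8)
We need to prove Lemma `lem:lip:ml`. The setup: in contact mode $I$, the active constraints are holonomic/integrable, so there's a reduced configuration manifold $\e{Y}_I$ with an embedding $h:\e{Y}_I \to \e{Q}_I$ whose image is invariant under the dynamics, and on which the reduced inertia tensor is nondegenerate. Given the hypotheses of `lem:lip` hold for this reduced system, plus the unconstrained massless dynamics $\td{F}_I$ is forward complete and uncoupled, we want Assumption `ass:complete`: the full vector field is forward complete.

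The key structural fact is the product decomposition $\e{Q} = \e{Q}_I \times \td{\e{Q}}_I$ from footnote `fn:uml`, and within $\e{Q}_I$ the further reduction to $\e{Y}_I$ via the invariant embedding $h$. The dynamics decouple into three pieces: the massless-limb part governed by $\td{F}_I$, and the massive/constrained part which, by invariance of $\im h$, can be pushed down to reduced dynamics on $\e{Y}_I$.

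**Proof plan.**

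The plan is to reduce forward completeness of the full second-order vector field to forward completeness of each decoupled factor, then invoke the already-established results. First I would appeal to the product structure from footnote~\ref{fn:uml}: the state space $T\e{Q}$ (restricted to the active component) splits as $T\e{Q}_I \times T\td{\e{Q}}_I$, and by Assumption~\ref{ass:uml} the $\td{\e{Q}}_I$ dynamics are decoupled, so an integral curve of the combined field is simply the pairing of an integral curve of~\eqref{eq:dyn} on $\e{Q}_I$ with an integral curve of $\td{F}_I$ on $\td{\e{Q}}_I$. A product of curves is defined on the intersection of their maximal domains, which is $[0,\infty)$ precisely when both factors are forward complete. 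The $\td{F}_I$ factor is forward complete by hypothesis~\eqref{eq:FI}, so it remains to handle the $\e{Q}_I$ factor.

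Next I would use the holonomic/integrable hypothesis to transfer the $\e{Q}_I$ dynamics to the reduced manifold $\e{Y}_I$. Since $\im h$ is invariant under~\eqref{eq:dyn} and $h:\e{Y}_I\into\e{Q}_I$ is an embedding, any trajectory starting in $\im h$ stays in $\im h$ and is the image under $Dh$ of a trajectory of the reduced dynamics \cite[Eqn.~34]{johnson_selfmanip_2013} on $\e{Y}_I$. Because $h$ is an embedding (hence a diffeomorphism onto its image) and the reduced inertia tensor is nondegenerate, $\e{Y}_I$ carries a genuine Riemannian metric and the reduced system is exactly of the form covered by Lemma~\ref{lem:lip}. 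By hypothesis the conditions of Lemma~\ref{lem:lip} hold for $\e{Y}_I$, its reduced inertia tensor, and its reduced dynamics, so the reduced flow is forward complete; pushing forward through the embedding $h$, the $\e{Q}_I$-component of the trajectory is defined for all positive time. Combining the two forward-complete factors gives forward completeness of the full field, which is exactly Assumption~\ref{ass:complete}.

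**Main obstacle.**

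The hard part will be making the reduction to $\e{Y}_I$ fully rigorous, specifically verifying that forward completeness on the embedded image $\im h \subset \e{Q}_I$ really does follow from forward completeness of the reduced flow on $\e{Y}_I$. One must check that a trajectory cannot ``escape'' $\e{Q}_I$ by leaving $\im h$ (ruled out by the stated invariance of $\im h$ under~\eqref{eq:dyn}) and that the embedding does not introduce finite-time escape through noncompactness of the image — this is where completeness of $\e{Y}_I$ as a Riemannian manifold and the growth bound~\eqref{eq:effbddJ} inherited via Lemma~\ref{lem:lip} do the work. A secondary subtlety is confirming that the product/uncoupling argument is valid at the level of maximal integral curves: one must ensure the combined maximal interval of existence is the intersection of the factors' maximal intervals, which holds precisely because there is no coupling through the inertia tensor (footnote~\ref{fn:uml}) and~\eqref{eq:FI} makes the $\td{\e{Q}}_I$ evolution autonomous. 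Given that Lemma~\ref{lem:lip} is available as a black box for the reduced system, the remainder is essentially bookkeeping about products of maximal flows.
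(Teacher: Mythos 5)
Your proposal is correct and follows essentially the same route as the paper's proof: apply Lemma~\ref{lem:lip} to the reduced system on $\e{Y}_I$, push the resulting forward-complete flow through $Dh$ (the paper cites \cite[Prop.~3]{Ballard2000} for the fact that $Dh$ maps reduced integral curves to integral curves of~\eqref{eq:dyn}), and use the decoupling hypothesis~\eqref{eq:FI} to handle the unconstrained massless coordinates. Your extra bookkeeping about products of maximal intervals is exactly the part the paper leaves implicit, and your identified obstacle is precisely what the citation to Ballard resolves.
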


\begin{proof}
We seek to define a 
forward-complete flow $\phi_I:[0,\infty)\times T\e{Q}_I\into T\e{Q}_I$ consistent with the vector field in~\eqref{eq:dyn}.
Let $h:\e{Y}_I\into\e{Q}_I$ denote the embedding associated with the reduced coordinates \cite[Sec.~G]{johnson_selfmanip_2013}.
Apply Lemma~\ref{lem:lip} to the reduced system to obtain a forward-complete flow $\td{\phi}_I:[0,\infty)\times T\e{Y}_I\into T\e{Y}_I$.
Then since \cite[Prop.~3]{Ballard2000} implies $Dh$ maps integral curves from the reduced state space to the original,
for all $t\in[0,\infty)$ and $\Ty\in T\e{Y}_I$, defining $\phi(t,Dh(\Ty)) = Dh\circ\td{\phi}(t,\Ty)$ yields the desired forward-complete flow on $\e{Q}_I$.
\end{proof}

Lemmas~\ref{lem:lip} \&~\ref{lem:lip:ml} provide sufficient conditions guaranteeing that certain systems
with either full rank inertia tensors or only holonomic constraints satisfy Assumption~\ref{ass:complete} -- in 
the most general case, however, this remains an assumption. 
We speculate that it is possible to derive a condition analogous to~\eqref{eq:effbddJ} using concepts 
from \emph{singular Riemannian geometry} \cite{Hermann1973} that ensure the existence of a forward-complete 
flow in the presence of nonintegrable constraints and a singular inertia tensor.

\subsection{Impulsive Dynamics}
\label{sec:impactmap}

Define the \emph{touchdown predicate}, $\TD:\KN \times T\calQ \rightarrow \TF$, where $\TF:=\{True,False\}$, as,
\begin{align}
\TD(k,\biq):= \bfa_k(\bfq) =0 \wedge \bfA_k(\bfq) \dot{\bfq} < 0, \label{eq:TDsimp}
\end{align}
so that $\TD(k,\biq)$ is true only at those points $\bfq$ where contact $k$ should be considered for
addition (in a manner to be qualified in Theorem~\ref{thm:ivc} by the 
impulse--velocity complementarity condition,~\eqref{eq:CPIV}, defined below).
Furthermore, define the \emph{new touchdown predicate},
\begin{align}
\NTD(\biq):= \bigvee_{k \in \KN } \TD(k,\biq), \label{eq:NTD}
\end{align}
such that $\NTD(\biq)$ is true only at those states where some new constraint is 
impacting. 

At impact into contact mode $J$, any incoming constraint velocity $\bfA_J\dot{\bfq}$ must be eliminated.
Here, we assume a Newtonian impact law, e.g.\ \cite[Eqn.~3]{chatterjee1998new} or \cite[Eqn.~11.65]{featherstone2008rigid}, that is,
\begin{assumption}[Plastic Impact]
\label{ass:plastic}
Impacts \changed{are} plastic (inelastic), occur instantaneously, and their effect described by an algebraic equation~\eqref{eq:dqp}, defined below. 
\end{assumption}

In general, $\Delta\dot{\bfq}_J:=\dot{\bfq}^+_J-\dot{\bfq}^-$, the instantaneous change in velocity 
from $\dot{\bfq}^-$ in contact mode $I$ before impact to $\dot{\bfq}^+_J$ in contact mode $J$ after impact,
is modeled as,
$\Delta\dot{\bfq}_J= -(1+e) \bfAdT_J\bfA_J \dot{\bfq}^-$
(recall that $\bfAdT:T\calC\rightarrow T\calQ$ maps velocities in the contact frames to velocities of the system state).
The coefficient of restitution, $e$, may be defined in any of the usual
ways, however throughout this paper plastic impact ($e=0$) is assumed.
We restrict to plastic impacts as we believe it to be more relevant to most robotics applications, and
since ambiguities arise when an elastic impact occurs in a system with multiple active constraints: different choices of impact model can yield distinct post-impact velocities (see Section~\ref{sec:lit:impact}).
For plastic impacts, the \emph{post-impact velocity} in mode $J$ is,
\begin{align}
\dot{\bfq}^+_J = (\Id - \bfAdT_J\bfA_J)\dot{\bfq}^- = \Md_J \Mbar \dot{\bfq}^-, \label{eq:dqp}
\end{align}
where the final simplification follows from~\eqref{eq:dagprop2} and matches \cite[Eqn.~9]{Westervelt_Grizzle_Koditschek_2003}.
The \emph{body impulse} in configuration coordinates is, 
\eqnn{\label{eq:P_J}
\qimp_J := -\Mbar (\dot{\bfq}^+_J - \dot{\bfq}^-). 
}
The \emph{contact impulse} (i.e., the impulse at the contact points that induces the desired change in velocity to agree
with the new contact mode $J$) is,
\begin{align}
\limp_{J} := \bfAd_J \qimp_J = \bfAd_J\Mbar\bfAdT_J\bfA_J \dot{\bfq}^- = - \Lambda_J \bfA_J \dot{\bfq}^- = \bfAd_J \Mbar \dot{\bfq}^-,  \label{eq:impulse}
\end{align}
where recall that $\bfA_J$, $\bfAd_J$, $\Mbar$, and $\Lambda_J$ are functions of the state $\bfq$ (which does
not change during impact, i.e.\ $\bfq^+=\bfq^-$), and the impulses, $\qimp_J$ and $\limp_{J}$, are 
also functions of the incoming velocity, $\dot{\bfq}^-$.
The final simplification arises from~\eqref{eq:dagprop2}, matches \cite[Eqn.~10]{Westervelt_Grizzle_Koditschek_2003}, and is used
in Section~\ref{sec:pseudoimpulse}.

\begin{lemma}
\label{thm:impulse}
When $\Mbar$ is invertible, contact impulse~\eqref{eq:impulse} into contact mode $J$ is
equivalent to the non-degenerate plastic impact law,
\begin{align}
&\limp_J = (\bfA_J \Mbar^{-1} \bfA_J^T)^{-1} \bfA_J \dot{\bfq}^-, \label{eq:plammass}
\end{align}
as listed e.g.\ in \cite[Eqn.~3]{chatterjee1998new}.
\end{lemma}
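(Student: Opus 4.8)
The plan is to obtain the result by direct substitution, exactly paralleling the proof of Lemma~\ref{thm:dynamics}. The definition~\eqref{eq:impulse} already records a chain of equalities, one link of which reads $\limp_J = -\Lambda_J \bfA_J \dot{\bfq}^-$. Since~\eqref{eq:plammass} differs from this expression only in that $-\Lambda_J$ is replaced by the explicit factor $(\bfA_J \Minv \bfA_J^T)^{-1}$, it suffices to insert the closed-form expression for $\Lambda_J$ that is available whenever $\Mbar$ is invertible.

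First I would invoke~\eqref{eq:Lambda}, the explicit solution to~\eqref{eq:astardef} in the nondegenerate case, which gives $\Lambda_J = -(\bfA_J \Minv \bfA_J^T)^{-1}$. Substituting this into the third expression of~\eqref{eq:impulse} yields
\eqn{
\limp_J = -\Lambda_J \bfA_J \dot{\bfq}^- = (\bfA_J \Minv \bfA_J^T)^{-1} \bfA_J \dot{\bfq}^-,
}
which is precisely~\eqref{eq:plammass}. No further manipulation is required once~\eqref{eq:Lambda} is in hand, so at the level of this lemma the argument is a one-line substitution.

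Consequently, the only substantive content lies not in this statement but in the justification of~\eqref{eq:Lambda} itself, i.e.\ in showing that the $(2,2)$ block of the inverse of the block matrix in~\eqref{eq:astardefe} equals $-(\bfA_J \Minv \bfA_J^T)^{-1}$ when $\Mbar$ is invertible. I would treat that Schur-complement computation (carried out in Appendix~\ref{app:la} and already asserted above as~\eqref{eq:Md}--\eqref{eq:Lambda}) as an available prior result. The main care point, and the closest thing to an obstacle, is ensuring that $\bfA_J \Minv \bfA_J^T$ is actually invertible: this follows from the constant-rank hypothesis on $\bfA_J$ (Assumption~\ref{ass:rank}), which gives $\bfA_J$ full row rank, together with positive-definiteness of $\Minv$ for a nondegenerate inertia tensor, so that the Schur complement is well-defined and the inversion in~\eqref{eq:plammass} makes sense. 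The lemma then follows immediately.
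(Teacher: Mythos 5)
Your proposal is correct and takes essentially the same route as the paper: the paper's proof (Appendix~\ref{app:pfimp}) likewise substitutes the Schur-complement expression~\eqref{eq:Lambda} for $\Lambda_J$ into $\limp_J = -\Lambda_J\bfA_J\dot{\bfq}^-$ from~\eqref{eq:impulse}, merely phrasing the substitution through the $\epsilon$-limit defining~\eqref{eq:astardef} before evaluating at $\epsilon=0$. Your added remark on why $\bfA_J\Minv\bfA_J^T$ is invertible (full row rank of $\bfA_J$ from Assumption~\ref{ass:rank} plus positive-definiteness of $\Minv$) is consistent with the discussion in Appendix~\ref{app:la}.
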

As with the proof of Lemma~\ref{thm:dynamics}, the result may be seen by substituting~\eqref{eq:bfAdT} or~\eqref{eq:Lambda}, the
explicit solution to~\eqref{eq:astardef} when $\Mbar$ is invertible, into~\eqref{eq:impulse}, as worked
out in Appendix~\ref{app:pfimp}.

\subsection{Complementarity}
\label{sec:complementarity}

We now introduce the classical complementarity problems for forces and impulses at the contact points, and provide a reformulation that allows massless limbs.
We begin with a general statement of the complementarity property \citep[as in e.g.][]{ingleton1966problem,cottle1968problem,lotstedt1982mechanical,van1998complementarity}, then subsequently specialize in Sections~\ref{sec:fac} and~\ref{sec:ivc} to formulations of force--acceleration and impulse--velocity complementarity.
Both versions have the general form of seeking real vectors $\bfy$ and $\bfz$ such that,
\begin{align}
\bfy \geq 0, \qquad \bfz \geq 0, \qquad \bfy^T \bfz = 0, \label{eq:compl}
\end{align}
(where for a vector $\bfy$, $\bfy \geq 0 \Leftrightarrow \bfy_i \geq 0 \forall\; i$) 
subject to some problem-specific constraints.
While the most general problem is \emph{uncoupled}, that is $\bfy$ and $\bfz$ may be
chosen arbitrarily so long as they satisfy \eqref{eq:compl}, the cases we consider here
are \emph{coupled} by these problem-specific constraints \cite[Sec.~3]{pang1996complementarity}.
In the linear complementarity problem (LCP), for instance, the coupling constraint has the form $\bfz := \bfA \bfy + \bfc$ 
\citep[e.g.][Eqn.~8]{brogliato2002numerical}.
The functional relationships between $\bfy$ and $\bfz$ for the complementarity problems in this paper 
is in general nonlinear (as discussed in the rest of this section). 
Since the relation of interest 
is generally problem-specific and index dependent in an essential way, we introduce temporarily 
an abstract scalar relation, $\unrhd$ instead of $\succeq$ or $\geq$ and similarly $\rhd$ 
instead of $\succ$ or $>$, whose different instantiations \changed{are prescribed}
in the force--acceleration and impulse--velocity versions of the problem, respectively. 
\changed{For simplicity of notation we use $=$ as the corresponding equality relation.}

Solutions to this problem produce a natural bipartition $(J,J^C)$ on some 
index set, $\calI$, the \emph{scope} (some subset of the universal scope $\calK$, to be discussed below),
where $J= \{j \in \calI: z_j = 0\}$ and $J^C = \calI \backslash J = \{j\in \calI:z_j \rhd 0\}$. 
Here, the role of $\bfy$ and $\bfz$ \changed{are played} by physically determined functions 
of a specified (``incoming'') state,  $\biq^-=(\bfq^-,\dot \bfq^-)$, to yield an ``outgoing'' 
bipartition $(J, J^C)$ of the indexing scope, $\calI$.
The indexing scope \changed{is a} function only of the incoming continuous state, $\calI:T\calQ \rightarrow 2^\calK$,
as defined in~\eqref{eq:IScope}.

It should now be clear that for this paper  
the complementarity problem is reduced to finding the unknown bipartition $(J, J^C)$, 
also known as the \emph{mode selection problem} \cite[]{van1998complementarity},
as opposed to finding the values of the two complementarity vectors directly, e.g.\ \cite[]{cottle1968problem}.
Namely, given an index set $\calI$, two functions $Y,Z: 2^\calI \times T\calQ \rightarrow \Real^{|\calI|}$ that map a subset $J \in 2^\calI$ 
into a Euclidian space with dimension equal to the size of the index set,
and a generic relation $\rhd$ (to be instantiated as $\succ$ or $>$ in the following sections\footnote{\changed{Recall that the
relation may, in the case of $\succ$, depend on the vector field at that point.}}), we require a solution to a set of constraint equations of the form,
\begin{align}
&Y_j(J, \biq^-) \unrhd 0, \qquad Z_j(J, \biq^-) = 0, \qquad \forall \; j \in \calI \cap J, \label{eq:YjJ}\\
&Y_k(J, \biq^-) = 0, \qquad Z_k(J, \biq^-) \rhd 0, \qquad \forall \; k \in \calI \backslash J, \label{eq:YkJ}
\end{align}
(where by definition, $\calI \cap J = J$).
For the complementarity problems of interest in this paper, the 
equality constraints in~\eqref{eq:YjJ}--\eqref{eq:YkJ} hold for all arguments 
$(J,\biq^-) \in 2^\calI \times T\calQ$ by construction (enforced, e.g., by the flow~\eqref{eq:dyn} 
in the force--acceleration version, and by the impact map~\eqref{eq:impulse} in the impulse--velocity version). 

The complementarity problem as stated thus far is not explicitly coupled \cite[Sec.~3]{pang1996complementarity}, i.e.\ it
places no requirements on the relationship between $Y_k$ and $Z_k$ other than their common dependence on $J$ and $\biq^-$,
which is why existence and uniqueness properties are challenging to define in general. 
Furthermore, this necessitates the evaluation of both $Y_k$ and $Z_k$ for constraints $k$ that are not in $J$. 
With the possibly massless limbs in our setting, the evaluation of $Z_k$ \changed{is not always possible} as
the concept of a separation velocity or acceleration is poorly defined (once
such a contact point has lifted off the ground the corresponding joints must be dropped from the state according
to the unconstrained massless limb assumption, \ref{ass:uml}).
Thus the specifics of $Z_k$ in the problems considered in this paper necessitate an alternate formulation
that takes advantage of the coupling between $Y_k$ and $Z_k$, as
the inequality constraints have the property that,
\begin{align}
Z_k(J,\biq^-) \rhd 0 \Leftrightarrow Y_k(J \cup \{k\},\biq^-) \nrhd 0, \label{eq:ZkJ}
\end{align}
(the importance of the $(J \cup \{k\})$ mode was first noted in \cite[Eqn.~1.7.3]{ingleton1966problem}).
This suggests the combined expression,
\begin{align}
(k \in J) \Leftrightarrow (Y_k(J\cup \{k\},\biq^-) \unrhd 0), \qquad \forall \; k \in \calI, \label{eq:YkJa}
\end{align}
which is equivalent to~\eqref{eq:YjJ} \&~\eqref{eq:YkJ},
\begin{lemma}
\label{thm:comp}
The separate relational statements of the complementarity problem,~\eqref{eq:YjJ}--\eqref{eq:YkJ}, are equivalent to a single 
biconditional statement,~\eqref{eq:YkJa}, provided that the complementary functions $Y$ and $Z$ satisfy~\eqref{eq:ZkJ}.
\end{lemma}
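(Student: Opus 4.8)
The plan is to prove the biconditional statement~\eqref{eq:YkJa} is equivalent to the pair~\eqref{eq:YjJ}--\eqref{eq:YkJ} by reasoning on a per-index basis, splitting into the two cases $k \in J$ and $k \in \calI \backslash J$, and using the coupling hypothesis~\eqref{eq:ZkJ} together with the equality constraints that hold by construction. Fix an arbitrary solution bipartition $(J, J^C)$ of the scope $\calI$, where $J^C = \calI \backslash J$, and fix an index $k \in \calI$. Since every index of $\calI$ lies in exactly one of $J$ or $J^C$, it suffices to verify that~\eqref{eq:YkJa} holds for $k$ in each case, and conversely that~\eqref{eq:YkJa} forces the correct membership.

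First I would handle the forward direction. Suppose $k \in J$. Then by the left conjunct of~\eqref{eq:YjJ} we have $Y_k(J, \biq^-) \unrhd 0$, and since $k \in J$ we have $J \cup \{k\} = J$, so $Y_k(J \cup \{k\}, \biq^-) \unrhd 0$; hence the right side of~\eqref{eq:YkJa} holds, as required. Conversely suppose $k \in \calI \backslash J$. Then by the right conjunct of~\eqref{eq:YkJ} we have $Z_k(J, \biq^-) \rhd 0$, and applying the coupling property~\eqref{eq:ZkJ} gives $Y_k(J \cup \{k\}, \biq^-) \nrhd 0$, i.e. the right side of~\eqref{eq:YkJa} fails. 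Thus in both cases the biconditional~\eqref{eq:YkJa} correctly reports membership of $k$ in $J$, establishing that~\eqref{eq:YjJ}--\eqref{eq:YkJ} imply~\eqref{eq:YkJa}.

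For the reverse direction I would argue that~\eqref{eq:YkJa}, together with the equality constraints $Z_j(J,\biq^-) = 0$ for $j \in J$ and $Y_k(J,\biq^-) = 0$ for $k \notin J$ (which, as noted in the text, hold for all arguments by construction via the flow~\eqref{eq:dyn} or the impact map~\eqref{eq:impulse}), recovers the full system~\eqref{eq:YjJ}--\eqref{eq:YkJ}. If $k \in J$, the biconditional yields $Y_k(J,\biq^-) = Y_k(J \cup \{k\},\biq^-) \unrhd 0$, recovering the inequality in~\eqref{eq:YjJ}, while the equality $Z_k(J,\biq^-)=0$ is supplied by construction. If $k \notin J$, the biconditional yields $Y_k(J \cup \{k\},\biq^-) \nrhd 0$, which by~\eqref{eq:ZkJ} is equivalent to $Z_k(J,\biq^-) \rhd 0$, recovering~\eqref{eq:YkJ}, and $Y_k(J,\biq^-)=0$ is again supplied by construction.

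The logical skeleton here is entirely routine once the bookkeeping is set up; the only real subtlety — and the step I would take most care over — is the role of the equality constraints. The biconditional~\eqref{eq:YkJa} only carries information about the \emph{inequality} relations, so the equivalence genuinely requires that the equalities $Z_j = 0$ (for $j \in J$) and $Y_k = 0$ (for $k \notin J$) are guaranteed separately, holding identically in $(J, \biq^-)$ by the physical construction rather than being derived. I would therefore state explicitly at the outset that these equalities are assumed to hold for all arguments by construction, so that the content of the lemma is precisely the equivalence of the \emph{inequality} portions, which is exactly what the coupling hypothesis~\eqref{eq:ZkJ} delivers.
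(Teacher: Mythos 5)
Your proof is correct and follows essentially the same route as the paper's: a per-index case split using $J \cup \{k\} = J$ when $k \in J$ and the coupling property~\eqref{eq:ZkJ} when $k \in \calI \backslash J$, with the equality constraints discharged by construction. The paper's version is simply terser, treating both directions of the equivalence at once; your explicit handling of the converse and of the equalities is a faithful elaboration, not a different argument.
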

\begin{proof}
First note that for $k \in J$ it is trivially true that $J \cup \{k\} = J$ and 
so~\eqref{eq:YkJa} simplifies to the first condition of~\eqref{eq:YjJ}.
For $k \notin J$, the expression in~\eqref{eq:YkJa} along with the substitution of~\eqref{eq:ZkJ}
reduces to the second condition of~\eqref{eq:YkJ}.
\end{proof}

Expressing \eqref{eq:YkJa} as a predicate $\PRED:2^\calK \times T\calQ \rightarrow \TF$, 
\begin{align}
&\PRED(J,\biq^-) = 
\left(J \subseteq \calI \right) \bigwedge_{k \in \calI} (k \in J) \Leftrightarrow (Y_k(J\cup\{k\},\biq^-) \unrhd 0). \nonumber
\end{align}
We denote by,
\begin{align}
\CP_{\PRED}:&  T\calQ  \rightarrow 2^\calK : \biq^- \mapsto J, \label{eq:CP}
\end{align}
the implicit function that solves this set of constraints for the unknown required bipartition,
where $\PRED$ varies with the particular instances as determined by the 
appropriate form of $Y_k$. Note that while the codomain is $2^\calK$, the solution is always
a member of $2^\calI$.

The existence of this implicit function~\eqref{eq:CP} (i.e., the existence and
uniqueness of a solution, $J$, to the mode selection problem) is in the most general cases
an additional assumption\footnote{However note that the remainder of this paper only requires a unique \emph{choice}
of a solution that satisfies the predicate if multiple solutions exist.}
 (see Assumption~\ref{ass:fac} and~\ref{ass:ivc}, below), 
although the specific complementarity problems in this section (i.e., based on the relationship of
the specific functions $Y$ and $Z$ used in these cases) in the absence of friction reduce down
to the conventional LCP problem and so existence and
uniqueness has been proven in e.g.\ \cite{ingleton1966problem,cottle1968problem,lotstedt1982mechanical,van1998complementarity}. 

The motivating literature and related work discussed in Section~\ref{sec:lit}
generally imposes two complementarity conditions on rigid body dynamics models. 
The \emph{force--acceleration} (FA) variant of~\eqref{eq:YjJ}--\eqref{eq:YkJ}, presented in~\eqref{eq:Ujl}--\eqref{eq:Ukl}, 
stipulates that there cannot be both a continuous time contact force and a separation
acceleration at the same contact point, and is widely considered to 
arise from fundamental physical reasoning.
In the present setting, FA complementarity 
governs exclusively the nature of \emph{liftoff} events (and extended in Section~\ref{sec:friction} to stick/slip events) wherein the number of active 
contacts (i.e., cardinality of the mode set) is reduced for reasons discussed in Section~\ref{sec:fac}.
In contrast, during instantaneous impact events the contact forces have no time to perform work. Instead,
the \emph{impulse--velocity} (IV) variant of~\eqref{eq:YjJ}--\eqref{eq:YkJ}, 
presented in~\eqref{eq:UjP}--\eqref{eq:UkP}, precludes a simultaneous impact-induced 
contact impulse and separation velocity at the same contact point. This constraint is 
known not to follow inevitably from the rational mechanics of rigid body models \cite[]{chatterjee1999realism}, 
but represents a commonly exploited algorithmic simplification that we embrace 
in this inelastic model at the possible expense of consistency with elastic impact models in the limit.
In the present setting, IV complementarity governs exclusively the nature of \emph{touchdown} 
events wherein one or more new contacts become active (i.e., cardinality of the 
mode set is increased) for reasons discussed in Section~\ref{sec:ivc}.

\subsubsection{Force--Acceleration (FA) Complementarity}
\label{sec:fac}

For continuous time contact forces, when $\NTD(\biq)$ is false and therefore $\qimp=0$ when one or more contact 
constraints violate the unilateral constraint cone\footnote{ Recall from Section~\ref{sec:manip}
that $\bfU$ in the normal direction is $-1$ according to the frame conventions of 
\cite[Eqn.~76,~78]{johnson_selfmanip_2013}.}~$\bfU$, some constraints lift 
off and must be removed from the set of active constraints, resulting in a transition to a new mode.
Determining that next mode sets up a complementarity 
problem over the existing contact mode $I$ between the unilateral constraint cone, $\bfU_k(\lambda)$, if the contact is kept,
and the separation acceleration $\frac{d}{dt} \bfA_k \dot{\bfq}=\bfA_k \ddot{\bfq}+\dot{\bfA}_k \dot{\bfq}$ 
if it is removed 
(recall that as an active constraint the state velocity is initially $\bfA_k\dot{\bfq}=0$).
The full scope of contact constraints that should be considered is the set of all contacts which are ``touching'', 
i.e.\ those whose normal direction component have zero contact distance and a non-separating velocity\footnote{ Note that thus
far only normal direction constraints have been considered, however Section~\ref{sec:friction} extends this to include
tangential (sliding friction) constraints and this scope is defined in this general way in order to apply there as well.},
\begin{align}
\calI(\biq):=&\{i \in \calK: \bfa_{\alpha(i)}(\bfq)=0 \wedge \bfA_{\alpha(i)}(\bfq)\dot{\bfq}\leq 0 \} \label{eq:IScope}\\
 =&\{i \in \calK: \left(\bfa_{\alpha(i)}(\bfq)=0 \wedge \bfA_{\alpha(i)}(\bfq)\dot{\bfq} = 0\right)
\vee \TD(\alpha(i),\biq) \}. 
\end{align}
Recall that force--acceleration complementarity only holds when $\NTD(\biq)$ is false and so the final
condition applies here. Furthermore, while the full scope is formally required and does not depend on the active mode, numerically
it suffices to check $I\subseteq \calI$ -- this reduced scope eliminates the numerical
challenge of checking the exact equality of~\eqref{eq:IScope}. 
\changed{See Section~\ref{sec:dis:numerical} for a further discussion of this simplification
and other numerical implementation details.}

For transition into $J$, consider contact force~\eqref{eq:ldyn} both in $J$ but also in
the alternative mode $J\cup\{k\}$ where contact $k$ is maintained (the reason for this alternative
mode is clear in Theorem~\ref{thm:fac}),

\begin{align}
&\bfU_j(\lambda_J) \succeq 0, \bfA_j \ddot{\bfq}+\dot{\bfA}_j \dot{\bfq}\equiv 0, \quad \quad \forall\;j \in \calI \cap J,\label{eq:Ujl}\\
&\bfU_k(\lambda_J)\equiv0, \bfU_k(\lambda_{J\cup\{k\}}) \prec 0, \quad\quad \forall\;k \in \calI \backslash J,\label{eq:Ukl}
\end{align}
where the identically zero constraints
are guaranteed to hold in consequence of the dynamics governing mode $J$, namely, 
the invariance of the flow ($\bfA_j \ddot{\bfq}+\dot{\bfA}_j \dot{\bfq}\equiv 0 \;\forall\;j \in J$ by~\eqref{eq:dyn}) and the
Lagrange multipliers ($\bfU_k(\lambda_J)\equiv0 \;\forall\; k \notin J$ by~\eqref{eq:ldyn}).
Note the importance here of the trending positive/negative conditions (Definition~\ref{def:through}) --
in general it is not sufficient to simply check the sign of the contact force but possibly higher 
derivatives as well. For example, in Figure~\ref{fig:ptex}, cases (c) and (d), assume the particle is sliding
along the constraint from left to right. At the moment the particle reaches the origin, the contact force is zero. 
However in (c) the contact force is trending negative and the constraint should be removed, while in (d) the
contact force is trending positive and it should be maintained.

Constraints~\eqref{eq:Ujl}, \eqref{eq:Ukl} can be simplified into a form analogous to~\eqref{eq:ZkJ}, hence, 
by Lemma~\ref{thm:comp},
\begin{align}
(k \in J) \Leftrightarrow (\bfU_k(\lambda_{J\cup\{k\}})\succeq 0), \qquad \forall \; k \in \calI, \label{eq:Ukla}
\end{align}
or as the predicate $\FA:2^\calK \times T\calQ \rightarrow \TF$, 
\begin{align}
\FA(J,\biq^-) = \left(J\subseteq\calI\right) \bigwedge_{k \in \calI} (k \in J) \Leftrightarrow (\bfU_k(\lambda_{J\cup\{k\}})\succeq 0), \label{eq:CPFA}
\end{align}
for which we write the associated implicit function solution,
following~\eqref{eq:CP}, as, $\CP_{\FA}(\biq^-)=J$. 

This formulation of the force--acceleration complementarity problem is required to allow for
massless limbs, for which a separation acceleration is not well defined. However when the separating acceleration is
defined,
\begin{theorem}
\label{thm:fac}
The non-penetrating acceleration condition at a contact $k$ after liftoff into mode $J$, 
$\bfA_k \ddot{\bfq}+\dot{\bfA}_k \dot{\bfq}\succ 0$ 
(when such an acceleration is well defined), 
is equivalent to a trending negative contact force
$\bfU_{k}(\lambda_K)\prec0$ in mode $K:=J \cup \{k\}$. In other words,~\eqref{eq:Ujl}--\eqref{eq:Ukl} are 
equivalent to the usual formulation \citep[e.g.][Eqn.~10]{brogliato2002numerical},
\begin{align}
&\bfU_j(\lambda_J) \succeq 0, \bfA_j \ddot{\bfq}+\dot{\bfA}_j \dot{\bfq} \equiv 0, \quad \quad \forall\;j \in \calI \cap J,\label{eq:Ujl2}\\
&\bfU_k(\lambda_J)\equiv0, \bfA_k \ddot{\bfq}+\dot{\bfA}_k \dot{\bfq} \succ 0, \quad\quad  \forall\;k \in \calI \backslash J.\label{eq:Ukl2}
\end{align}
\end{theorem}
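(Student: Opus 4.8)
The plan is to reduce the claimed equivalence to a single algebraic identity relating the separation acceleration of contact $k$ evaluated along the mode-$J$ flow to the $k$-component of the contact force evaluated in the enlarged mode $K := J\cup\{k\}$, and then to transfer the trending relation across a strictly positive scalar multiplier using Lemma~\ref{lem:twofun}. Observe first that \eqref{eq:Ujl} and \eqref{eq:Ujl2} are literally identical, so the entire content of the theorem is the equivalence of the second condition in \eqref{eq:Ukl}, namely $\bfU_k(\lambda_{J\cup\{k\}})\prec 0$, with the second condition in \eqref{eq:Ukl2}, namely $\bfA_k\ddot{\bfq}+\dot{\bfA}_k\dot{\bfq}\succ 0$.

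First I would compute the separation acceleration at $k$ by substituting the mode-$J$ acceleration \eqref{eq:dyn} into $\bfA_k\ddot{\bfq}+\dot{\bfA}_k\dot{\bfq}$, and in parallel extract the $k$-component of $\lambda_K$ from \eqref{eq:ldyn}. The objective of this step is to prove that the two resulting scalar expressions coincide up to multiplication by the scalar $g(\bfq):=\bfA_k\Md_J\bfA_k^T$ (and the fixed sign carried by the unilateral cone $\bfU_k$, which by the frame convention of Section~\ref{sec:manip} is $-1$ in the normal direction); concretely I expect the identity $\bfA_k\ddot{\bfq}+\dot{\bfA}_k\dot{\bfq} = -\,g(\bfq)\,\bfU_k(\lambda_K)$. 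The cleanest route is to establish this first in the nondegenerate regime where the acceleration is well defined, using the explicit formulas \eqref{eq:Md}--\eqref{eq:Lambda} together with Lemmas~\ref{thm:dynamics} and~\ref{thm:impulse}: there $g(\bfq)$ emerges as the Schur complement $\bfA_k\Mbar^{-1}\bfA_k^T-\bfA_k\Mbar^{-1}\bfA_J^T(\bfA_J\Mbar^{-1}\bfA_J^T)^{-1}\bfA_J\Mbar^{-1}\bfA_k^T$, which is exactly $\bfA_k\Md_J\bfA_k^T$ and which is the block of $\Lambda_K$ tying the $k$-multiplier to the $J$-constrained inverse inertia. The orthogonality relations \eqref{eq:dagprop}--\eqref{eq:dagprop2} are the main tool for collapsing the cross terms.

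Next I would verify that $g(\bfq)>0$. This is where the structural hypotheses enter: under Assumptions~\ref{ass:rank} and~\ref{ass:cml} the bordered matrix \eqref{eq:astardefe} for mode $J$ is invertible, so $\Md_J$ is the genuine inverse inertia on the $J$-constrained manifold, and since $\bfA_k$ is independent of the $J$-constraints (Assumption~\ref{ass:rank}) any admissible motion in the $\bfA_k$ direction excites strictly positive kinetic energy; hence $\bfA_k\Md_J\bfA_k^T>0$. This is precisely the content of the qualifier ``when such an acceleration is well defined'': it is the hypothesis guaranteeing that $g(\bfq)$ is finite and strictly positive rather than degenerate, and it is what lets me invoke the limiting characterization \eqref{eq:astardef} so that the identity of the previous paragraph persists even when $\Mbar$ itself is singular.

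Finally, with $\bfA_k\ddot{\bfq}+\dot{\bfA}_k\dot{\bfq}=-\,g(\bfq)\,\bfU_k(\lambda_K)$ and $g(\bfq)>0$ in hand, Lemma~\ref{lem:twofun} (invariance of the trending relation under multiplication by a strictly positive smooth function) together with the defining symmetry $h\succ 0\Leftrightarrow -h\prec 0$ from Definition~\ref{def:through} yields $\bfA_k\ddot{\bfq}+\dot{\bfA}_k\dot{\bfq}\succ 0\Leftrightarrow\bfU_k(\lambda_K)\prec 0$ along the mode-$J$ vector field. This establishes the equivalence of \eqref{eq:Ukl} with \eqref{eq:Ukl2}, and hence of the two formulations. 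I expect the main obstacle to be Step~1: organizing the block-matrix algebra so that the single positive factor $g(\bfq)$ is cleanly isolated; once that Schur-complement factorization is in place, the passage through Lemma~\ref{lem:twofun} is immediate.
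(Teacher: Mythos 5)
Your proposal is correct and takes essentially the same route as the paper's proof: the paper likewise reduces everything to the identity $\bfA_k\ddot{\bfq}+\dot{\bfA}_k\dot{\bfq} = -\paren{\bfA_k\Md_J\bfA_k^T}\bfU_k(\lambda_K)$ (its computation~\eqref{eq:uklden}--\eqref{eq:ddaul}), establishes that the scalar $\bfA_k\Md_J\bfA_k^T$ is positive via the Schur complement~\eqref{eq:SAdef} of Appendix~\ref{app:la}, and then transfers the trending relation with Lemma~\ref{lem:twofun} exactly as you describe. The only minor difference is mechanical: the paper gets the identity directly from the block expansion~\eqref{eq:finalbreakout} of the bordered inverse, which is valid even when $\Mbar$ is singular, whereas you derive it in the nondegenerate regime from~\eqref{eq:Md}--\eqref{eq:Lambda} and then pass to the limit via~\eqref{eq:astardef} in the style of Lemmas~\ref{thm:dynamics} and~\ref{thm:impulse} -- both are sound.
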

\begin{proof}
Recall from Section~\ref{sec:not} that $\pi_{ I}$ is the canonical projection 
onto the linear subspace spanned by the coordinate axes indexed by $I$, and 
assume without loss of generality that $k$ is the final index in $K$ such that,
\begin{align}
\bfA_K &:= \pi_K \bfA_\calK = \left[ \begin{array}{c} \bfA_J\\ \bfA_k\end{array}\right] = \left[ \begin{array}{c} \pi_J \bfA_\calK\\ \pi_k\bfA_\calK\end{array}\right]. \label{eq:bfAK}
\end{align}
The constraint cone, following \cite[Eqn.~78]{johnson_selfmanip_2013}, applied to the contact forces,~\eqref{eq:ldyn}, is (see 
Appendix~\ref{app:la} for details, in particular~\eqref{eq:finalbreakout} expanding $\bfAd_K$ and $\Lambda_K$),
\begin{align}
\bfU_{k}(\lambda_K) &=-\pi_k\left( \bfAd_K \left(\Upsilon  - \Cbar\dot{\bfq} - \Nbar\right) - \Lambda_K \dot{\bfA}_K \dot{\bfq}\right)\\
&= -\frac{\bfA_k \Md_J}{\bfA_k \Md_J \bfA_k} \left(\Upsilon  - \Cbar\dot{\bfq} - \Nbar\right)
- 
\frac{1}{\bfA_k\Md_J \bfA_k^T}\left[ \begin{array}{cc}   \bfA_k \bfAdT_J &  -1 \end{array}\right]
\left[ \begin{array}{c} \dot{\bfA}_J\\ \dot{\bfA}_k\end{array}\right]\dot{\bfq}\label{eq:fasub}\\
&= -\frac{\bfA_k \Md_J \left(\Upsilon  - \Cbar\dot{\bfq} - \Nbar\right)- \bfA_k \bfAdT_J\dot{\bfA}_J\dot{\bfq} + \dot{\bfA}_k\dot{\bfq}}{\bfA_k\Md_J \bfA_k^T}, \label{eq:uklden}
\end{align}
while the separating acceleration for constraint $k$ in mode $J$ is, using~\eqref{eq:dyn},
\begin{align}
\bfA_k\ddot{\bfq}_J +\dot{\bfA}_k \dot{\bfq}_J&= \bfA_k \left(\Md_J \left(\Upsilon  - \Cbar\dot{\bfq} - \Nbar\right) - \bfAdT_J \dot{\bfA}_J \dot{\bfq}\right) +\dot{\bfA}_k \dot{\bfq} 
= - (\bfA_k\Md_J \bfA_k^T) \bfU_{k}(\lambda_K).  \label{eq:ddaul}
\end{align}
Since the denominator in~\eqref{eq:uklden} is a positive scalar function of 
state (as shown in Appendix~\ref{app:la},~\eqref{eq:SAdef}), by Definition~\ref{def:through} 
and Lemma~\ref{lem:twofun}
a trending positive separation acceleration, $\bfA_{k} \ddot{\bfq} + \dot{\bfA}_{k} \dot{\bfq} $ implies a 
trending negative contact force, $\bfU_{k}(\lambda_K)$, and vice-versa, and 
therefore~\eqref{eq:Ujl}--\eqref{eq:Ukl} are 
equivalent conditions to~\eqref{eq:Ujl2}--\eqref{eq:Ukl2}. 
\end{proof}

Furthermore, with or without a full rank inertia tensor, we assume the existence 
of a unique solution to the force--acceleration complementarity problem,
\begin{assumption}[Force--Acceleration Complementarity]
\label{ass:fac}
The force--acceleration complementarity constraints, $\FA$, \eqref{eq:CPFA}, always 
admit an implicit function, $\CP_{\FA}$, over the entire state space $T\calQ$, 
even under the frictional properties that follow Assumption~\ref{ass:friction}.
That solution correctly captures the behavior of the physical system.
\end{assumption}
While there has been a long line of literature (e.g., \cite[Ex.~3.3]{van1998complementarity}\footnote{ In
the language of that paper, this is a \emph{Dynamic Complementarity Problem} (DCP), and note 
that \cite[Eqn.~33]{van1998complementarity} asserts complementarity with the base constraint (which they call $y = C(q)$), but
here the position and velocity are already zero, i.e.\ $\forall\; j \in I, \bfa_j(\bfq) = \bfA_j(\bfq)\dot{\bfq} =0$, and so the acceleration
is the first degree that must be checked.})
that proves that this is always true for independent, plastic, frictionless contacts, no result
has been found that covers the limited frictional conditions introduced in~\ref{sec:friction}.
We impose this condition in support of the Theorems in Section~\ref{sec:hyb}.

\subsubsection{Impulse--Velocity (IV) Complementarity}
\label{sec:ivc}

Impact at one contact location can cause another contact to break, as the contact impulse
must obey the unilateral constraint cone,~$\bfU_j(\limp_{J})\geq 0\;\forall\;j\in J$, i.e.\ both that the impulse in the normal direction
be positive (non-adhesive) and that the tangential impulse lie in the friction cone \cite[]{chatterjee1998new}.
Any contact point that would have violated that requirement must be dropped from the active contact mode.

In addition the post-impact velocity must not allow the removed
contact point to leave with a penetrating velocity (i.e., the impulse cannot result in a velocity ``into'' the surface).
However in the case of massless legs a positive separation velocity is always achievable. 
As an alternative requirement that is based only on impulses\footnote{ Note that this formulation
based only on impulses also simplifies the inclusion of the pseudo-impulse condition~\eqref{eq:PIV}.}, 
consider the contact impulse~\eqref{eq:impulse}, $\limp_{J}$ (associated with the passage from contact $I$ to contact $J$),
but also the contact impulse $\limp_{J\cup\{k\}}$ (associated with the 
passage from contact $I$ to alternative mode ${J\cup\{k\}}$ where contact $k$ is maintained).
These impulses, along with the post-impact velocity~\eqref{eq:dqp}, $\bfA_J \dot{\bfq}^+$, must satisfy,
\begin{align}
&\bfU_j(\limp_{J}) \geq 0, \bfA_{j} \dot{\bfq}^+ = 0, \qquad \qquad\; \forall\;j \in \calI \cap J ,\label{eq:UjP}\\
&\bfU_k(\limp_{J})=0, \bfU_k(\limp_{J\cup\{k\}}) < 0, \qquad \forall\;k \in \calI \backslash J ,\label{eq:UkP}
\end{align}
where the scope, $\calI$, is again formally the set of all touching 
constraints,~\eqref{eq:IScope}. 
However in numerical simulation it is sufficient to check only the active constraints as well as
those touching down\footnote{
Note that this excludes those constraints which are touching but separating, but whose post-impact
velocity is penetrating. For such cases the application
of this first impact puts the state into the guard for those constraints, and thus
even in this pathological case the execution continues from the correct mode. },~\eqref{eq:TDsimp},
\begin{align}
\calI_{\IV}:=I \cup \{i \in \calK\backslash I: \TD(\alpha(i),\biq)\} \subseteq \calI. \label{eq:IVScope2}
\end{align}
Note that the equality constraints are enforced
by the impact law~\eqref{eq:impulse}, and so,
by Lemma~\ref{thm:comp}, \changed{constraints~\eqref{eq:UjP}--\eqref{eq:UkP} reduce to,}
\begin{align}
(k \in J) \Leftrightarrow (\bfU_k(\limp_{J\cup\{k\}}) \geq 0), \qquad \forall \; k \in \calI. \label{eq:UkPa}
\end{align}
or as the predicate $\IV:2^\calK \times T\calQ \rightarrow \TF$, 
\begin{align}
\IV(J,\biq^-) = \left(J \subseteq \calI\right)\bigwedge_{k \in \calI} (k \in J) \Leftrightarrow (\bfU_k(\limp_{J\cup\{k\}})\geq 0), \label{eq:CPIV}
\end{align}
whose solution, following~\eqref{eq:CP}, is denoted, $\CP_{\IV}(\biq^-)=J$.

As with the force--acceleration complementarity problem, this formulation of the 
impulse--velocity complementarity problem is required to allow for
massless limbs, for which a separating velocity is not well defined. However when the separating velocity is
defined,
\begin{theorem}
\label{thm:ivc}
The non-penetrating velocity condition at a contact $k$ after impact into mode $J$, 
$\bfA_k\dot{\bfq}^+>0$ (where such a velocity is well defined), is equivalent to 
a negative contact impulse, $\bfU_{k}(\limp_{K})<0$,
at impact into mode $K:=J \cup \{k\}$. In other words,~\eqref{eq:UjP}--\eqref{eq:UkP} are equivalent to the
usual formulation \citep[e.g.][Eqn.~9]{brogliato2002numerical},
\begin{align}
&\bfU_j(\limp_{J}) \geq 0, \bfA_{j} \dot{\bfq}^+ = 0, \qquad \forall\;j\in \calI \cap J  ,\label{eq:UjP2}\\
&\bfU_k(\limp_{J}) = 0, \bfA_{k} \dot{\bfq}^+ > 0, \qquad \forall\;k\in \calI\backslash J .\label{eq:UkP2}
\end{align}
\end{theorem}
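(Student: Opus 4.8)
The plan is to follow the same template as the proof of Theorem~\ref{thm:fac}, producing for each $k\in\calI\backslash J$ a single positive scalar that relates the sign of the contact impulse $\bfU_k(\limp_K)$ in mode $K:=J\cup\{k\}$ to the sign of the separation velocity $\bfA_k\dot{\bfq}^+_J$ in mode $J$. Because both quantities are honest scalars wherever they are defined, I expect this argument to be strictly simpler than Theorem~\ref{thm:fac}: the trending relation of Definition~\ref{def:through} and Lemma~\ref{lem:twofun} are unnecessary here, and only the elementary fact that multiplication by a positive number preserves strict sign is required.

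First I would order the constraints so that $k$ is the last index of $K$ and write $\bfA_K = \bigl[\begin{smallmatrix}\bfA_J\\ \bfA_k\end{smallmatrix}\bigr]$ as in \eqref{eq:bfAK}. Starting from $\limp_K=\bfAd_K\Mbar\dot{\bfq}^-$ in \eqref{eq:impulse} and substituting the block expansion of $\bfAd_K$ in terms of the mode-$J$ quantities $\Md_J$ and $\bfAdT_J$ (the expansion \eqref{eq:finalbreakout} already used in Theorem~\ref{thm:fac}), the $k$-th row reads $\pi_k\bfAd_K = \bfA_k\Md_J/(\bfA_k\Md_J\bfA_k^T)$, so that
\[
\bfU_k(\limp_K) = -\pi_k\limp_K = -\frac{\bfA_k\Md_J\Mbar\dot{\bfq}^-}{\bfA_k\Md_J\bfA_k^T}.
\]
Independently, I would simplify the post-impact velocity using \eqref{eq:dqp} together with $\bfAdT_J\bfA_J=\Id-\Md_J\Mbar$ from \eqref{eq:dagprop2}, obtaining $\dot{\bfq}^+_J = \Md_J\Mbar\dot{\bfq}^-$ and hence $\bfA_k\dot{\bfq}^+_J = \bfA_k\Md_J\Mbar\dot{\bfq}^-$. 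Comparing the two gives the key identity $\bfA_k\dot{\bfq}^+_J = -(\bfA_k\Md_J\bfA_k^T)\,\bfU_k(\limp_K)$, the impulse--velocity counterpart of \eqref{eq:ddaul}.

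Since the scalar $\bfA_k\Md_J\bfA_k^T$ is strictly positive (Appendix~\ref{app:la}, \eqref{eq:SAdef}) exactly when the separation velocity is well defined, this identity yields the pointwise equivalence $\bfA_k\dot{\bfq}^+_J>0 \Leftrightarrow \bfU_k(\limp_K)<0$. This is precisely the substitution carrying the second line \eqref{eq:UkP} of the complementarity problem into \eqref{eq:UkP2}; the first lines \eqref{eq:UjP} and \eqref{eq:UjP2} coincide, and $\bfU_k(\limp_J)=0$ for $k\notin J$ holds automatically because the mode-$J$ impulse is supported on $J$, so the two formulations are equivalent. The main obstacle is the bookkeeping in the partitioned-inverse expansion \eqref{eq:finalbreakout} needed to isolate the clean form of $\pi_k\bfAd_K$; once that Schur-complement algebra from Appendix~\ref{app:la} is available, the rest follows from the defining identities \eqref{eq:dagprop}--\eqref{eq:dagprop2} and the positivity of $\bfA_k\Md_J\bfA_k^T$.
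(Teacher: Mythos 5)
Your proposal is correct and follows essentially the same route as the paper's proof: order $k$ last in $K$, use the block expansion \eqref{eq:finalbreakout} from Appendix~\ref{app:la} to isolate the $k$-th row of the impulse, derive the key scalar identity $\bfA_k\dot{\bfq}^+_J = -\left(\bfA_k\Md_J\bfA_k^T\right)\bfU_k(\limp_K)$, and conclude from the positivity of $\bfA_k\Md_J\bfA_k^T$ established in \eqref{eq:SAdef}. The only cosmetic difference is that you expand $\pi_k\bfAd_K$ acting on $\Mbar\dot{\bfq}^-$ while the paper expands $\pi_k\Lambda_K$ acting on $\bfA_K\dot{\bfq}^-$; these are the two equal expressions for $\limp_K$ in \eqref{eq:impulse}, and they reduce to the same numerator via $\Md_J\Mbar = \Id - \bfAdT_J\bfA_J$ from \eqref{eq:dagprop2}.
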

\begin{proof}
Recall from Section~\ref{sec:not} that $\pi_{ I}$ is the canonical projection 
onto the linear subspace spanned by the coordinate axes indexed by $I$, and 
assume without loss of generality that $k$ is the final index in $K$ such that $\bfA_K$ is defined as in~\eqref{eq:bfAK}.
The constraint cone, following \cite[Eqn.~78]{johnson_selfmanip_2013}, applied to the contact impulse,~\eqref{eq:impulse}, is (see Appendix~\ref{app:la} for details, in particular~\eqref{eq:finalbreakout} expanding $\Lambda_K$),
\begin{align}
\bfU_{k}(\limp_{K}) &=\pi_k \Lambda_K \bfA_K \dot{\bfq}^- 
= \frac{1}{\bfA_k\Md_J \bfA_k^T}\left[ \begin{array}{cc}   \bfA_k \bfAdT_J &  1 \end{array}\right]\left[ \begin{array}{c} \bfA_J\\ \bfA_k\end{array}\right]\dot{\bfq}^-
 =  \frac{\bfA_k \bfAdT_J \bfA_J\dot{\bfq}^- - \bfA_k\dot{\bfq}^-}{\bfA_k\Md_J \bfA_k^T},  \label{eq:uklimp}
\end{align}
while the separating velocity for constraint $k$ after impact into mode $J$ is, using~\eqref{eq:dqp},
\begin{align}
\bfA_{k} \dot{\bfq}^+_J &= \bfA_{k}\dot{\bfq}^- - \bfA_{k}\bfAdT_J\bfA_J\dot{\bfq}^- 
= - \left(\bfA_k\Md_J \bfA_k^T\right) \bfU_{k}(\limp_{K}).
\end{align}
Since the denominator in~\eqref{eq:uklimp} is a positive scalar function of 
state (as shown in Appendix~\ref{app:la},~\eqref{eq:SAdef}), 
a positive separation velocity, $\bfA_{k} \dot{\bfq}^+ $ implies a 
negative contact impulse, $\bfU_{k}(\limp_K)$, and vice-versa, and 
therefore~\eqref{eq:UjP}--\eqref{eq:UkP} are 
equivalent conditions to~\eqref{eq:UjP2}--\eqref{eq:UkP2}. 
\end{proof}
Furthermore, with or without a full rank inertia tensor, we assume the existence of a unique solution to the 
impulse--velocity complementarity problem,
\begin{assumption}[Impulse--Velocity Complementarity]
\label{ass:ivc}
The impulse--velocity complementarity constraint, $\IV$,~\eqref{eq:CPIV} always 
admit an implicit function, $\CP_{\IV}$, over the entire state space $T\calQ$, 
as does the modified problem including the pseudo-impulse, $\PIV$,~\eqref{eq:PIV}, introduced in Section~\ref{sec:pseudoimpulse}, 
and under the frictional properties that follow Assumption~\ref{ass:friction}.
That solution correctly captures the behavior of the physical system.
\end{assumption}
As with the FA complementarity problem, there has been a long line of literature \citep[e.g.][Eqn. 2.10b]{lotstedt1982mechanical}
that proves that this is always true for independent, plastic, frictionless contacts, however no result
has been found that covers the pseudo-impulse introduced in Section~\ref{sec:pseudoimpulse}
or the limited frictional conditions introduced in Section~\ref{sec:friction}.
We impose this condition in support of the Theorems in Section~\ref{sec:hyb}.

\subsection{Pseudo-Impulse}
\label{sec:pseudoimpulse}

Impulses arising from impacts (both plastic and elastic) generally break existing contacts.
For example an impulse imparted to the underside of a rigid block 
\changed{that is being pushed down onto a level surface}
must cause it to leave the \changed{surface} for a small time
interval no matter how weak the impulse or how 
\changed{strong the applied force}.
In truth the block is not rigid and the impulse is temporally distributed; modeling the resulting 
subtle deflections would greatly complicate the system. 
It appears expedient to impose a minimum threshold on impulse magnitude
below which the system may be considered \emph{quasi-static} and the block remains on the ground, 
but above which the system is \emph{dynamic} and the block detaches from the substrate.
This threshold could be specified directly in terms of a pre-defined limit on the system velocity or impulse magnitudes,
\changed{however such limits would not take into account the magnitude of the applied force (i.e.,
how strongly gravity or commanded torques are holding down the rigid block).}
Here we want the cutoff to scale with respect to the other
problem variables \changed{(such as incoming velocity, applied forces, and the contact configuration), 
and we show in Section~\ref{sec:pseudovel} that this induces an implicit (variable) bound on velocity}. 

In this section we define an additional impulse during impact which
qualitatively improves results and eliminates some Zeno phenomena. 
\changed{The time scaling parameter of this} impulse may be thought of as a tuning parameter
and while we give some physical motivation for its magnitude, the inclusion of this term
is motivated primarily by improving the qualitative behavior of the numerical simulation
\changed{(e.g.\ by excluding chattering and Zeno phenomena)} in a manner that retains physical 
fidelity across a broad range of application settings \changed{and preserves} mathematical consistency. 
See Section~\ref{sec:dis:pseudo} for examples of physical situations whose physical fidelity 
and mathematical properties both appear to be enhanced by the introduction of such a pseudo-impulse.

Therefore we make the following new assumption
about the physics of the system,
\begin{assumption}[Pseudo-Impulse]
\label{ass:pseudo}
The continuous time forces apply some small amount of work during the impact process.
\end{assumption}
In the usual Newtonian impact model, these forces have no effect \citep[as shown, e.g., in][Prop.~4.3]{tenDam1997},
however here we add an additional pesudo-impulse to support this assumption.

This assumption is not used in this paper to directly change the energy at any state, but 
rather is used within the discrete switching logic to improve the quality of make-break contact 
decisions (which implicitly changes the resulting kinetic energy of the system). 
Specifically, consider the \emph{pseudo-impulse},~$\dimp\in T^*\calC$, that the contacts would 
impart on the system to resist the continuous time forces for some small time duration,~$\delta_t\in\bbR^+$, during 
impact into mode $J$, 
\begin{align}
\Mbar \delta_{\dot{\bfq}} :=&  \lim_{\delta_t\rightarrow 0} \int_{\delta_t} \Mbar \ddot{\bfq} dt \approx (\Upsilon - \Cbar\dot{\bfq}^- -\Nbar)\delta_t, \label{eq:pseudoimp_}\\
\dimp :=&\bfAd_J\Mbar \delta_{\dot{\bfq}} = \bfAd_J (\Upsilon-\Cbar\dot{\bfq}^- -\Nbar)\delta_t. \label{eq:pseudoimp}
\end{align}
This small time~$\delta_t$ can be regarded as the finite duration of the (actually 
non-instantaneous) impact process \cite[]{quinn2005finite}.
An alternative interpretation
is that $\delta_t$ specifies the time duration after an impulsive separation of a contact during
which if that contact were to return to the active set, the qualitative behavior of the system would be improved
by never considering it as having left.
This interpretation, correct to first order for single contact systems,
is useful when considering what value of $\delta_t$ should be used.
In the simulations shown in this paper a magnitude
of $\delta_t=0.03s$ has been found to give the best results, although this value is surely dependent
on the material properties, the system velocities, and the desired qualitative and quantitative behavior of the model.

This pesudo-impulse is not directly applied to the system \citep[as in][]{quinn2005finite}, 
because in this model impacts occur instantaneously
and the velocity displacement $\delta_{\dot{\bfq}}$ would not be uniquely determined by~\eqref{eq:pseudoimp} when $\Mbar$ is singular.
Instead the pseudo-impulse is added to the plastic impact impulse,~\eqref{eq:impulse}, and in that way can be considered as a modified incoming
momentum,
\begin{align}
\limp_{K}+\dimp_{K} &= - \Lambda_K \bfA_K \dot{\bfq} + \bfAd_K \Mbar \delta_{\dot{\bfq}} = \bfAd_K \left( \Mbar\dot{\bfq} + \Mbar\delta_{\dot{\bfq}}\right),
\end{align}
This modified momentum is used as an extra guard condition during impact, $\bfU(\limp + \dimp)\geq0$,
in addition to the usual condition, $\bfU(\limp)\geq0$,
since the pseudo-impulse should not break contacts that would otherwise persist. 
That is, when $\delta_t>0$ the $\IV$ complementarity problem~\eqref{eq:CPIV} is replaced
by the predicate,
\begin{align}
\PIV(J, \biq)& := (\CP_\IV(\biq) \subseteq J \subseteq \calI) \label{eq:PIV} 
\bigwedge_{k \in \calI} (k \in J) \Leftrightarrow (\bfU_k(\limp_{K}) \geq 0 \vee \bfU_k(\limp_{K}+\dimp_{K}) \geq 0  ) 
\end{align}
whose solution, following~\eqref{eq:CP}, is denoted, $\CP_{\PIV}(\biq^-)=J$.
Note that by construction,
\eqnn{\label{eq:PIVIV}
\forall\; \Tq\in T\e{Q} : \CP_\IV(\Tq)\subseteq\CP_\PIV(\Tq).
}
The formulation of the complementarity condition based only on impulses in~\eqref{eq:UjP}--\eqref{eq:UkP}
is key to admitting this modification, as using instead~\eqref{eq:UjP2}--\eqref{eq:UkP2} would
require considering both the impulsive and velocity implications of this pseudo-impulse.

The existence and uniqueness of a solution to~\eqref{eq:PIV} is, for the purposes of this paper, simply
assumed under Assumption~\ref{ass:ivc}. 
In numerical studies, we did not encounter simulations that violated this property.
However, we have not derived sufficient conditions ensuring the property holds for this modified complementarity predicate.

\subsubsection{Velocity Implications of the Pseudo-Impulse}
\label{sec:pseudovel}

The $\IV$ predicate is provided as a purely logical proposition.
However, its truth value varies in physically-interpretable ways with respect to variations in the base point at which it is evaluated.
In the following, Lemma~\ref{lem:pseudoimp} shows that $\CP_\IV$ returns the same answer regardless of the 
impact speed along any particular impacting velocity direction, and Theorem~\ref{thm:pseudoimp} shows 
that $\CP_\PIV$ does not. See Section~\ref{sec:dis:pseudo} for a discussion of physical implications of these
results in simple example settings.

\begin{lemma}
\label{lem:pseudoimp}
Let $I\subseteq\calK$, $(\bfq,\dot{\bfq})\in T\calQ$ be such that there exists a unique $k\in\e{K}\sm I$ such that $\bfa_k(\bfq) = 0$ and $\bfA_k(\bfq)\dot{\bfq} < 0$, i.e.\ the system instantaneously undergoes impact with exactly one constraint $k\in\e{K}$.
Define $K := I \cup \set{k}$ and,
\eqnn{\label{eq:dqs}
\forall\; s \ge 0 : \dot{\bfq}_s := \paren{\Id - (1 - s)\bfAdT_K \bfA_K}\dot{\bfq}.
}
If $\delta_t=0$, then $J := \CP_{\IV}(\bfq,\dot{\bfq})\subseteq \calK$ satisfies,
\eqnn{
\forall\; s > 0 : J = \CP_{\IV}(\bfq,\dot{\bfq}_s),
}
that is the solution to the complementarity problem is the same at any impacting speed.
\end{lemma}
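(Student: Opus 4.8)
The plan is to exploit two structural facts: that the contact impulse $\limp$ is linear in the incoming velocity (equation \eqref{eq:impulse}), and that $\dot{\bfq}_s$ is an affine interpolation between the incoming velocity and the mode-$K$ post-impact velocity. First I would observe that $\bfAdT_K\bfA_K$ is idempotent, since $\bfA_K\bfAdT_K=\Id$ by \eqref{eq:dagprop}; consequently $\dot{\bfq}_0=(\Id-\bfAdT_K\bfA_K)\dot{\bfq}$ is precisely the post-impact velocity $\dot{\bfq}^+_K$ into mode $K$ from \eqref{eq:dqp}, while $\dot{\bfq}_1=\dot{\bfq}$. A one-line rearrangement of the definition \eqref{eq:dqs} then yields the affine decomposition $\dot{\bfq}_s=s\,\dot{\bfq}+(1-s)\,\dot{\bfq}^+_K$ valid for all $s$.

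Second, I would reduce the complementarity scope. Since the state is consistent with the active mode $I$ (so $\bfA_I\dot{\bfq}=0$) and, by hypothesis, $k$ is the unique constraint with $\bfa_k(\bfq)=0$ and $\bfA_k(\bfq)\dot{\bfq}<0$, the touchdown set entering the reduced IV scope \eqref{eq:IVScope2} is exactly $\set{k}$, so $\calI_{\IV}=I\cup\set{k}=K$. Hence it suffices to test the predicate \eqref{eq:CPIV} over candidate modes $J\subseteq K$ and indices $k'\in K$. I would also verify that this scope is genuinely $s$-independent: for any $k'$ with $\alpha(k')\in K$ one has $\bfA_{\alpha(k')}\dot{\bfq}^+_K=0$, so $\bfA_{\alpha(k')}\dot{\bfq}_s=s\,\bfA_{\alpha(k')}\dot{\bfq}$ and the inequalities defining the scope are preserved for $s>0$.

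The heart of the argument is to show that the fixed component $\dot{\bfq}^+_K$ contributes nothing to the relevant complementarity tests. Using the closed-form expression \eqref{eq:uklimp} from Theorem~\ref{thm:ivc}, $\bfU_{k'}(\limp_{J\cup\set{k'}})$ equals $-\bfA_{k'}(\Id-\bfAdT_J\bfA_J)\dot{\bfq}^-$ divided by the strictly positive scalar $\bfA_{k'}\Md_J\bfA_{k'}^T$, and is therefore linear in $\dot{\bfq}^-$. Evaluating at $\dot{\bfq}^-=\dot{\bfq}^+_K$, the inclusion $J\subseteq K$ forces $\bfA_J\dot{\bfq}^+_K=0$ and the membership $k'\in K$ forces $\bfA_{k'}\dot{\bfq}^+_K=0$, so the numerator vanishes and $\bfU_{k'}(\limp_{J\cup\set{k'}}(\dot{\bfq}^+_K))=0$. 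By linearity and the decomposition above, $\bfU_{k'}(\limp_{J\cup\set{k'}}(\dot{\bfq}_s))=s\,\bfU_{k'}(\limp_{J\cup\set{k'}}(\dot{\bfq}))$ for every $J\subseteq K$ and $k'\in K$. Because $s>0$, this preserves both the sign and the truth value of each test $\bfU_{k'}(\cdot)\geq 0$; thus $\IV(J,(\bfq,\dot{\bfq}_s))$ and $\IV(J,(\bfq,\dot{\bfq}))$ agree for every candidate $J$, and the uniqueness guaranteed by Assumption~\ref{ass:ivc} gives $\CP_\IV(\bfq,\dot{\bfq}_s)=\CP_\IV(\bfq,\dot{\bfq})$.

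I expect the main obstacle to be the crux vanishing step together with the scope reduction that makes it applicable: the identity $\bfU_{k'}(\limp_{J\cup\set{k'}}(\dot{\bfq}^+_K))=0$ works only because both $J$ and $k'$ lie inside $K$, so I must first rule out that a constraint touching-but-grazing or separating outside $K$ could enter or leave the scope as $s$ varies and thereby perturb the predicate. The uniqueness of the impacting constraint $k$ and the sufficiency of the reduced scope $\calI_{\IV}$ recorded at \eqref{eq:IVScope2} are exactly what I would lean on to close this gap; handling it carefully is where the real care lies, whereas the linear-algebra bookkeeping is routine once Theorem~\ref{thm:ivc} is in hand.
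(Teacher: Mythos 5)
Your proof is correct, and at its core it runs on the same two facts as the paper's own proof: the contact impulse scales as $\limp_{J'}(\bfq,\dot{\bfq}_s) = s\,\limp_{J'}(\bfq,\dot{\bfq})$ for every candidate mode $J'\subseteq K$, and the inequalities $\bfU(\cdot)\geq 0$ in \eqref{eq:CPIV} are conic, so positive scaling preserves their truth values and hence (by the uniqueness in Assumption~\ref{ass:ivc}) the solution. Where you differ is in how the scaling is established and in how much of the predicate you check. The paper computes directly, via the block expansion \eqref{eq:finalbreakout}, that $\bfA_J\bfAdT_K\bfA_K = \bfA_J$, and then reads off $\limp_J(\bfq,\dot{\bfq}_s) = -\Lambda_J\bfA_J\paren{\Id - (1-s)\bfAdT_K\bfA_K}\dot{\bfq} = s\,\limp_J(\bfq,\dot{\bfq})$; you instead write $\dot{\bfq}_s = s\,\dot{\bfq} + (1-s)\,\dot{\bfq}^+_K$ and observe that every impulse whose mode lies inside $K$ vanishes at the endpoint $\dot{\bfq}^+_K$ --- the same identity in different clothes, but your route needs only $\bfA_K\bfAdT_K = \Id$ from \eqref{eq:dagprop} rather than the full Appendix~\ref{app:la} machinery. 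Your version is also more complete in two respects the paper leaves implicit: you pin down the scope ($\calI_{\IV} = K$, and its invariance under $s > 0$), and you verify the biconditional for every candidate $J\subseteq K$ and every test index $k'$, including $k'\notin J$ via $\limp_{J\cup\set{k'}}$, whereas the paper's proof displays the check only for $j\in J$ at the particular solution and asserts the rest ``follows directly.'' One small point to tighten: for frictional indices $\bfU_{k'}$ is conic rather than linear, so instead of asserting linearity of $\bfU_{k'}(\limp_{J\cup\set{k'}})$ in $\dot{\bfq}^-$ you should argue that the impulse covector itself vanishes at $\dot{\bfq}^+_K$ (since $\bfA_{J\cup\set{k'}}\dot{\bfq}^+_K = 0$ makes $\limp_{J\cup\set{k'}} = -\Lambda_{J\cup\set{k'}}\bfA_{J\cup\set{k'}}\dot{\bfq}^+_K = 0$) and therefore scales by $s$, after which positive homogeneity of $\bfU$ finishes the job; your numerator computation already contains exactly this, so the fix is purely one of phrasing.
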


\begin{proof}
The impulse--velocity complementarity predicate~\eqref{eq:CPIV} is a conjunction of propositions involving conic inequalities;
since furthermore the contact impulse~\eqref{eq:impulse} is simply scaled (this first identity
can be seen using the expansions given in Appendix~\ref{app:la}, \eqref{eq:finalbreakout}, and see also~\eqref{eq:dagprop}),
\begin{align}
\bfA_J \bfAdT_K \bfA_K &= \bfA_J \left[ \bfAdT_J + \Md_J ... \qquad -\Md_J...\right]\bfA_K 
= (\Id + 0) \bfA_J + 0 = \bfA_J,\\
\limp_J(\bfq,\dot{\bfq}_s) &= - \Lambda_J \bfA_J (\Id - (1-s)\bfAdT_K \bfA_K) \dot{\bfq} 
= s\,(-\Lambda_J\bfA_J\dot{\bfq})= s\:\limp_J(\bfq,\dot{\bfq}),
\end{align}
we have,
\eqnn{
\forall\; j\in J : \bfU_j\paren{\limp_J(\bfq,\dot{\bfq}_s)} = \bfU_j\paren{s\: \limp_J(\bfq,\dot{\bfq})}.
}
Therefore 
\eqnn{
\forall\; j\in J : \bfU_j\paren{\limp_J(\bfq,\dot{\bfq}_s)} \ge 0 \Leftrightarrow \bfU_j\paren{\limp_J(\bfq,\dot{\bfq})} \ge 0;
}
the conclusion of the Lemma follows directly.
\end{proof}

\begin{theorem}
\label{thm:pseudoimp}
Assume the same conditions as in Lemma~\ref{lem:pseudoimp}.
If further there is a unique constraint $i\in I\sm J$ such that,
\eqnn{ 
\label{eq:pimpcond}
\bfU_i\left( \bfAd_{J \cup \{i\}}(\bfq) \left(\Upsilon(\bfq,\dot{\bfq}_0) - \Cbar(\bfq,\dot{\bfq}_0) - \Nbar(\bfq,\dot{\bfq}_0) \right)\right) > 0,
}
(i.e., a constraint $i\in I$ is impinged upon by external forces but would be removed after impact with constraint $k$ if $\delta_t = 0$)
then for all values of the pseudo-impulse parameter $\delta_t > 0$, there exists $\obar{s} > 0$ such that with $\dot{\bfq}_s$ defined as in~\eqref{eq:dqs} we have,
\eqnn{\label{eq:CPPIVs}
\forall\; s\in (0,\obar{s}) : i\in \CP_\PIV(\bfq,\dot{\bfq}_s),
}
that is, the pseudo-impulse prevents the liftoff of constraint $i$ for all sufficiently small impacting speeds.
\end{theorem}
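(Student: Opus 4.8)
The plan is to argue by contradiction against the finitely many contact modes that could be selected by $\CP_\PIV$, exploiting the fact that the plastic impulse and the pseudo-impulse scale differently in the impact speed $s$. Suppose the conclusion fails, so that there is a sequence $s_n\downarrow 0$ with $i\notin L_n := \CP_\PIV(\bfq,\dot{\bfq}_{s_n})$. Since $2^\calK$ is finite, after passing to a subsequence I may assume $L_n\equiv L$ is constant; by the inclusion \eqref{eq:PIVIV} and the scope requirement built into \eqref{eq:PIV} we then have $J\subseteq L\subseteq\calI$ with $i\notin L$, where $J := \CP_\IV(\bfq,\dot{\bfq})$ is the IV solution, which by Lemma~\ref{lem:pseudoimp} equals $\CP_\IV(\bfq,\dot{\bfq}_s)$ for every $s>0$. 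Taking the impacting scope to be $\calI_{\IV}=K=I\cup\set{k}$ as in \eqref{eq:IVScope2} (legitimate since $k$ is the unique impacting constraint), every such $L$ satisfies $L\cup\set{i}\subseteq K$, using $i\in I\subseteq K$.

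Next I would record the two competing scalings. Exactly as in the proof of Lemma~\ref{lem:pseudoimp}, for any mode $L'\subseteq K$ the identity $\bfA_{L'}\bfAdT_K\bfA_K=\bfA_{L'}$, which follows from $\bfA_{L'}=\pi_{L'}\bfA_K$ and $\bfA_K\bfAdT_K=\Id$ (see \eqref{eq:dagprop}), yields $\limp_{L'}(\bfq,\dot{\bfq}_s)=s\,\limp_{L'}(\bfq,\dot{\bfq})$, so the plastic impulse vanishes linearly as $s\to 0$. The pseudo-impulse, by contrast, is read off the continuous-time forces at the incoming state, $\dimp_{L'}(\bfq,\dot{\bfq}_s)=\bfAd_{L'}(\bfq)\bigl(\Upsilon(\bfq,\dot{\bfq}_s)-\Cbar(\bfq,\dot{\bfq}_s)\dot{\bfq}_s-\Nbar(\bfq,\dot{\bfq}_s)\bigr)\delta_t$, and by continuity of $\Upsilon,\Cbar,\Nbar$ and of $\bfU_i$ it converges as $s\to 0$ to a fixed covector determined by $\dot{\bfq}_0$. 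Hence along the subsequence $\bfU_i\bigl(\limp_{L\cup\set{i}}(\bfq,\dot{\bfq}_{s_n})+\dimp_{L\cup\set{i}}(\bfq,\dot{\bfq}_{s_n})\bigr)\to \delta_t\,\bfU_i\bigl(\bfAd_{L\cup\set{i}}(\bfq)(\Upsilon(\bfq,\dot{\bfq}_0)-\Cbar(\bfq,\dot{\bfq}_0)\dot{\bfq}_0-\Nbar(\bfq,\dot{\bfq}_0))\bigr)$.

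To finish I must identify this limit with the quantity in hypothesis \eqref{eq:pimpcond}, which is stated for the mode $J\cup\set{i}$ rather than $L\cup\set{i}$. Here I would invoke the uniqueness of $i$: any index $m\in L\setminus J$ is discarded by IV but reinstated by the pseudo-impulse, so (the plastic part having vanished) in the limit its reinstatement is governed by the sign of $\bfU_m\bigl(\bfAd_{\,\cdot}(\Upsilon-\Cbar\dot{\bfq}_0-\Nbar)\bigr)$; since $i$ is by assumption the only index of $I\setminus J$ for which this is positive, and $i\notin L$, no such $m$ can occur and $L=J$ for $n$ large. Then $L\cup\set{i}=J\cup\set{i}$, and the displayed limit equals $\delta_t\,\bfU_i\bigl(\bfAd_{J\cup\set{i}}(\ldots)\bigr)>0$ by \eqref{eq:pimpcond}. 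But $i\notin L$ forces both disjuncts of the membership test in \eqref{eq:PIV} to be false, in particular $\bfU_i\bigl(\limp_{L\cup\set{i}}+\dimp_{L\cup\set{i}}\bigr)<0$ for every $n$, hence $\le 0$ in the limit --- a contradiction. Thus $i\in\CP_\PIV(\bfq,\dot{\bfq}_s)$ once $s$ is below the threshold $\obar{s}$ produced by this limit.

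The step I expect to be the main obstacle is precisely the claim $L=J$: controlling which mode the self-referential predicate \eqref{eq:PIV} actually selects, so that the pseudo-impulse positivity assumed for $J\cup\set{i}$ can be transported to the mode genuinely chosen by $\CP_\PIV$. This is where the uniqueness of the offending constraint $i$ and the finiteness of the mode lattice do the real work, and it also requires checking that scope indices outside $I$ (touching with zero normal velocity) are not spuriously re-added as $s\to 0$ --- again a consequence of the linear-in-$s$ vanishing of the plastic impulse, combined with Assumption~\ref{ass:ivc} ensuring that $\CP_\PIV$ is single-valued.
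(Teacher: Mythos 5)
Your proposal is correct and takes essentially the same route as the paper's proof: both rest on the scaling fact from Lemma~\ref{lem:pseudoimp} that the plastic impulse $\limp$ is linear in $s$ and hence vanishes as $s\goesto 0^+$, while the pseudo-impulse $\dimp$ converges to the fixed covector appearing in~\eqref{eq:pimpcond}, so that for small $s$ the membership test in~\eqref{eq:PIV} for constraint $i$ is decided by the assumed positivity, with uniqueness of $i$ ruling out interference from other constraints. Your contradiction-and-subsequence wrapper, and your explicit transport of the hypothesis from mode $J\cup\set{i}$ to the mode actually selected by $\CP_\PIV$, merely formalize the step the paper compresses into ``as no other constraints are added or removed''; the mathematical content is the same.
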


\begin{proof}
For all $s> 0$ let $K(s) := \CP_\PIV(\bfq,\dot{\bfq}_s)$.
Recall from~\eqref{eq:PIVIV} that $\CP_\IV(\bfq,\dot{\bfq}_s)\subset \CP_\PIV(\bfq,\dot{\bfq}_s)$ for all $s \ge 0$.
Thus although $K(s)$ may not be constant, we are only concerned with the asymptotic inclusion of 
additional constraints. 

Note that,
\eqnn{
\lim_{s\goesto 0^+} \limp_{K(s)}(\bfq,\dot{\bfq}_s) = \bf0,
}
and therefore for all values of the pseudo-impulse parameter $\delta_t > 0$,
\eqnn{
\lim_{s\goesto 0^+} \bfU_k(\limp_{K(s)}(\bfq,\dot{\bfq}_s)+\dimp_{K(s)}(\bfq,\dot{\bfq}_s)) = \bfU_k(\dimp_{K(s)}(\bfq,\dot{\bfq}_s)) .
}
for all $k \in K(s)$. By assumption, constraint $i$ is the only constraint such that $\bfU_i(\dimp) >0$ and,
by~\eqref{eq:PIV}, is to be included in the solution. As no other constraints are added or removed, 
there must exist some $\obar{s} > 0$ that ensures \eqref{eq:CPPIVs} holds.
\end{proof}

While Theorem~\ref{thm:pseudoimp} is limited to only single constraints, the pseudo-impulse parameter
can similarly prevent the impulsive liftoff of at least some constraints when there are multiple that 
satisfy~\eqref{eq:pimpcond}, subject to the nature of the $\PIV$ complementarity~problem. 

\subsubsection{Pseudo-Impulse Example}\label{sec:pseudoimpex}

\begin{figure}
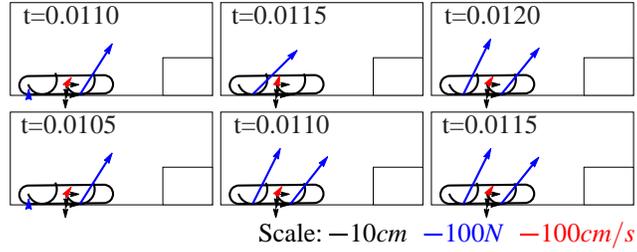

\centering
\def\svgwidth{8.3cm}
\include{pseudocomp}
  \vspace{-8pt}
\caption{Keyframes around the impact of the second leg with the ground -- note the difference in contact
forces (blue arrows) which indicate which contacts are active.
 \emph{Top Row:} Without pseudo-impulse ($\delta_t=0$). \emph{Bottom Row:} With pseudo-impulse ($\delta_t = 0.03$). 
The center top frame shows that the contact with the front leg is lost when the rear leg touches down (and
therefore no contact force is possible), but
the center bottom frame shows both contacts are maintained with the pseudo-impulse (and therefore
both contacts can apply forces to the system).
Note that there is a slight difference in touchdown time due to similar discrepancies around the time of the first leg touchdown. See also Figure~\ref{fig:lam1ncomp}.}
\label{fig:pseudoimp}
\end{figure}

\begin{figure}
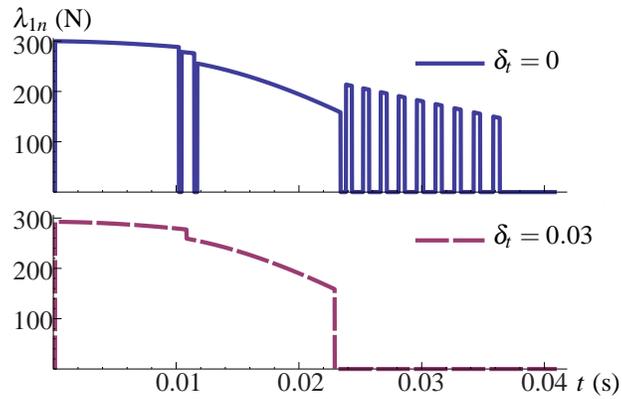

\centering
\def\svgwidth{7.9cm}
\include{lam1ncomp}
\caption{Comparison of the front leg normal direction ground reaction force for evaluations with and without the pseudo-impulse. See also Figure~\ref{fig:pseudoimp}.}
\label{fig:lam1ncomp}
\end{figure}

The pseudo-impulse can help to resolve certain Zeno executions, as shown in Section~\ref{sec:pseudozeno}, 
but more importantly it reduces ``chattering'', or executions that involve many impulsive transitions
that are qualitatively undesirable. \changed{As an example from} the RHex leaping simulation
of Figure~\ref{fig:leapsim} (see also the additional examples
explored in Section~\ref{sec:dis:pseudo}), Figure~\ref{fig:pseudoimp}
compares the state just before and after the rear leg touches down  
with and without the pseudo-impulse term.
At that instant the calculated impulse \eqref{eq:impulse} is $P_{l} = -1.47$Ns (in the normal direction on the
front leg). Even though the leg motor
is applying maximum torque trying to keep the leg on the ground the small negative impulse
causes the leg to separate, and then the motor torque quickly
accelerates the leg back to the ground (with or without massless legs, as recall that even massless legs are assumed to have finite acceleration, thus
the leg may return quickly but not instantly). With the pseudo-impulse 
this is balanced out by $P_{\delta,1} =7.91$Ns,
and the leg does not leave the ground (as would be the case on the real robot in this configuration to within modeling precision).
If the induced impulse were much larger then
the desired result may be for the front leg to lift off the ground,
while a much smaller impulse would clearly not break the front leg's contact.
The $\delta_t$ term is in essence a tuning parameter that determines the threshold 
between a \emph{quasi-static} regime (where contacts are maintained) and a \emph{dynamic} regime (where impulses may break existing contacts).

Impulsively breaking contact at the wrong time is an even bigger problem when considering
a full behavior and not just analyzing an individual impact event. As 
Figure~\ref{fig:lam1ncomp} suggests, without a pseudo-impulse this impulsive liftoff can lead to chattering.
In this case starting around $t=0.023$ the front leg lifts off but the continuous time forces
return the leg to the ground after a short time. When the front leg impacts the ground,
the rear leg then impulsively breaks contact, and a cyclic oscillation begins. 
This behavior is not quite a Zeno-execution, as the finite acceleration of the leg in the air results in only
finitely many transitions in a finite time, however these transitions are still qualitatively~undesirable.

\subsection{Friction}
\label{sec:friction}

While this paper is not focused on methods for modeling friction, including friction
in some form is unavoidable \cite[]{McGeer_Wobbling_1989}. Here, 
in order to advance our targeted conclusions guaranteeing the existence and uniqueness of a solution,
we assume~that,
\begin{assumption}[Friction]
\label{ass:friction}
All contact points with Coulomb friction are attached only to massless links. Contact
points without friction are assumed to never resist sliding motion, and all contact points
that are sliding have no kinetic coefficient of friction. No sliding-to-sticking transitions are considered.
\end{assumption}
The velocity constraint, $\bfA$, unilateral constraint cone, $\bfU$, and complementarity problems, $\CP$,
are thus taken to include any active frictional constraints -- see 
Section~\ref{sec:not} or \cite[Assumptions~C.3 and~C.4]{johnson_selfmanip_2013} for details.

This restrictive frictional assumption ensures that during impact (i.e., in evaluating $\CP_\IV$, the impulse--velocity complementarity
problem~\eqref{eq:CPIV}) any conflict involving the frictional constraints can be resolved by simply removing that contact
(the normal and tangential constraints)
from the active set
(see Section~\ref{sec:intfric} for a summary of pathologies that arise when this assumption is relaxed). 
As a massless link, it can always rotate out of the way fast enough
(as discussed above in Section~\ref{sec:complementarity}).

In continuous time (i.e., in evaluating $\CP_\FA$, the force--acceleration complementarity problem~\eqref{eq:CPFA})
this frictional assumption as applied to the RHex model 
states that the body has a low coefficient of friction and does
not resist tangential forces while the legs' rubber feet have a high coefficient of friction
and therefore \changed{typically do resist tangential forces.
However it is known that even for contacts with infinite friction, allowing a 
sliding mode is sometimes required to find a consistent solution to the frictional 
force--acceleration complementarity problem \cite[]{McGeer_Wobbling_1989}.
Furthermore, a strict}
\emph{a priori} assumption about friction is certainly not a good model for every situation -- consider
what happens when RHex's legs push against each other, as with the vertical
leap described in \cite{paper:johnson-icra-2013} (see in particular Footnote~8 and the end
of Section~V.A). In order to model such a behavior the leg contact points must be allowed
to transition to sliding contact when the contact forces reach the friction cone in the tangential direction,
$\bfU_{k}(\lambda)\geq 0$ \cite[Eqn.~4]{johnson_selfmanip_2013}, as with the liftoff condition
in~\eqref{eq:Ujl}--\eqref{eq:Ukl}. 
\changed{Allowing this transition enables for example
the simulation of the vertical leap shown in Figure~\ref{fig:vertsim} or the leap onto
a ledge shown in Figure~\ref{fig:ledgesim}, which each use a (hand tuned) value of $\mu=0.8$.
The forward leap of Figure~\ref{fig:leapsim} still requires this transition,
for much the same reason as in \cite[]{McGeer_Wobbling_1989}, but uses the 
relatively high values of $\mu=1.8$ for the front leg and $\mu=2.5$ for the rear.}
After transition to sliding the kinetic 
coefficient of friction is taken as $\mu_k =0$ (as with the frictionless body contact points) so that 
the jamming problems discussed in Section~\ref{sec:intfric} are again avoided.

\begin{figure}
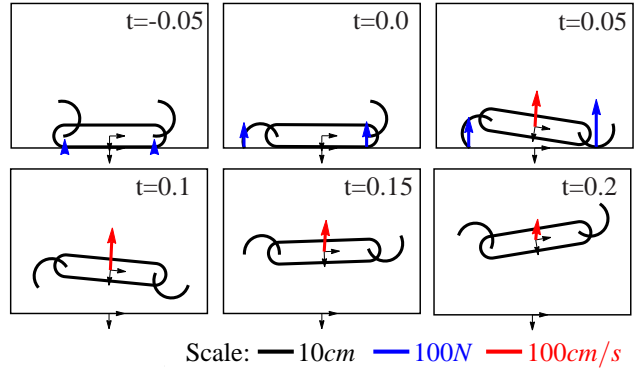

\centering
\def\svgwidth{8.3cm}
\include{vert_leap}
  \vspace{-8pt}
\caption{Keyframes from RHex simulation leaping vertically to a height of 37cm. Blue arrows show contact forces while
the red arrow shows body velocity. The coefficient of friction is $\mu=0.8$ and the relative leg
timing is $t_2=-0.06s$.
}
\label{fig:vertsim}
\end{figure}

The transition 
from sliding to sticking is much trickier. 
A sliding constraint sticks
when the tangential velocity drops to zero and the
resulting contact forces lie within the friction cone, i.e.\ contact $k \in \KT$ is to be
added if and only if its corresponding normal constraint is active ($\alpha(k) \in I$) and, 
\begin{align}
\bfA_{k} \dot{\bfq} =0 \wedge \bfU_k(\lambda_{I \cup \{k\}}) \succeq 0. \label{eq:slst}
\end{align}
However this additional condition complicates the force--acceleration complementarity problem and furthermore
is not needed to model any of the leaping behaviors in \cite{paper:johnson-icra-2013}, our motivating
scenario. Therefore for the purposes of this paper we do not consider such slip-to-stick transitions.
This limitation prevents, e.g., the modeling of a leg that slides upon contact with the ground but
gains traction later using static friction.

\section{Hybrid Dynamics}
\label{sec:hyb}

In this section we first define a general $C^r$ \emph{hybrid dynamical system} (Section~\ref{sec:hs})
that is then instantiated as the main object of study for this paper, the \emph{self-manipulation hybrid 
dynamical system} (Section~\ref{sec:smsystem}).
Section~\ref{sec:correct} establishes that this system is indeed a $C^r$ 
hybrid dynamical system and Section~\ref{sec:consistency} further shows its internal consistency.
Finally, Section~\ref{sec:zeno} shows that Zeno executions of the system accumulate
and that the pseudo-impulse truncates certain Zeno executions.

\subsection{$C^r$ Hybrid Dynamical System}
\label{sec:hs}
In the following definitions we make use of the natural (disjoint-union) topology on the hybrid state space,
consisting of a collection of manifolds with corners \cite[Def.~2.1]{Joyce2012}; 
see Appendix~\ref{app:hdg} or \cite[Sec.~II]{BurdenRevzen2013} for more details. 
The hybrid system notation introduced in this section is summarized in Table~\ref{tab:Symbols2}.

\begin{table}[t]
  \begin{center}
    \begin{tabular}{ l l}
      \toprule
      \midrule
      $\calD:=\mycoprod_{I\in\calJ} D_I$ & Hybrid system domain (Def.~\ref{def:hs})$\!\!\!\!\!$\\
      $\calF: \calD \rightarrow T\calD, F_I:= \calF|_{D_I}$ & Vector field (Def.~\ref{def:hs}) \\
      $\calG :=\mycoprod_{(I,J)} G_{I,J},\ G_{I,J} \subset D_I$ & Guard set (Def.~\ref{def:hs}) \\
      $\mathcal{H}:=(\mathcal{J},\nGamma,\mathcal{D,F,G,R})$ & Hybrid dynamical system (Def.~\ref{def:hs})$\!\!\!\!\!$\\
      $\mathcal{J} \subset \mathbb{N}$ & Discrete indexing set (Def.~\ref{def:hs})\\
      $\mathcal{R}:\calG\rightarrow \calD,\ R_{I,J}:=\calR|_{G_{I,J}}$ & Reset map (Def.~\ref{def:hs})\\
      $\calT:=\mycoprod_{i} T_i,\ T_i \subset \Real$ & Hybrid time domain (Def.~\ref{def:time})\\
      $\nGamma \subset \mathcal{J} \times \mathcal{J}$ & Set of discrete transitions (Def.~\ref{def:hs})$\!\!\!\!\!$\\
      $\sigma(\chi)\in \mathcal{J}^N$ & Contact word of length $N$ (Def.~\ref{def:ex})$\!\!\!\!\!$\\
      $\chi:\calT\into \e{D}$ & Execution of the system (Def.~\ref{def:ex})$\!\!\!\!\!$\\
      \bottomrule
    \end{tabular}
     \caption{Hybrid system and execution symbols, with definition of introduction marked.
     See also Table~\ref{tab:Symbols} for symbols introduced in Section~\ref{sec:imp}.
     }
     \label{tab:Symbols2}
  \end{center}
\end{table}

\begin{definition}
\label{def:hs}
A $C^r$ \emph{hybrid dynamical system}, $r\in\Nat\cup\set{\infty,\omega}$, is a tuple 
$\mathcal{H}:=(\mathcal{J},\nGamma,\mathcal{D,F,G,R})$, where the constituent parts are defined as:
\begin{enumerate}
\item $\mathcal{J} := \{I,J,\dots,K\} \subset \mathbb{N}$ is the finite set of \emph{discrete modes};
\item $\nGamma \subset \mathcal{J}\times\mathcal{J}$ is the set of \emph{discrete transitions}, forming a directed 
graph structure over $\mathcal{J}$;
\item ${\mathcal D} := \mycoprod_{I\in{\mathcal J}}D_I$ is the collection of \emph{domains}, where $D_I$ is a $C^r$ manifold with corners; 
\item $\calF : {\mathcal D}\into T{\mathcal D}$ is a $C^r$ hybrid map that restricts to a vector field $F_I := \calF|_{D_I}$ for each $I\in\e{J}$;
\item $\mathcal{G} :=  \mycoprod_{(I,J)\in\nGamma} G_{I,J}$ is the collection of \emph{guards}, where $G_{I,J}\subset D_{I}$ for each $(I,J)\in\nGamma$;
\item $\calR : \e{G} \into{\mathcal D}$ is a continuous map called the \emph{reset} that restricts as $R_{I,J} := \calR|_{G_{I,J}}:{G_{I,J}}\into{D_J}$ for each $(I,J)\in\nGamma$.
\end{enumerate}
\end{definition}

Before we proceed, we make a few clarifying comments about this definition.
While $\nGamma$ is a directed graph it is not generally a tree (i.e., $(I,J)$ and $(J,I)$ may both be members).
When we write $\mathcal{G} :=  \mycoprod_{(I,J)\in\nGamma} G_{I,J}$ where $G_{I,J}\subset D_{I}$ for each $(I,J)\in\nGamma$, we are simultaneously specifying that (i) each $G_{I,J}$ is an arbitrary subset of $D_{I}$ and (ii) $\e{G}$ is the finite disjoint union of these subsets.
The domain $\e{D}$ should be regarded as a $C^r$ hybrid manifold as described in Appendix~\ref{app:hdg} since each $D_I$ is a $C^r$ manifold with corners. 
In contrast the guard $\e{G}$ does not generally possess a smooth structure since each $G_{I,J}\subset D_I$ is not even required to be a topological manifold.
We say that $\e{H}$ \emph{has disjoint guards} if
$G_{I,J}\cap G_{I,L}=\varnothing$ for each pair $(I,J)$, $(I,L)\in\nGamma$ such that $J \ne L$.
An illustration of some of the elements of a $C^r$ hybrid dynamical system is shown in Figure~\ref{fig:hybrid}.

\begin{figure}
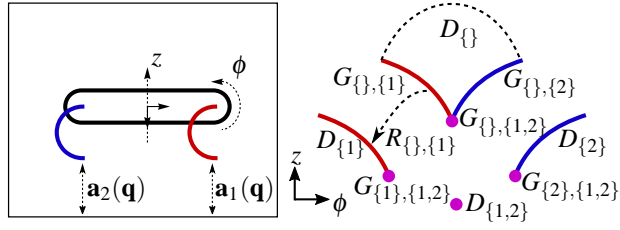

\centering
\def\svgwidth{8.3cm}
\include{hybrid_}
\caption{
Illustration of elements from the $C^r$ hybrid dynamical system (Definition~\ref{def:hs}) for the RHex model.
Note that this is a 5-dimensional model ($\dim\e{Q} = 5$), so we cannot faithfully represent the domains and guards on the printed page; instead, we illustrate a two-dimensional slice 
of height $z$ and body pitch $\phi$.
From the unconstrained mode $D_{\set{}}$ there are three possible discrete transitions corresponding to touchdown of the front leg ($G_{\set{},\set{1}}\subset\set{(\bfq,\dot{\bfq})\in T\e{Q} : \bfa_1(\bfq) = 0}$), rear leg ($\bfa_2(\bfq) = 0$), or simultaneous touchdown of both legs. 
We annotate the reset map $R_{\set{},\set{1}}$ corresponding to front leg touchdown, but emphasize that there are corresponding maps defined over all the guards $G_{I,J}$.
Each constrained mode also generally contains liftoff guards (e.g.\ $G_{\set{1},\set{}}\subset D_{\set{1}}$); these are not illustrated.
}
\label{fig:hybrid}
\end{figure}

Roughly speaking, an \emph{execution} of a hybrid dynamical system is set in motion from an initial condition in ${\mathcal D}$ by following the continuous-time dynamics determined by the vector field $\e{F}$ until the trajectory reaches the guard ${\mathcal G}$, at which point the reset map $\e{R}$ is applied to obtain a new initial condition.
We formalize this using the notion of a \emph{hybrid time domain}.

\defn{\label{def:time}
A \emph{hybrid time domain} is a disjoint union of intervals $\calT := \mycoprod_{i=1}^N T_i$ such that:
\begin{enumerate}
\item $N\in\Nat\cup\set{\infty}$;
\item $T_i\subset\Real$ is a closed interval for all $i < N$, and if $N < \infty$ then $T_N\subset\Real$ is also a closed interval; 
\item $T_i\cap T_{i+1}$ is nonempty and consists of a single element for all $i < N$.
\end{enumerate}
Note that an interval may be \emph{degenerate}, i.e.\ $T_i = \set{t_i}$.
We define $\sup\e{T} := \sup\bigcup_{i=1}^N T_i$.
}

This definition is essentially equivalent to the \emph{hybrid time trajectory} \cite[]{LygerosJohansson2003}, 
the \emph{hybrid time set} \cite[]{Collins2004}, and the \emph{hybrid time domain} \cite[]{GoebelTeel2006}, 
and enables us to formalize the conceptual description of the domain of a hybrid execution from \cite{Back_Guckenheimer_Myers_1993} 
as being divided into contiguous \emph{epochs} separated by \emph{events} where the reset map is applied at an instant referred to as an \emph{event time}.
Furthermore, this definition has two appealing features.
First, an \emph{execution} (defined below) becomes a smooth (hybrid) map defined from a hybrid time domain $\calT$ into the continuous state space $\calD$ of the hybrid system, avoiding the use of set-valued maps or cumbersome left- or right-handed limits; see Appendix~\ref{app:hdg} for the definition of smoothness for hybrid maps.
Second, it treats the model of time in the same class of mathematical objects as the model for the state space, namely, a disjoint union of smooth manifolds with corners.
Note that under this definition a transition time $t_i \in T_i \cap T_{i+1}$ appears in two consecutive components of the time domain $T_i$ and $T_{i+1}$, allowing the flow on each interval to include
both endpoints. 
Also note that this allows for two transitions (or more) to occur at the same instant in time, e.g.\ it is possible that
$T_i = \set{t_i}, T_{i-1} \cap T_{i} \cap T_{i+1} \cap ... = \set{t_i}$; the middle portion of the trajectory would have been excised from a left- or right-handed definition of execution, or potentially muddled with the surrounding trajectory portions in a set-valued definition.

\defn{\label{def:ex}
An \emph{execution} of a hybrid dynamical system $\mathcal{H}=(\mathcal{J},\nGamma,\mathcal{D,F,G,R})$ 
is a smooth map $\chi:\calT\into \e{D}$ over a hybrid time domain $\calT = \mycoprod_{i=1}^NT_i$ satisfying:
\begin{enumerate}
\item $\forall\; i\in\Nat, i\le N$: if $T_i$ is not a degenerate interval then $\dt{t} \chi|_{T_i}(t) = \e{F}(\chi(t))$ for all $t\in T_i$;
\item $\forall\; i < N$: for $\set{t_i} = T_i\cap T_{i+1}$ (the \emph{event times}), we have $\chi|_{T_i}(t_i)\in \e{G}$, $\e{R}(\chi|_{T_i}(t_i)) = \chi|_{T_{i+1}}(t_i)$, and for all $s\in T_i\sm\set{t_i}$ we have $\chi|_{T_i}(s)\not\in\e{G}$.
\end{enumerate}
The execution has an associated \emph{word} denoted by $\sigma(\chi) = \set{J_i}_{i=1}^N\in\e{J}^N$ that specifies the sequence of discrete modes encountered by the execution: $\chi|_{T_i}\subset D_{J_i}$ for all $i\in\Nat$, $i \le N$.
An execution $\chi:\e{T}\into\e{D}$ is \emph{maximal} if it cannot be extended to an execution over a longer hybrid time domain.
We say\footnote{ Following \cite{LygerosJohansson2003}.} that $\e{H}$ is: \emph{deterministic} if for every initial condition $x\in\e{D}$ there exists a unique maximal execution $\chi:\e{T}\into\e{D}$ such that $\chi|_{T_1}(0) = x$; \emph{non-blocking} if for every initial condition $x\in\e{D}$ and any maximal execution $\chi:\e{T}\into\e{D}$ with $\chi|_{T_1}(0) = x$, then with $\e{T} = \mycoprod_{i=1}^N T_i$ either $N = \infty$ or $N < \infty$ and $T_N = [t_N, \infty)$.
}
The \emph{contact word}, $\sigma(\chi)$, also called the \emph{contact motion plan}, is useful for comparing
and reasoning about different executions of the hybrid system \cite[]{xiao2001automatic}.

\subsection{The Self-Manipulation System}
\label{sec:smsystem}

While the previous hybrid system specification is very general, it is useful to instantiate it
for a model of a physical system. This section defines the self-manipulation system \cite[]{johnson_selfmanip_2013} 
(and by the analogy of that paper, equivalently a manipulation system \cite[]{book:mls-1994}, as summarized in
Section~\ref{sec:manip}), 
where the discrete mode, $I \subset \calK$, corresponds to the active contact mode.

\begin{definition}
\label{def:smhs}
A \emph{self-manipulation hybrid system} is a $C^r$ hybrid dynamical system, $\mathcal{H}_s=(\mathcal{J},\nGamma,\mathcal{D,F,G,R})$,
defined as follows, 
\end{definition}
\subsubsection{Discrete Modes} 
The set of modes, all physically permissible combinations of contact constraints, is given by,
\begin{align}
\mathcal{J} = \Big\{I\in 2^{\e{K}} : & \bfa_{I_n}^{-1}(0) \neq \varnothing \wedge \alpha(I_t) \subset I\Big\}, \label{eq:jdef}
\end{align}
that is there are two requirements: 1) there must exist some point, $\bfq \in \calQ$, such that 
these normal contact constraints are active, $\bfa_{I_n}(\bfq)=0$, and 2) no tangential constraint
is included whose corresponding normal constraint is not also included, i.e.\ $\nexists\; i \in I_t : \alpha(i) \notin I$.

\subsubsection{Edges} 
The set of edges is made up of any pair of modes whose union is also a mode -- in 
other words, sets arising from the intersection of the two base sets that satisfy 
respectively the two sets of normal constraints,

\begin{align}
\nGamma &= \left\{(I,J)\in \e{J}\times\e{J} :
I \neq J, 
I \cup J \in \calJ
\right\}. \label{eq:gamdef}
\end{align} 
This set of edges can be further restricted based on the guards, defined below,
as there are some transitions $(I,J)\in \nGamma$ where no points in $D_I$ satisfy the requirements of
the guard, i.e.\ $G_{I,J}=\varnothing$. In that case we reduce the edge set~further,
\begin{align}
\tilde{\nGamma} = \{(I,J) \in \nGamma: G_{I,J} \neq \varnothing \}. \label{eq:tgam}
\end{align}

\subsubsection{Domains} 

The domain associated with a contact mode $I\in\e{J}$ is the subset of the ambient tangent bundle $T\e{Q}$
that satisfies the normal non-penetration and tangential non-sliding constraints,
\begin{align}
D_{I} = &\left\{ (\bfq,\dot{\bfq}) \in T\mathcal{Q}:
\bfa_{I_n}(\bfq) =  0,
\bfa_{\KN}(\bfq)\geq0, 
\bfA_{I}(\bfq)\dot{\bfq} = 0
 \right\}, \label{eq:Ddef}
\end{align}
where recall that $\mathcal{Q}:=\Theta \times SE(\mathrm{d})$ is the joint space combined with the position
space of the body.

\subsubsection{Flows} 
The vector field on each domain is based on the self-manipulation dynamics for
$\ddot{\bfq}$, as in~\eqref{eq:dyn} and \cite[Eqn.~33,~72]{johnson_selfmanip_2013}, are,
\begin{align}
F_I (\bfq,\dot{\bfq}) = \left[ \dot{\bfq}, \quad \Md(\Upsilon - \Cbar \dot{\bfq} - \Nbar) - \bfAdT_I \dot{\bfA}_{I} \dot{\bfq}
\right]. \label{eq:smflow}
\end{align}
for the coordinates in $\calQ_I$, and recall from 
Assumption~\ref{ass:uml} (unconstrained massless limbs) that the coordinates associated with unconstrained massless limbs lie in the
subspace $\tilde{\calQ}_I$ and evolve according to the vector field $\tilde{F}$, such that the combined
vector field is complete over all of $T\calQ$. 
The control input $\tau \in T^*\Theta$ 
that appears in $\Upsilon$ is prescribed by a $C^r$ function of state $\tau\in C^r(T\e{Q},T^*\e{Q})$
(for example a fixed-voltage motor model $\tau_i = \kappa_P\kappa_G(1-\kappa_G \dot{\theta}_i)$ \cite[Sec.~IV.C.4]{johnson_selfmanip_2013}).

\subsubsection{Guards} 
We find it convenient to construct the guard set, for mode $I$, $G_I \subset D_I$, 
as a union of subsets indexed by its ``outgoing'' edges, $(I,J) \in \nGamma$, 
using the touchdown predicate~\eqref{eq:NTD} and 
the complementarity problem predicates~\eqref{eq:PIV} 
and~\eqref{eq:CPFA} specified as\footnote{ Note that the 
requirement that $(I,J) \in \nGamma$ and therefore $J \in \calJ$ ensures that all 
tangential constraints in the new contact condition must have a matching normal 
constraint also (or trending so). \changed{Furthermore,
as in~\eqref{eq:tgam}, note} that only some of these outgoing edges make a non-empty 
contribution, $G_{I,J}$, to this union. },
\begin{align}
G_{I,J} = \Big\{\biq \in D_{I}:\;
&\NTD(\biq) \Rightarrow \PIV( J, \biq),  \label{eq:gdpiv}\\
\lnot\; &\NTD(\biq) \Rightarrow \FA( J, \biq) \label{eq:gdffa}        
\Big\}.
\end{align}
Conceptually, the component of the guard for 
mode $I$ associated with edge $(I,J)$ consists of any base states, $\bfq$, at which 
any new touchdown event can occur from mode $I$ into mode $J$, 
according to the $\NTD$ predicate,~\eqref{eq:NTD}, subject to $\PIV$ 
complementarity,~\eqref{eq:PIV}. An additional
condition on the base and tangent states, $\biq$, is that if no new contacts are 
touching down (``liftoff''), then $\FA$ complementarity 
holds~\eqref{eq:CPFA}. 

The outlet set in domain $I$, defined as $G_I := \cup_J G_{I,J}$, is,
\begin{align}\label{eq:sm:outlet}
G_I &= \Bigg\{ \biq \in D_I: 
\Bigg(\bigvee_{k\in\KN \sm I}\bfa_k(\bfq) \preceq 0\Bigg) 
 \bigvee \Bigg(\bigvee_{i\in I} \bfU_i(\lambda_I)\prec 0\Bigg) 
  \Bigg\}.
\end{align}
The outlet set is used in the proof of Theorem~\ref{thm:nb}, but more importantly provides a
computationally expedient method of simulating an execution: first check if $\biq \in G_I$, then
determine the subsequent mode afterwards,
\begin{align}
J = \left\{ \begin{array}{lr}
\CP_{\PIV}(\biq), \qquad& \NTD(\biq)\; \\
\CP_{\FA}(\biq), & \lnot\;\NTD(\biq). 
\end{array} \right.  \label{eq:gdJcalc}
\end{align}

\subsubsection{Reset Maps} \label{sec:reset}
The reset map associated with edge $(I,J) \in \nGamma$ (taking its domain exactly on $G_{I,J}$, defined above) is,
\begin{align}
 R_{I,J}(\bfq,\dot{\bfq}) &=\left[\bfq, \quad \dot{\bfq} - \Delta \dot{\bfq}_J \right]
= \left[\bfq, \quad \dot{\bfq} - \bfAdT_{J}\bfA_{J} \dot{\bfq}  \right]. \label{eq:reset}
\end{align}
Note that for takeoff events, $J_n \subseteq I_n$, the prior velocity already agrees with the 
new contact mode and therefore the impact map has no effect.

\subsection{The Self-Manipulation System is a Hybrid System}
\label{sec:correct}

This section shows that,
\begin{theorem}
\label{thm:agree}
The self-manipulation system (Def.~\ref{def:smhs}) is a $C^\omega$ hybrid dynamical system (Def.~\ref{def:hs}).
\end{theorem}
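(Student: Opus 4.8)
The plan is to verify directly that the six-tuple $\mathcal{H}_s = (\mathcal{J},\nGamma,\mathcal{D},\mathcal{F},\mathcal{G},\mathcal{R})$ assembled in Definition~\ref{def:smhs} meets each of the six requirements of Definition~\ref{def:hs}. The discrete data is immediate: $\mathcal{J}$ from \eqref{eq:jdef} is a subset of the finite power set $2^{\e{K}}$ (since $\e{K}\subset\mathbb{N}$ is finite), hence a finite mode set, and after fixing any injection $2^{\e{K}}\hookrightarrow\mathbb{N}$ it meets requirement~(1); $\nGamma$ from \eqref{eq:gamdef} is by construction a subset of $\mathcal{J}\times\mathcal{J}$ and so a directed graph, meeting~(2); and each $G_{I,J}$ from \eqref{eq:gdpiv} is by inspection a subset of $D_I$, so $\mathcal{G} = \coprod_{(I,J)\in\nGamma} G_{I,J}$ meets~(5) with no smoothness obligation (Definition~\ref{def:hs} requires only that each guard be a subset of its source domain). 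The substantive work therefore lies in requirements~(3), (4), and~(6).

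For requirement~(3) I would show each $D_I$ in \eqref{eq:Ddef} is a $C^\omega$ manifold with corners. Starting from $T\e{Q}$, which is $C^\omega$ because $\e{Q}$ is (Assumption~\ref{ass:rigid}), the equality constraints $\bfa_{I_n}(\bfq)=0$ and $\bfA_I(\bfq)\dot{\bfq}=0$ are analytic (Assumption~\ref{ass:analytic}) and of constant rank (Assumption~\ref{ass:rank}), so by the constant-rank level-set theorem their common zero set is a $C^\omega$ embedded submanifold of $T\e{Q}$. The remaining inequalities $\bfa_k(\bfq)\geq 0$ for $k\in\KN\sm I_n$ supply the boundary and corner structure: at any point where a subset $L$ of these is active, the collection $I_n\cup L$ is again a valid set of normal constraints to which the constant-rank hypothesis applies, so the active constraint differentials are independent and transverse to the submanifold. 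This is exactly the condition under which the local model is $[0,\infty)^{|L|}\times\Real^{\dim D_I - |L|}$, giving a $C^\omega$ manifold with corners in the sense of \cite[Def.~2.1]{Joyce2012}. This transversality argument is the main obstacle: the corner structure hinges delicately on the independence assumption, and establishing it for every active subset $L$ (rather than merely for $I$ itself) is where the constant-rank hypothesis earns its keep, so I would defer the careful local-chart construction to an appendix.

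For requirement~(4) I would argue that $\mathcal{F}$ restricts to the vector field $F_I$ of \eqref{eq:smflow} on each $D_I$. The configuration-dependent blocks $\Md_I$, $\bfAdT_I$, $\Lambda_I$ are analytic because they are entries of the inverse of the analytic block matrix in \eqref{eq:astardefe}, which is invertible by Assumption~\ref{ass:cml} and whose inverse is therefore analytic (matrix inversion being analytic on the group of invertible matrices); together with the analytic $\Cbar_I$ and $\Nbar_I$ and the prescribed control $\tau$, the field $F_I$ inherits the regularity of $\tau$, so $\mathcal{F}$ is $C^\omega$ whenever the applied control is analytic. Crucially $F_I$ is genuinely tangent to $D_I$: the dynamics \eqref{eq:dyn} enforce $\bfA_I\ddot{\bfq}+\dot{\bfA}_I\dot{\bfq}=0$, preserving the velocity constraint $\bfA_I\dot{\bfq}=0$, while $\frac{d}{dt}\bfa_{I_n}(\bfq)=\bfA_{I_n}\dot{\bfq}=0$ preserves the position constraint, so $F_I(x)\in T_x D_I$; the unconstrained massless coordinates evolve under the decoupled factor $\tilde{F}_I$ (Assumption~\ref{ass:uml}), consistent with the product structure.

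Finally, for requirement~(6) I would verify $\mathcal{R}$ is continuous and restricts to maps $R_{I,J}\colon G_{I,J}\to D_J$. The formula \eqref{eq:reset} is analytic in $(\bfq,\dot{\bfq})$, hence continuous. To see the image lands in $D_J$, note the base point $\bfq$ is unchanged and already satisfies $\bfa_{J_n}(\bfq)=0$ and $\bfa_{\KN}(\bfq)\geq 0$ on $G_{I,J}$, while the reset velocity $\dot{\bfq}^+ = (\Id-\bfAdT_J\bfA_J)\dot{\bfq}$ satisfies $\bfA_J\dot{\bfq}^+ = (\bfA_J-\bfA_J\bfAdT_J\bfA_J)\dot{\bfq}=0$ by the identity $\bfA_J\bfAdT_J=\Id$ from \eqref{eq:dagprop}; thus $(\bfq,\dot{\bfq}^+)\in D_J$ by \eqref{eq:Ddef}. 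Collecting these verifications establishes that $\mathcal{H}_s$ is a $C^\omega$ hybrid dynamical system, with the flow carrying the regularity of the applied control.
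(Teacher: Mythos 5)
Your proposal is correct and follows essentially the same route as the paper's proof: a component-by-component verification that the six constituents of Definition~\ref{def:smhs} satisfy the six requirements of Definition~\ref{def:hs}, invoking Assumption~\ref{ass:rank} for the manifold-with-corners structure of each $D_I$, the invariance of the equality constraints under~\eqref{eq:dyn} for tangency of $F_I$, and the identity $\bfA_J\bfAdT_J=\Id$ from~\eqref{eq:dagprop} to show $\im R_{I,J}\subset D_J$. Your treatment is somewhat more explicit than the paper's on the corner/transversality structure and on the analyticity of $\Md_I$, $\bfAdT_I$, $\Lambda_I$ as entries of the inverse of the analytic block matrix~\eqref{eq:astardefe}, but these are refinements of the same argument rather than a different approach.
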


\begin{proof}
Definition~\ref{def:hs} has a number of requirements and so this proof \changed{is broken up} into the constituent 
parts and show that each component of Definition~\ref{def:smhs} is compatible with the requirements.
 
\begin{enumerate}[nolistsep]
\item $\calJ$ in~\eqref{eq:jdef} is a finite set, the only requirement on $\calJ$.
\item $\nGamma\,$ in~\eqref{eq:gamdef} is a subset of $\calJ \times \calJ$ by construction, and $\tilde{\nGamma}$ in~\eqref{eq:tgam} 
is a subset of $\nGamma$.
\item By Assumption~\ref{ass:rank} (simple constraints), for all $I\in\calJ$ the maps $\bfa_{I_n}\in C^\omega(\e{Q},\calC_{I_n})$ and 
$\bfA_{I}\in C^\omega(T\e{Q},T\calC_{I})$ are constant rank, and therefore
each $D_I\subset T\e{Q}$, as defined by these functions in~\eqref{eq:Ddef} is a closed $C^\omega$ manifold with 
corners \cite[Thm.~5.12]{Lee2012}, \cite[Def.~2.1]{Joyce2012}.
The Nash Embedding Theorem \cite[]{Nash1966} \changed{states that} $\e{Q}$ can be embedded analytically in Euclidean space of sufficiently high dimension; this embedding therefore restricts to an embedding of the submanifold $D_I$.

\item For all $(\bfq,\dot{\bfq})\in D_I$, $F_I(\bfq,\dot{\bfq})\in T_{(\bfq,\dot{\bfq})} D_I$ 
for $F_I$ given in~\eqref{eq:smflow} (based on~\eqref{eq:dyn}, \cite[Eqn.~33]{johnson_selfmanip_2013}, which enforce
the equality constraints of the definition of the domain,~\eqref{eq:Ddef}, and 
therefore lie within $TD_I$) and hence we may write $F_I\in C^\omega(D_I, TD_I)$.
\item $G_{I,J}$ in~\eqref{eq:gdpiv} is a subset of $D_I$ by construction.
\item For the reset map in~\eqref{eq:reset}, $\im R_{I,J}(G_{I,J}) \subset D_{J}$, 
as the domain $D_{J}$ has three requirements~\eqref{eq:Ddef}:

1) $\bfa_{J_n}(\bfq) = 0$, i.e.\ any normal constraints are touching
the surface indicated. For pre-existing constraints, $\{i: i \in I_n\cap J_n \}$, this requirement is 
already guaranteed, i.e.\ $\bfq \in G_{I,J} \subset D_{I} \Rightarrow \bfa_{I_n}=0$, and the
reset map does not alter the base coordinates $\bfq$. New
normal constraints in mode $J$, $\{j: j \in J_n \backslash I_n\}$, satisfy this requirement
by the touchdown predicate in the guard~\eqref{eq:gdpiv} (and~\eqref{eq:TDsimp}),
where $\TD(j,\biq)$ is true only when $\bfa_{j}(\bfq)=0$. 

2) $\bfa_{\KN}(\bfq) \geq 0 $, i.e.\ all base constraints are non-negative. 
Again since the reset map does not alter the base coordinates $\bfq$, then 
$\bfq \in G_{I,J} \subset D_{I} \Rightarrow \bfa_{\KN}\geq 0$.

3) $\bfA_{J} (\bfq) \dot{\bfq}=0$, i.e.\ any velocity in constrained directions is zero, but this is guaranteed by
the reset map as $\bfA_J \dot{\bfq}^+ = \bfA_J \dot{\bfq}^- - \bfA_J \bfAdT_{J}\bfA_{J} \dot{\bfq}^- = 0$. 
Therefore, as claimed, the image of the reset map~\eqref{eq:reset} lies within $D_{J}$.

\end{enumerate}
\end{proof}

\subsection{Consistency Properties}
\label{sec:consistency}

This section establishes several additional properties of the self-manipulation 
system that are of practical importance, Theorems~\ref{thm:disjoint}--\ref{thm:uniqueex},
which we shall for convenience collectively call \emph{consistency} properties.

\begin{theorem}
\label{thm:disjoint}
The self-manipulation system (Def.~\ref{def:smhs}) has disjoint guards.
\end{theorem}
\begin{proof}
The disjointedness of the guards follows directly from the assumption of uniqueness of solution for the constituent
complementarity problems. Define the liftoff predicate where no contact is touching down,
\begin{align}
\LO(I,\biq):= \lnot\;\NTD(\biq) \wedge \left(\bigvee_{k \in \calI} \bfU_k(\lambda_I)\prec0\right), \label{eq:LO}
\end{align}
and so by a refinement of the complementary block of the partition defined on the right hand side of~\eqref{eq:gdJcalc},
\begin{align}
\CP_G(I,\biq) = \left\{ \begin{array}{lr}
\CP_{\PIV}(\biq), \quad& \NTD(\biq) \\
\CP_{\FA}(\biq), & \LO(I,\biq) \\
I, & \emph{otherwise},
\end{array} \right.
\end{align} 
is equal to either the unique mode $J$ for which $\biq \in G_{I,J}$, or simply $I$ if 
the state is not in a guard, $\biq \notin G_{I,J} \forall\; (I,J) \in \nGamma$.
\end{proof}

In their most general formulation, hybrid dynamical systems can accept executions that terminate before infinite time (continuous or discrete) has elapsed, or accept multiple distinct executions from the same initial condition.
This behavior is undesirable in practice, and inconsistent with our 
experience on real manipulation and self-manipulation systems.
Necessary and sufficient conditions \cite[Lems.~III.1,~III.2]{LygerosJohansson2003} have been formulated that ensure a system is \emph{deterministic} and \emph{non-blocking}.
Since these conditions are
applicable to a general class of hybrid dynamical systems, they can be difficult to verify directly for particular classes of models.
No previous authors have established that these conditions hold for any broad class of hybrid system models 
for Lagrangian dynamics subject to multiple unilateral constraints, much less with the particular structure of the self-manipulation system.
The conditions in \cite[Lem.~1~\&~2]{JohanssonEgerstedt1999} come closest, as 
they would apply to an instance of a self-manipulation system exhibiting only a single constraint.

To serve the needs of the present paper, we introduce an extension 
of the line of reasoning in \cite{LygerosJohansson2003} establishing that the
self-manipulation system, Definition~\ref{def:smhs}, is indeed deterministic and non-blocking,
Definition~\ref{def:ex}, in the presence of an arbitrary number of unilateral constraints.

\begin{theorem}
\label{thm:det}
The self-manipulation system is deterministic.
\end{theorem}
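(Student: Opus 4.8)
The plan is to establish existence and uniqueness of a maximal execution from each initial condition $x\in\e{D}$ by constructing it one epoch at a time, invoking a uniqueness result for each of the two operations---continuous flow and discrete reset---that generate it. Since $\e{D}=\mycoprod_{I\in\e{J}}D_I$ is a disjoint union, the point $x$ lies in exactly one domain $D_I$, so the initial mode $J_1=I$ is forced; this is the base case of an induction on epochs.

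Next I would treat the continuous part. On $D_I$ the restriction $\e{F}|_{D_I}=F_I$ is $C^\omega$ (as established in the proof of Theorem~\ref{thm:agree}) and hence locally Lipschitz, and by Assumption~\ref{ass:complete} (Lagrangian Dynamics) it is forward complete; the Picard--Lindel\"{o}f theorem therefore yields a unique integral curve $\chi_I$ defined for all $t\ge 0$ with $\chi_I(0)=x$. I would then set the first event time $t_1:=\inf\set{t\ge 0 : \chi_I(t)\in G_I}$, with $t_1=\infty$ if this set is empty. The outlet set $G_I$ in~\eqref{eq:sm:outlet} is characterized by the trending conditions $\bfa_k(\bfq)\preceq 0$ and $\bfU_i(\lambda_I)\prec 0$; by Lemma~\ref{lem:trend} these hold exactly when the corresponding analytic function is about to become negative along $F_I$, so the trajectory genuinely stays inside $D_I$ (where $\bfa_{\KN}\geq 0$) for $t\in[0,t_1)$ and admits no continuous continuation within $D_I$ past $t_1$. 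This is precisely what excludes a competing flow-versus-jump ambiguity: reaching $G_I$ forces a reset rather than permitting the flow to continue. The first component of the time domain is then $T_1=[0,t_1]$, degenerating to $\set{0}$ when $x\in G_I$ already.

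For the discrete part, at the event time the state $\chi_I(t_1)\in G_I=\bigcup_J G_{I,J}$ lies, by Theorem~\ref{thm:disjoint} (disjoint guards), in a unique $G_{I,J}$, with successor mode given by the single-valued selection of~\eqref{eq:gdJcalc} ($\CP_{\PIV}(\biq)$ when $\NTD(\biq)$ holds and $\CP_{\FA}(\biq)$ otherwise, each single-valued by Assumptions~\ref{ass:fac} and~\ref{ass:ivc}). Applying the reset $R_{I,J}$---itself a function, with image in $D_J$ as shown in the proof of Theorem~\ref{thm:agree}---produces a unique new initial condition in $D_J$, setting $J_2=J$. Thus each epoch and each event is uniquely determined by its predecessor, so by induction the entire map $\chi$ is unique; extending it for as long as possible yields the unique maximal execution, establishing determinism.

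The main obstacle is the rigorous treatment of the first event time $t_1$ together with the flow-versus-jump exclusion: one must confirm that the trending-relation guard definitions make $t_1$ well-defined and make the reset (rather than continued flow) the only admissible continuation at $t_1$, which is exactly where Lemma~\ref{lem:trend} and the closure statement of Lemma~\ref{lem:closure} do the work; the analyticity hypothesis (Assumption~\ref{ass:analytic}) is essential here, since for merely $C^\infty$ data the trending relation would not certify the sign of the constraint along the flow. A secondary point requiring care is the possibility of several consecutive degenerate intervals---instantaneous chains of resets when $R_{I,J}(\chi_I(t_1))$ again lands in a guard---but each such reset is again uniquely pinned down by the disjoint guards, so uniqueness is unaffected; whether infinitely many such transitions can accumulate is a Zeno/non-blocking matter deferred to Section~\ref{sec:zeno}.
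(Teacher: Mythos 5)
Your proof has the same skeleton as the paper's: continuous trajectories are unique by the flow imposed under Assumption~\ref{ass:complete}, a discrete transition can occur only at a guard, the guards are disjoint by Theorem~\ref{thm:disjoint} so the successor mode is unique, and the reset $R_{I,J}$ is a function whose image lies in $D_J$. The epoch-by-epoch induction is a harmless elaboration of exactly that logic.

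However, one step in your justification fails as stated. You claim that upon reaching $G_I$ at time $t_1$ the trajectory ``admits no continuous continuation within $D_I$ past $t_1$,'' and you assign to Lemmas~\ref{lem:trend} and~\ref{lem:closure} the job of excluding the flow-versus-jump ambiguity. That reasoning is sound for touchdown guards, where some $\bfa_k$ with $k\notin I$ is about to go negative and continuation would leave the set $\set{\bfa_{\KN}\geq 0}$. It is false for liftoff guards: these are defined by a trending-negative contact force, $\bfU_i(\lambda_I)\prec 0$ for some $i\in I$, and generically lie in the \emph{interior} of $D_I$. The equality constraints defining $D_I$ in~\eqref{eq:Ddef} are invariant under $F_I$ (the flow of~\eqref{eq:dyn} preserves $\bfa_{I_n}=0$ and $\bfA_I\dot{\bfq}=0$, and the strict inequalities $\bfa_k>0$ persist by continuity), so the integral curve continues perfectly well inside $D_I$ past $t_1$; it merely carries a physically meaningless negative constraint force, which the definition of $D_I$ does not forbid. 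What actually rules out continued flow is the semantics of execution: condition 2 of Definition~\ref{def:ex} requires $\chi|_{T_i}(s)\notin\e{G}$ for all $s\in T_i\sm\set{t_i}$, so an execution cannot flow through a guard point and a reset is the only admissible continuation. This is precisely why the paper's proof cites Definition~\ref{def:ex} at this juncture; your argument should do the same rather than resting the exclusion on the trending lemmas, which cannot do that work for interior guards.
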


\begin{proof}
Assumption~\ref{ass:complete} (Lagrangian dynamics) imposes a partial flow on each component $D_I$, hence 
continuous trajectories are unique and nondeterminism could only be introduced 
through a reset. But the definition of execution, Def.~\ref{def:ex}, implies that a discrete transition 
occurs at $\biq\in D_I$ if and only if there exists $J\in\e{J}\sm\set{I}$ such that $\biq \in G_{I,J}$.
Since the guards are disjoint by Theorem~\ref{thm:disjoint},
there is at most one guard containing $\biq$. 
The execution continues from $R_{I,J}(\biq)\in D_J$.
\end{proof}

The non-blocking property is a bit more subtle as the self-manipulation systems escape some of the 
structure required to handle the more general class of systems addressed in \cite{LygerosJohansson2003} and
used there to establish conditions for non-blocking. For self-manipulation hybrid systems the non-blocking 
property arises from the discrete logic and continuous dynamics in an essential manner that we now rehearse 
informally in preparation for the statement and proof of Theorem~\ref{thm:nb}. The 
guard, $G_I$, intersects the corresponding domain, $D_I$, both on the boundary of the domain (to handle impact on an erstwhile 
inactive constraint) as well as in the interior of the domain (to handle a sign change on some active 
constraint's contact force). An execution might be blocked by conventional finite escape, i.e., if 
the continuous flow brings some initial state to the boundary of the domain at a point in the complement
of the guard in finite time. Alternatively, it might be blocked by hybrid ambiguity, i.e., if the continuous 
flow brings some initial state to some point that is in the complement of the guard but still lies
on the boundary of the guard, for this would violate the semantics of execution that restricts continuous 
flow to closed intervals (formally, Def.~\ref{def:ex} requires a minimum -- not merely an infinum -- time of entry 
into a guard). In the following proof we preclude both cases by showing that the guard contains 
all points reached by the continuous flow that lie in the boundary of the domain or the interior of the domain
but the boundary of the guard.

\begin{theorem}
\label{thm:nb}
The self-manipulation system is non-blocking.
\end{theorem}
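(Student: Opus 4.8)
The plan is to take an arbitrary maximal execution $\chi:\calT\into\e{D}$ with $\calT = \coprod_{i=1}^N T_i$ and word $\sigma(\chi) = \{J_i\}_{i=1}^N$. If $N = \infty$ we are done, so suppose $N < \infty$, write $I := J_N$ for the final mode, and let $x := \chi|_{T_N}(t_N) \in D_I$ be the state entering it. The reduction I would use is: since the maximal execution has no further transitions, the flow in $D_I$ never re-enters a guard after $t_N$, and it then suffices to show that this forces the integral curve of $F_I$ through $x$ to remain in $D_I$ and be defined for all $t \ge t_N$, whence $T_N = [t_N,\infty)$. Forward completeness (Assumption~\ref{ass:complete}) supplies the integral curve $\chi(t)$ of $F_I$ for all $t \ge t_N$, so the only way the last interval can be bounded is one of the two obstructions anticipated in the discussion preceding the statement: finite escape of the flow to a domain-boundary point lying off the guard, or a ``hybrid ambiguity'' in which the flow accumulates on $\partial G_I$ without ever meeting $G_I$. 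I would preclude both using the outlet-set description~\eqref{eq:sm:outlet} together with the analyticity afforded by Assumption~\ref{ass:analytic}.

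First I would rule out finite escape. By construction $F_I$ is invariant on the constraint surface $\{\bfa_{I_n} = 0,\ \bfA_I\dot{\bfq} = 0\}$, so $\chi$ can leave $D_I$, cf.~\eqref{eq:Ddef}, only by violating one of the unilateral conditions $\bfa_k(\bfq) \ge 0$ for $k \in \KN\sm I$. Since $\bfa_k$ and $F_I$ are analytic (Assumption~\ref{ass:analytic}), the map $t \mapsto \bfa_k(\chi(t))$ is real-analytic; at the first time $\tau_{\mathrm{exit}}$ at which some $\bfa_k(\chi(\tau_{\mathrm{exit}})) = 0$ with $\bfa_k\circ\chi$ passing to negative values, Lemma~\ref{lem:trend} gives $\bfa_k(\chi(\tau_{\mathrm{exit}})) \prec_{F_I} 0$, hence trending non-positive, so that $\chi(\tau_{\mathrm{exit}}) \in G_I$ by~\eqref{eq:sm:outlet}. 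Thus every point at which the flow would exit the domain already lies in the guard, and finite escape to a domain-boundary point off the guard cannot occur.

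Next I would show that the first guard-entry time is actually attained, which dispatches the hybrid-ambiguity case. Each scalar defining $G_I$ in~\eqref{eq:sm:outlet} --- the inactive gaps $\bfa_k$ for $k\in\KN\sm I$ and the active contact forces $\bfU_i(\lambda_I)$ for $i\in I$ --- composes with $\chi$ to a real-analytic function of $t$; unless it vanishes identically (in which case $x$ already lies in $G_I$ and a transition occurs at $t_N$), its zeros are isolated and it has constant sign on each complementary interval. By Lemma~\ref{lem:trend}, membership in the corresponding piece of $G_I$ at time $t$ is equivalent to that analytic function being $\le 0$ (respectively $< 0$) on a right-neighborhood of $t$. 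A short argument then shows that the infimum over $[t_N,\infty)$ of the times satisfying this condition, if finite, is attained: at that infimum the analytic function is $\le 0$ to the right, so the trending condition holds there, while strict negativity to the left would contradict minimality. Taking the minimum of these finitely many infima yields a first guard-entry time $\tau$ with $\chi(\tau)\in G_I$; since $\tau \le \tau_{\mathrm{exit}}$ by the previous paragraph, $\chi([t_N,\tau)) \subset D_I\sm G_I$ and $\chi(\tau)\in D_I$. This is exactly where Lemma~\ref{lem:closure} and analyticity are indispensable: for merely $C^\infty$ data the trending germ need not reflect any honest sign on an interval, and the entry time could fail to be attained.

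Finally I would conclude. If such a finite $\tau$ existed, Assumptions~\ref{ass:fac} and~\ref{ass:ivc} provide the implicit mode-selection solution, so the destination mode $J$ of~\eqref{eq:gdJcalc} and the reset $R_{I,J}(\chi(\tau))$ are well defined, and the execution would extend over the closed interval $[t_N,\tau]$ followed by a transition --- contradicting maximality with $N$ as the final index. Hence no finite guard-entry occurs in mode $I$ after $t_N$; combined with the second paragraph (no domain exit without guard entry) the flow remains in $D_I$ for all $t\ge t_N$, is defined there by completeness, and therefore $T_N = [t_N,\infty)$, establishing the non-blocking property. I expect the main obstacle to be the third paragraph: turning the germ-based definition of the trending relations (Definition~\ref{def:through}) into a genuine ``minimum time of entry'' statement compatible with the closed-interval semantics of Definition~\ref{def:ex}, which is precisely the step that forces the analyticity hypothesis rather than mere smoothness.
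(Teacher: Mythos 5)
Your proposal is correct and follows essentially the same route as the paper's proof: both reduce non-blocking to showing that the flow can only exit $D_I$, or accumulate on the boundary of the guard within the domain, at points that already belong to the outlet set~\eqref{eq:sm:outlet}, using forward completeness (Assumption~\ref{ass:complete}), invariance of the equality constraints under the flow, the closure computation of Lemma~\ref{lem:closure}, and the analyticity-based trending dichotomy of Lemma~\ref{lem:trend}. The only difference is organizational --- you argue by contradiction with maximality via an explicit first-guard-entry-time attainment argument, while the paper gives a direct pointwise case analysis (strictly interior, trending positive, or in the guard) --- but the decomposition into finite escape versus hybrid ambiguity and the lemmas invoked are the same.
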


\begin{proof}

Recall from part 6 of the proof of Theorem~\ref{thm:agree} that the image of the guard set under the reset
map is within the domain (and thus the discrete transition is never blocking),
and from Assumption~\ref{ass:complete} (Lagrangian dynamics) that the flow is forward complete over $T\calQ$.
Therefore we need only check that the flow only reachs the boundary of the domain or the 
boundary of the guard within the domain interior at a point which is included in the guard.

Recall from~\eqref{eq:Ddef} the definition of $D_I$, and note that it is a~subset,
\eqnn{\label{eq:DIcond}
D_I \subset \set{\biq\in\e{Q} : \bigwedge_{k\in\KN \sm I} \bfa_{k}(\bfq)\geq0},
}
where furthermore,
\eqnn{\label{eq:velcons}
\paren{\bigwedge_{i\in I_n} \bfa_{i}(\bfq) =  0}\wedge \paren{\bigwedge_{i\in I} \bfA_{I}(\bfq)\dot{\bfq} = 0}.
}
Note that the constraints in~\eqref{eq:velcons} are invariant under the flow of~\eqref{eq:dyn}, whence 
under the completeness assumption (Assumption~\ref{ass:complete}) it is only possible to flow out 
of $D_I$ in forward time by violating one of the inequality conditions of~\eqref{eq:DIcond}.

Recall from~\eqref{eq:sm:outlet} the union of all guards, $G_I$, and
then using Lemma~\ref{lem:closure} note that the
closure of the union of all guards is,
\begin{align}
\label{eq:barGI}
\bar{G}_I &= \Bigg\{ \biq \in D_I: 
\Bigg( \bigvee_{k\in\e{K}\sm I}\bfa_k(\bfq) \le 0\Bigg) \bigvee \Bigg(\bigvee_{i\in I} \bfU_i(\lambda_I) \le 0\Bigg)\Bigg\}.
\end{align}

Now consider an arbitrary point in the domain, $\biq \in D_I$. 
If $\mu(\biq) > 0$ for all $\mu \in \set{\bfa_k}_{k\in\e{K}\sm I}\cup\set{\bfU_i(\lambda_I)}_{i\in I}$ 
then $\biq$ is on the interior of $D_I\backslash G_I$ and it is possible to flow for positive time while remaining in the domain $D_I$ and not reach a guard $G_I$.
Otherwise there exists $k\in\e{K}\sm I$ such that $\bfa_k(\biq) = 0$ (i.e., the state has reached the boundary of the domain) 
or there exists $i\in I$ such that $\bfU_i(\lambda_I(\biq)) \le 0$ (i.e., a sign change on some active 
constraint's contact force). We now consider the two (mutually exclusive) possibilities concerning 
whether a contact condition or an active force is trending negative: 
\begin{enumerate}[nolistsep]
\item $\bfa_k(\biq) \succ 0$ for all $k\in\e{K}\sm I$ or $\bfU_i(\lambda_I)\succeq 0 $ for all $i\in I$;
\item there exists $k\in\e{K}\sm I$ such that $\bfa_k(\biq) \preceq 0$ or there exists $i\in I$ such that $\bfU_i(\lambda_I(\biq)) \prec 0$.
\end{enumerate}
\noindent
In case 1), when there is neither a negative trending contact nor active force, then
it is possible to flow for positive time in the domain without intersecting any guard
or leaving the domain (Lemma~\ref{lem:trend}); this provides the unique extension to the execution.
The contrary case 2) is just the situation the hybrid system's logic is designed to flag: i.e., 
$\biq$ is in a guard,~\eqref{eq:sm:outlet}, so the application of the reset map provides 
the unique extension to the execution.

Therefore every initial condition $\biq\in D$ yields a unique execution defined over a hybrid time 
domain that spans infinite time (continuous or discrete), whence the self-manipulation system is 
non-blocking.
\end{proof}

The self-manipulation hybrid system may undergo multiple hybrid transitions in succession 
at the same time $t$, as there is no ``dwell time'' requirement to continue after reset under the continuous dynamics
for any minimum amount of time. Therefore it is important to bound the number of such 
multiple transitions to ensure that the continuous execution eventually continues over an open interval of time. Here, 
Theorem~\ref{thm:dbltrans} relates the image of the reset map to the guard sets
to show that continuous execution continues after at most two successive hybrid transitions.
 
As a simple example, consider the self-manipulation system model consisting of a point mass in a gravitational field that points away from a constraint surface (i.e., a ball under a ceiling).
If the mass is initialized with a velocity that causes it to impact the constraint surface, it transitions first to the constrained
(ceiling) contact mode through an impact that ensures zero relative velocity. After spending zero time in the constrained mode,
and therefore at the same continuous time, the system transitions again back to the unconstrained mode as the mass succumbs to
gravity. The execution continues in the unconstrained mode as the mass accelerates away from the ceiling. 
In the self-manipulation system these are treated as separate transitions.
At the expense of a small amount of additional bookkeeping in the definition of execution, we dramatically 
simplify the specification of the reset map (in this example, eliminating the need for a reset map
from the unconstrained mode to the same unconstrained mode consisting of impulses from a constraint not present
in either the original or destination mode). 

\begin{theorem}
\label{thm:dbltrans}
An execution of a self-manipulation hybrid system without massless limbs may undergo no more than two hybrid transitions at
a single time $t$.
\end{theorem}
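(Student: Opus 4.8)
The plan is to classify every instantaneous transition as either a \emph{touchdown} (one where $\NTD(\biq)$ holds, so the destination is selected by $\CP_\PIV$ via~\eqref{eq:gdpiv}) or a \emph{liftoff} (one where $\NTD(\biq)$ fails, so the destination is selected by $\CP_\FA$ via~\eqref{eq:gdffa}), as codified in~\eqref{eq:gdJcalc}. Two structural facts drive the argument. First, by~\eqref{eq:CP} together with Assumptions~\ref{ass:fac} and~\ref{ass:ivc}, both $\CP_\FA$ and $\CP_\PIV$ are functions of the continuous state $\biq$ alone, and each returns the \emph{complete} next mode in a single evaluation rather than one constraint at a time. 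Second, because the theorem excludes massless limbs, $\Mbar$ is nondegenerate, which is precisely what licenses the velocity/acceleration interpretations of the impulse and force conditions in Theorems~\ref{thm:ivc} and~\ref{thm:fac}. I would then show that any maximal chain of transitions occurring at one time $t$ has the form ``at most one touchdown followed by at most one liftoff.''

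First I would prove that a touchdown can only be succeeded by a liftoff, never by a second touchdown. Let $J := \CP_\PIV(\bfq,\dot{\bfq}^-)$ be the destination of a touchdown and let $\dot{\bfq}^+ := (\Id - \bfAdT_J\bfA_J)\dot{\bfq}^-$ be the post-impact velocity~\eqref{eq:dqp}, which already lies in $D_J$ (part 6 of the proof of Theorem~\ref{thm:agree}). For the retained constraints $j\in J$ the reset gives $\bfA_j\dot{\bfq}^+ = 0$; for each removed constraint $k\in\calI\sm J$ the predicate~\eqref{eq:PIV} forces $\bfU_k(\limp_{J\cup\set{k}}) < 0$, whence $\bfA_k\dot{\bfq}^+ > 0$ by Theorem~\ref{thm:ivc} (here I use the inclusion $\CP_\IV\subseteq\CP_\PIV$ from~\eqref{eq:PIVIV}, so that a $\PIV$-removed constraint is also $\IV$-removed and therefore genuinely separating). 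Since every constraint that is instantaneously impacting satisfies $\TD$ and hence lies in the scope $\calI$ of~\eqref{eq:IScope}, the post-impact state has no impacting constraint at all, i.e.\ $\NTD(\bfq,\dot{\bfq}^+)$ is false. Consequently the only transition the state $(\bfq,\dot{\bfq}^+)$ can trigger is a liftoff.

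Next I would prove that a liftoff is terminal within a single time $t$. For a liftoff the destination $J = \CP_\FA(\biq)$ satisfies $J\subseteq\calI$, and because $\NTD$ fails, every normal constraint in the scope has zero (rather than impacting) velocity; since moreover Assumption~\ref{ass:friction} excludes slip-to-stick so that no new tangential constraint is added, $\bfA_J\dot{\bfq}=0$ and the reset~\eqref{eq:reset} is the identity on velocity (and position)---this is the ``$J_n\subseteq I_n$'' remark following~\eqref{eq:reset}. Hence the post-liftoff state equals the pre-liftoff state. Because $\CP_\FA$ depends only on this unchanged state and is single-valued by Assumption~\ref{ass:fac}, re-evaluating the guard in the new mode $J$ returns $J$ again, so no further liftoff is enabled; and since $\NTD$ is still false no touchdown is enabled either. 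The continuous flow therefore resumes immediately after a liftoff.

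Combining the two facts, any chain of successive transitions at a fixed time $t$ is a (possibly empty) touchdown followed by a (possibly empty) liftoff, for a maximum of two. The main obstacle I anticipate is the touchdown step: establishing rigorously that no constraint remains impacting after the impact \emph{under the pseudo-impulse modification}, which requires combining the inclusion~\eqref{eq:PIVIV} with the nondegeneracy of $\Mbar$ (the ``no massless limbs'' hypothesis) so that Theorem~\ref{thm:ivc} applies and the separating-velocity conclusion is valid; a secondary point to verify with care is that the liftoff reset genuinely fixes the velocity, which hinges on $J\subseteq\calI$, the failure of $\NTD$, and the exclusion of slip-to-stick transitions in Assumption~\ref{ass:friction}.
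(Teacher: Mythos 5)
Your proposal is correct and follows essentially the same route as the paper's own proof: partition instantaneous transitions into touchdown ($\NTD$ true) and liftoff ($\NTD$ false) events, show via impulse--velocity complementarity and Theorem~\ref{thm:ivc} that the post-impact state has no impacting constraint (so a touchdown can only be followed by a liftoff), and show that a liftoff reset is the identity while $\CP_\FA$ depends only on the unchanged state, so no further transition is enabled. Your treatment is in fact slightly more careful than the paper's on two points -- you pinpoint that the no-massless-limbs hypothesis is exactly what makes the separating velocity well defined so that Theorem~\ref{thm:ivc} applies, and you verify the removed-constraint inequality $\bfU_k(\limp_{J\cup\{k\}})<0$ directly from the $\PIV$ predicate -- but these are refinements of, not departures from, the paper's argument.
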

\begin{proof}
The guards are partitioned into two types by the new touchdown predicate, $\NTD(\biq)$,~\eqref{eq:NTD}, into \emph{touchdown}
and \emph{liftoff} (non-touchdown) components. It suffices to show simply the reset map 
(i) takes states that are in touchdown guards to  either non-guard states, or states in a liftoff guard, 
and that (ii) the reset map always takes states in the liftoff guard to non-guard states.

To show (i), note that for all points $\biq$ in a touchdown guard,~\eqref{eq:gdpiv},
the impulse--velocity complementarity ensures that all constraints $k$ that are touching
but are not in the outgoing contact mode $J$ have a separating velocity, $\bfA_k \dot{\bfq}^+>0$, 
after the application
of the reset map, if such a velocity is well defined~\eqref{eq:UkP2} (as in Theorem~\ref{thm:ivc}). 
Therefore all constraints $k$ not in $J$ must either not be 
touching ($\bfa_k(\biq^+) > 0$) or have a separating velocity ($\bfA_k \dot{\bfq}^+ > 0$), but therefore
cannot satisfy the touchdown predicate, $\TD(k,\biq^+)$,~\eqref{eq:TDsimp}, and so $\NTD(\biq^+)$ is false and the
state after the reset map is either not in a guard or is in a liftoff guard.

To show (ii), note that for all points $\biq$ in a liftoff guard, no contacts are touching down, 
i.e.\ $\forall\;k\in\KN, \TD(k,\biq)$
is false. The force--acceleration complementarity problem that defines these guards,
$\FA(J,\biq)$,~\eqref{eq:CPFA}, does not depend on the active mode, $I$. Furthermore, the reset map $R_{I,J}(\biq)$ is
simply identity, and so the state remains the same after this transition, $\biq=R_{I,J}(\biq)$. Therefore
$\biq \in G_{I,J}$ such that $\NTD(\biq)$ is false and $\FA(J,\biq)$ is true
implies that $\NTD(R_{I,J}(\biq))$ is also false and that $\FA(J,R_{I,J}(\biq))$ is still the correct
solution, and the state after the reset map is not in any guard.

Therefore at any given time $t$, the system can undergo at most two transitions -- first if some contact
is touching down the system undergoes an impulsive transition, and then if some contact force is trending
negative it undergoes a smooth liftoff, but no further transitions are possible at that contact
mode and state.
\end{proof}

Finally, the last consistency property considers the dynamics of the discrete modes.  
A general $C^r$ hybrid system whose domains are intersecting subsets of some ambient 
domain (as is true for the self-manipulation hybrid system) need not define a unique
execution for a given state from any appropriate mode, even if the hybrid system is 
deterministic and non-blocking. A given state $\biq \in \calQ$ in general 
is a member of more than one domain, such as the corner point in purple from
Figure~\ref{fig:hybrid} or, for an arbitrary point, any subset of the current constraint mode.
As such,
we must ensure that the evolution of the system is not biased by ascribing the 
``incorrect'' mode to that state, nor capable of supporting more than one word 
(discrete mode sequence) over a given sequence of continuous time trajectories 
associated with an execution.  

\begin{theorem}
\label{thm:uniqueex}
From an initial state $\biq$ at time $t_0$ and any contact mode $I$ consistent with that state, 
i.e.\ $I \in \calJ_\bfq:=\{I \in \calJ: \biq \in D_I\}$, the execution is uniquely defined (in both state
and contact mode) for all $t>t_0$ after undergoing up to one hybrid transition. 
\end{theorem}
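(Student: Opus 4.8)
The plan is to reduce the claim to the determinism (Theorem~\ref{thm:det}) and non-blocking (Theorem~\ref{thm:nb}) properties already established, by showing that every choice of consistent starting mode funnels the execution to a single point of $\calD$ after at most one reset, whereafter determinism pins down the continuation. Two facts make this work, both immediate from the construction in Section~\ref{sec:smsystem}: (i) the target of any transition out of state $\biq$ is a function of $\biq$ alone, namely $\CP_\PIV(\biq)$ or $\CP_\FA(\biq)$ by the guard logic~\eqref{eq:gdJcalc}, neither of which references the source mode $I$; and (ii) the reset map $R_{I,J}$ in~\eqref{eq:reset} depends only on the destination $J$ through $\bfAdT_J\bfA_J$, so that the post-transition continuous state is independent of $I$. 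I would first record these observations and fix an arbitrary $I\in\calJ_\bfq$.

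Then I would split on $\NTD(\biq)$. If $\NTD(\biq)$ holds, there is an impacting constraint $k$ with $\bfa_k(\bfq)=0$ and $\bfA_k(\bfq)\dot{\bfq}<0$; since $\biq\in D_I$ forces $\bfA_I\dot{\bfq}=0$ we must have $k\notin I$, and $\bfa_k(\bfq)\preceq 0$, so the outlet characterization~\eqref{eq:sm:outlet} puts $\biq\in G_I$. By disjoint guards (Theorem~\ref{thm:disjoint}) the unique enabled transition carries $I$ to $J:=\CP_\PIV(\biq)$, and by (ii) lands at the common point $(\bfq,(\Id-\bfAdT_J\bfA_J)\dot{\bfq})$ in $D_J$, independent of $I$. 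If instead $\NTD(\biq)$ is false, every touching constraint is resting ($\bfA\dot{\bfq}=0$), so with $J:=\CP_\FA(\biq)$ the reset~\eqref{eq:reset} reduces to the identity. When $I=J$ one checks, using Theorem~\ref{thm:fac} together with the outlet formula~\eqref{eq:sm:outlet} and Assumption~\ref{ass:fac}, that $\biq\notin G_I$, so the flow continues in $J$ with no transition; when $I\ne J$ the same ingredients show $\biq\in G_{I,J}$, whence the (identity) reset again lands at the common point $(\biq,J)$.

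In every case all admissible starting modes reach a single configuration $p\in\calD$ at time $t_0$ after at most one reset. Invoking determinism (Theorem~\ref{thm:det}) and non-blocking (Theorem~\ref{thm:nb}), the unique maximal execution issuing from $p$ is then shared by every starting choice, which yields a single well-defined state trajectory and contact-mode sequence for all $t>t_0$. Any further instantaneous transition at $t_0$ (for instance the immediate liftoff of the ball-under-a-ceiling example preceding Theorem~\ref{thm:dbltrans}) occurs only after $p$ is reached and is therefore part of this already-unique continuation, consistent with the two-transition bound of Theorem~\ref{thm:dbltrans}.

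I expect the main obstacle to be the non-touchdown case, specifically verifying the dichotomy that $\biq\notin G_I$ holds if and only if $I=\CP_\FA(\biq)$. This is where label-independence of the mode must be extracted from the physics rather than from the reset map: one must argue that a consistent but \emph{incorrect} active set necessarily trips the outlet set, either through an active contact force trending negative or through an inactive-but-resting constraint whose unconstrained acceleration is penetrating ($\bfa_k\preceq 0$). Matching these trending conditions to the force--acceleration biconditional~\eqref{eq:CPFA} is exactly the content of Theorem~\ref{thm:fac} and Lemma~\ref{lem:closure}, while the uniqueness half relies on Assumption~\ref{ass:fac}; care is also needed with tangential constraints, where the absence of slip-to-stick transitions (Assumption~\ref{ass:friction}) must be used to rule out spurious additions to the active set.
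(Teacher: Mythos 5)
Your proposal is correct and follows essentially the same route as the paper's own proof: split on the truth of $\NTD(\biq)$, observe that the destination mode $J$ is determined by $\CP_\PIV(\biq)$ or $\CP_\FA(\biq)$ alone (independent of the source mode $I$, by Assumptions~\ref{ass:ivc} and~\ref{ass:fac}), and that the reset map~\eqref{eq:reset} depends only on $J$ (being the identity in the liftoff case), so all consistent starting modes reach the same point $(J,R_{I,J}(\biq))$ after at most one transition. Your explicit appeals to Theorems~\ref{thm:disjoint},~\ref{thm:det}, and~\ref{thm:nb} for the continuation merely make precise what the paper leaves implicit, and do not change the argument.
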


\begin{proof}
The proof considers in turn the two mutually exclusive cases defined by the truth or 
falsity of the predicate $\NTD(\biq)$,~\eqref{eq:NTD}, and in each case the execution is
uniquely defined due to the uniqueness of the corresponding complementarity problem.

If $\NTD(\biq)$ is true then there is some additional constraint $j$ that is impacting, 
i.e.\ $\TD(j,\biq)$,~\eqref{eq:TDsimp}, where note that $\forall\; I \in \calJ_\bfq, j \notin I$. Therefore 
from any consistent mode $I\in \calJ_\bfq$, $\biq$ is in some guard $G_{I,J}$,~\eqref{eq:gdpiv},
 determined by $\PIV$ 
complementarity, $J=\CP_\PIV(\biq)$,~\eqref{eq:PIV}. Since the reset map depends on $J$ but not $I$, and 
$J$ is unique by the impulse--velocity complementarity assumption (\ref{ass:ivc}), the execution
continues from the unique point $(J,R_{I,J}(\biq))$.

If $\NTD(\biq)$ is false, then for any $I\in \calJ_\bfq$ the system could be in a liftoff
guard. Consider the mode $J=\CP_\FA(\biq)$, \eqref{eq:CPFA}, uniquely defined for a given
$\biq$ by the force--acceleration complementarity assumption (\ref{ass:fac}). If $I=J$ then the state is not in
a guard and therefore no reset map is applied.
Otherwise $\biq \in G_{I,J}$,~\eqref{eq:gdffa}, and the system undergoes a hybrid transition, 
though recall that liftoff reset maps are the identity map.
In either case the execution continues from the unique point $(J,\biq)$.
\end{proof} 

\subsection{Zeno}\label{sec:zeno}

An execution for a hybrid system is referred to as \emph{Zeno} if it undergoes an infinite number of discrete transitions in finite time \cite[Def.~II.3]{LygerosJohansson2003}.

\begin{definition}
\label{def:zeno}
An execution $\chi:T\into\e{D}$ for a hybrid dynamical system $\hds$ over a hybrid time trajectory $T = \mycoprod_{i=1}^N T_i$ is \emph{Zeno} if $N = \infty$ and $\sum_{i=1}^\infty \abs{T_i} < \infty$.
\end{definition}

Zeno executions need not accumulate in a general hybrid system, that is, the limit $\lim_{t\goesto\sup\e{T}}\chi(t)$ may be undefined \cite[Def.~6]{ZhangJohansson2001}.
However, Lagrangian systems subject to unilateral constraints give rise to unique trajectories defined for all time \cite[Thm.~10]{Ballard2000}.
We show in Section~\ref{sec:zenothms} that the self-manipulation hybrid dynamical system (Def.~\ref{def:smhs}) 
models this property through the mechanism of Zeno executions accumulating on a
unique limit in the ambient space $T\e{Q}$, from which the hybrid execution proceeds through the next 
smooth component (and so forward, continuously, through ambient time). 
Then in Section~\ref{sec:zenodisc} we discuss extensions and connections with results in the literature.

\subsubsection{Accumulation of Zeno executions}\label{sec:zenothms}
In the following Theorem, we rely on several results originally obtained using sophisticated measure-theoretic techniques \cite[]{Ballard2000}.
At the expense of additional notational overhead, we reproduce the necessary arguments in our hybrid system framework using elementary mathematical machinery.

\begin{theorem}\label{thm:zenolim}
Given a self-manipulation hybrid system with 
a complete connected configuration manifold $\e{Q}$,
if the inertia tensor $\Mbar$ is non-degenerate and
the forces abide by the bound in~\eqref{eq:effbddJ},
then the projection of any Zeno execution $\chi:\e{T}\into \e{D}$ into the ambient state space $T\e{Q}$, $\pi\circ\chi:\e{T}\into T\e{Q}$, accumulates on a unique limit,
\eqnn{\label{eq:zenolim}
(\bar{\bfq},\dot{\bar{\bfq}}) := \lim_{t\goesto\sup\e{T}} \pi\circ\chi(t).
}
\end{theorem}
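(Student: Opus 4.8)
The plan is to reduce the hybrid Zeno execution to the underlying constrained Lagrangian trajectory in the ambient bundle and then prove that configuration and velocity each converge, with the velocity the delicate case. Since $\Mbar$ is non-degenerate there are no massless limbs, so $\pi\circ\chi$ traces a single curve $(\bfq(t),\dot{\bfq}(t))$ in $T\e{Q}$ that flows by~\eqref{eq:dyn} on each epoch, has \emph{continuous} base component $\bfq(t)$ (resets fix $\bfq$), and whose velocity jumps only at touchdown events by the plastic impact map~\eqref{eq:dqp}. Writing $\tau := \sup\e{T} < \infty$ (finite by Def.~\ref{def:zeno}), it suffices to show $(\bfq(t),\dot{\bfq}(t))$ is Cauchy as $t\goesto\tau^-$.

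First I would bound the speed on $[t_0,\tau)$. The key observation is that each plastic impact is non-expansive in the kinetic-energy norm: by Assumption~\ref{ass:plastic} and~\eqref{eq:dqp}, $\Id-\bfAdT_J\bfA_J$ is the $\Mbar$-orthogonal projection onto the constraint-annihilating subspace (idempotent since $\bfA\bfAdT=\Id$ in~\eqref{eq:dagprop}, and $\Mbar$-self-adjoint), whence $\absM{\dot{\bfq}^+}\le\absM{\dot{\bfq}^-}$ and the energy dissipated at an impact equals $\tfrac12\absM{\bfAdT_J\bfA_J\dot{\bfq}^-}^2$. On smooth epochs the work--energy identity gives $\tfrac{d}{dt}\tfrac12\absM{\dot{\bfq}}^2 = \angleM{\dot{\bfq}}{(\Upsilon_I-\Nbar_I)^\#}$, since Coriolis and constraint forces do no work along $\bfA\dot{\bfq}=0$. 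Setting $v:=\absM{\dot{\bfq}}$ and $d:=d_{\Mbar}(\bfq_0,\bfq)$, the growth bound~\eqref{eq:effbddJ} then yields the coupled linear differential inequalities $\dot v\le C(1+v+d)$ and $\dot d\le v$, so Gronwall bounds $v$ by some $V<\infty$ on the finite interval $[t_0,\tau)$, the impacts only reinforcing the bound; this is the non-escape content of Lemma~\ref{lem:lip} and \cite[Thm.~10]{Ballard2000}. With $v\le V$ and $\bfq$ continuous, $d_{\Mbar}(\bfq(t_1),\bfq(t_2))\le V\abs{t_2-t_1}\goesto 0$, so by completeness of $(\e{Q},d_{\Mbar})$ the base converges to a unique limit $\obar{\bfq}$.

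The crux is convergence of $\dot{\bfq}$, for which I would establish that the velocity has bounded variation on $[t_0,\tau)$. Its smooth contribution is controlled: with $\bfq$ confined to a precompact neighborhood of $\obar{\bfq}$ and $v\le V$, the field~\eqref{eq:dyn} is uniformly bounded, say $\absM{\ddot{\bfq}}\le A$, so the variation from smooth epochs is at most $A(\tau-t_0)<\infty$. The jump contribution $\sum_i\absM{\bfAdT_{J_i}\bfA_{J_i}\dot{\bfq}^-_i}$ is the obstacle: the energy estimate above only gives $\ell^2$-summability of the jumps (their squared norms sum to at most the total dissipated energy), which is too weak to yield $\ell^1$-summability by elementary means. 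This is precisely the gap bridged by the measure-theoretic machinery of \cite{Ballard2000}, whose uniqueness theorem produces a solution of locally bounded variation; I would invoke that conclusion, transcribed into the present hybrid framework as promised in the preamble to this section, to assert $\sum_i\absM{\bfAdT_{J_i}\bfA_{J_i}\dot{\bfq}^-_i}<\infty$. Granting this, $\dot{\bfq}$ is of bounded variation on $[t_0,\tau)$, its left limit at $\tau$ exists, and together with the base limit we obtain $\lim_{t\goesto\tau^-}\pi\circ\chi(t)=(\obar{\bfq},\dot{\obar{\bfq}})$, as claimed.
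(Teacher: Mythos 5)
Your outline matches the paper's strategy for the first two steps --- reduction to the ambient trajectory, velocity boundedness via the transcribed estimates behind Lemma~\ref{lem:lip} (Appendix~H), and position convergence from $d_{\Mbar}(\bfq(t_1),\bfq(t_2))\le \bar{v}\,\abs{t_2-t_1}$ plus completeness --- but it has a genuine gap exactly at the step you flag as the crux. You correctly observe that the energy argument yields only $\ell^2$-summability of the velocity jumps, and then you discharge the needed $\ell^1$-summability by invoking the bounded-variation conclusion of \cite{Ballard2000}. That invocation is not available as stated: Ballard's theorem is a statement about solutions of \emph{his} measure-differential-inclusion formulation with his constitutive impact law, and to apply it to $\pi\circ\chi$ you would first have to prove that the projected hybrid execution (defined by the guards, the complementarity predicates, and the reset map~\eqref{eq:reset}) \emph{is} such a solution. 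That identification is itself a nontrivial claim --- essentially of the same difficulty as the theorem being proved --- and the whole point of this section of the paper is to avoid it: the preamble states the necessary arguments are \emph{reproduced} in the hybrid framework with elementary machinery, not that Ballard's theorem is imported wholesale.

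The missing elementary idea, which is how the paper closes the $\ell^2$-versus-$\ell^1$ gap, is sign-definiteness of the contact terms in adapted coordinates. Near the position limit $\bar{\bfq}$, the Rank Theorem (using Assumption~\ref{ass:rank}) provides a chart in which $\bfA_{\e{K}} = [\Id,\, 0]$; in these coordinates the complementarity conditions~\eqref{eq:Ukla} and~\eqref{eq:UkPa} force every component of $\bfA_{J}^T\lambda_{J}$ and of each impulse $\bfP_{J_i}$ to be non-positive. The momentum-balance identity obtained by integration by parts (the analogue of~\eqref{eq:dq+}), together with the a priori velocity bound and the force bound~\eqref{eq:effbddJ}, then shows that the partial sums $\sum_i\bigl[\int_{t_{i-1}}^{t_i}\lambda_{J_i}^j\,ds + \bfP_{J_i}^j\bigr]$ are componentwise monotone and bounded, hence convergent by the Monotone Sequence Theorem --- no bounded-variation theory is needed, and convergence of the momentum (hence of $\dot{\bfq}$, since $\Mbar$ is continuous, invertible, and $\bfq$ converges) follows directly. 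In short: your proof is structurally sound but its decisive step is outsourced to a theorem whose applicability you have not established, whereas a short coordinate-plus-monotonicity argument, driven by the complementarity conditions, completes it from within the paper's own framework.
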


\begin{proof}
Let $\chi:\e{T}\into \e{D}$ be a Zeno execution over the hybrid time trajectory $\e{T} = \mycoprod_{i=1}^\infty T_i$.
  With $T_i\cap T_{i+1} = \set{t_i}$ for all $i\in\Nat$, let $\obar{t} = \sup \e{T} < \infty$.  
For notational convenience in this proof we let $T_i = [t_{i-1}^+,t_{i}^-]$ for all $i\in\Nat$.  
This notation is somewhat redundant since $t_{i}^+ = t_{i}^- \in\Real$ for all $i\in\Nat$; we use it to signify that $t_i^-\in T_i$ and $t_i^+\in T_{i+1}$.
Note that $\lim_{i\goesto\infty} (t_{i}^- - t_{i-1}^+) = 0$ since $\lim_{i\goesto\infty} t_i = \obar{t}$.
When there should be no confusion as to the index of the time domain, we abuse notation by suppressing the
index and simply write $\dot{\bfq}(t)$ instead of $\dot{\bfq}(i,t)$ for $t\in [t_{i-1}^+, t_i^-]$.

Let $\pi:\e{D}\into T\e{Q}$ be the canonical projection that simply removes the label from the 
disjoint union $\e{D} = \mycoprod_{I\in\e{J}} D_I$, and let $(\bfq,\dot{\bfq}) := \pi\circ\chi$ denote the velocity and position components of the execution $\chi$.
Note that since the reset map,~\eqref{eq:reset}, does not change the position, $\bfq$, 
the position trajectory
$\bfq:\e{T}\into\e{Q}$
satisfies,
\eqnn{\label{eq:pos}
\bfq|_{T_{i}}(t_i) = \bfq(t_i^-) = \bfq(t_i^+) = \bfq|_{T_{i+1}}(t_i),
}
for all $i\in\Nat$, i.e.\ positions evolve continuously with respect to time.
Therefore $\bfq$ uniquely determines a continuous curve $\td{\bfq}:T\into \e{Q}$ over the half-open interval $T = \bigcup_{i=1}^\infty T_i = [0,\obar{t})$.
The restriction $\dot{\bfq}|_{T_i}$ is continuous for every $i\in\Nat$, therefore it uniquely determines a right-continuous curve $\paren{\td{\bfq},\dot{\td{\bfq}}^+}:T\into T\e{Q}$.
The bound in~\eqref{eq:effbddJ} ensures that the velocity is bounded on finite time horizons,
\eqnn{\label{eq:vbar}
\bar{v} :=\sup_{t\in T}\set{\absM{\dot{\td{\bfq}}^+(t)}} < \infty,
}
as shown using a sequence of standard results in Appendix~\ref{app:mech} (and adapted from the proof 
of \cite[Thm.~10]{Ballard2000}).
For any nondecreasing Cauchy sequence, $\set{s_i}_{i=1}^\infty\subset T$ such that $s_i\goesto\obar{t}$, the sequence $\set{\td{\bfq}(s_i)}_{i=1}^\infty$ is also Cauchy since,
\eqnn{ 
\forall\; n,m\in\Nat : d_{\Mbar}(\td{\bfq}(s_n),\td{\bfq}(s_m)) & \le \int_{s_m}^{s_n}\absM{\dot{\td{\bfq}}^+(s)} ds 
\le \bar{v}\abs{s_n - s_m}.
}
Therefore the position tends to a unique limit in the ambient state space, i.e.\ the following limit exists:
\eqn{\bar{\bfq} := \lim_{t\goesto\obar{t}} \td{\bfq}(t).}
Under the simple constraints assumption (\ref{ass:rank}), the Rank Theorem \cite[Thm.~4.12]{Lee2012} ensures there exists a coordinate chart $(V,\psi)$ near $\bar{\bfq}$ where $\bfA_{\e{K}} =[\Id, 0]$.
Continuity of $\td{\bfq}$ ensures there exists $\ubar{t} \in T$ for which $\td{\bfq}({[\ubar{t},\obar{t})})\subset V$. 

Let $\sigma(\chi) = \set{J_i}_{i=1}^\infty\subset\e{J}$ denote the sequence of discrete modes visited by $\chi$ and let $\ubar{m} := \min\set{i\in\Nat : t_i \ge \ubar{t}}$.
Specializing the definition of execution of a hybrid system to the self-manipulation system and 
performing integration-by-parts as in Appendix~\ref{app:parts} we conclude that in coordinates,
\eqnn{\label{eq:dq}
\forall\; i > 
\ubar{m}, t\in[t_{i-1}^+, t_{i}^-] :\, & 
\Mbar 
(\bfq(t))\dot{\bfq}(t) - \Mbar(\bfq(t_{i-1}^+))\dot{\bfq}(t_{i-1}^+) \\
& = \int_{t_{i-1}}^t\left(\vphantom{\int} \Upsilon(\bfq,\dot{\bfq}) - \Nbar(\bfq,\dot{\bfq})  - \Ctd(\bfq,\dot{\bfq}) \right. 
\left.\vphantom{\int} - \bfA_{J_i}(\bfq)^T \lambda_{J_i}(\bfq,\dot{\bfq}) \right) ds, \\
&\Mbar(\bfq(t_{i}^+))\dot{\bfq}(t_{i}^+) - \Mbar(\bfq(t_{i}^-))\dot{\bfq}(t_{i}^-) = -\bfP_{J_i},
}
where 
$\bfP_{J_i}$ is defined in~\eqref{eq:P_J}
and for each $\ell \in\set{1,\dots,\rmq}$ the $\ell$-th coordinate of the covector $\Ctd\in T^*\e{Q}$ is given by,
\eqn{
\Ctd^\ell(\bfq,\dot{\bfq}) := -\frac{1}{2}\sum_{j,k=1}^{\mathrm{q}} \vfof{\Mbar_{kj}(\bfq)}{\bfq^\ell} \dot{\bfq}^k \dot{\bfq}^j.
}
Recursively substituting using~\eqref{eq:dq} and~\eqref{eq:pos}, for any $t\in [t_{\ubar{m}},\obar{t})$ with $\obar{m} := \max\set{i\in\Nat : \ubar{t} \le t_i \le t}$, 
the velocity component of the execution 
$(\bfq,\dot{\bfq}):\calT\rightarrow T\calQ$
satisfies,
\eqnn{\label{eq:dqsum}
\Mbar&(\bfq(t))\dot{\bfq}(t) - \Mbar(\bfq(t_{\ubar{m}}))\dot{\bfq}(t_{\ubar{m}}) \\
& = \int_{t_{\obar{m}}}^t \left(\vphantom{\int} \Upsilon(\bfq,\dot{\bfq}) - \Nbar(\bfq,\dot{\bfq}) - \Ctd(\bfq,\dot{\bfq}) \right. 
\left.\vphantom{\int} - \bfA_{J_{\obar{m}}}(\bfq)^T \lambda_{J_{\obar{m}}}(\bfq,\dot{\bfq}) \right) ds \\
& \quad + \sum_{i=\ubar{m}}^{\obar{m}}\left[\int_{t_{i-1}}^{t_i}\left(\vphantom{\int} \Upsilon(\bfq,\dot{\bfq}) 
- \Nbar(\bfq,\dot{\bfq}) - \Ctd(\bfq,\dot{\bfq})  \right. \right. 
\left. \left. - \bfA_{J_i}(\bfq)^T \lambda_{J_i}(\bfq,\dot{\bfq})\vphantom{\int}\right) ds - \bfP_{J_i}\right].
}

Noting that for all time $t \in (t_{i-1}^+, t_i^-)$ on the interior of each time interval $i > \ubar{m}$ that
$\td{\bfq}(t) = \bfq(t)$ and $\dot{\td{\bfq}}^+(t) = \dot{\bfq}(t)$,
we conclude that for all $t\in [t_{\ubar{m}}, \obar{t})$ the right-continuous 
velocity $\dot{\td{\bfq}}^+:T\rightarrow T\calQ$ satisfies,
\eqnn{\label{eq:dq+}
\Mbar&(\td{\bfq}(t))\dot{\td{\bfq}}^+(t) - \Mbar(\td{\bfq}(t_{\ubar{m}}))\dot{\td{\bfq}}^+(t_{\ubar{m}}) \\
& = \int_{t_{\ubar{m}}}^t \Upsilon(\td{\bfq},\dot{\td{\bfq}}^+) - \Nbar(\td{\bfq},\dot{\td{\bfq}}^+) - \Ctd(\td{\bfq},\dot{\td{\bfq}}^+)  ds 
- \int_{t_{\obar{m}}}^{t} \bfA_{J_{\obar{m}}}(\td{\bfq})^T \lambda_{J_{\obar{m}}}(\td{\bfq},\dot{\td{\bfq}}^+) ds \\
& - \sum_{i=\ubar{m}}^{\obar{m}} \brak{\int_{t_{i-1}}^{t_i} \bfA_{J_i}(\td{\bfq})^T \lambda_{J_i}(\td{\bfq},\dot{\td{\bfq}}^+) ds + \bfP_{J_i} }.
}
This equation,~\eqref{eq:dq+}, is the transcription of \cite[Eqn.~36]{Ballard2000} into our formalism.

Recall that in coordinates $(V,\psi)$ we have $\bfA_{\e{K}} =[\Id, 0]$ and
that $\bfU(\bfP)\geq 0 \Rightarrow \bfP \leq 0$, $\bfU(\lambda)\succeq0 \Rightarrow \lambda \leq0$.
The complementarity conditions,~\eqref{eq:Ukla} and \eqref{eq:UkPa}, thus ensure that each component of $\bfA_{J}^T \lambda_{J}$ 
and $\bfP_{J}$ are non-positive for each~$J\in\e{J}$.

We conclude by rearranging~\eqref{eq:dq+} (and suppressing dependence on $\td{\bfq}$ and $\dot{\td{\bfq}}^+$)   
and invoking the bound from Appendix~\ref{app:eff} that there exists $\alpha,\beta\in\Real$ such that for each $j\in\set{1,\dots,\abs{\e{K}}}$,
\eqnn{
0 \le \sum_{i=\ubar{m}}^{\obar{m}} & \brak{\int_{t_{i-1}}^{t_i} \lambda_{J_i}^j ds + \bfP_{J_i}^j} + \int_{t_{\obar{m}}}^{t} \lambda_{J_m}^j ds 
=-\brak{\Mbar^j(t)\dot{\td{\bfq}}^+(t) - \Mbar^j(t_{\ubar{m}})\dot{\td{\bfq}}^+(t_{\ubar{m}})} 
 + \int_{t_{\ubar{m}}}^t \Upsilon^j - \Nbar^j - \Ctd^j ds \\
&
\leq \alpha + \beta(t - t_{\ubar{m}}). \label{eq:abbound}
}
Therefore the infinite sum,
\eqn{\sum_{i = \ubar{m}}^\infty \brak{\int_{t_{i-1}}^{t_i} \lambda_{J_i}^j ds + \bfP_{J_i}^j},} 
exists and is finite by the Monotone Sequence Theorem \cite[Thm.~1.16]{Folland2002}.
Thus each coordinate of each term in~\eqref{eq:dq+} tends to a unique limit as $t\goesto\obar{t}$, i.e.\ the following limit exists:
\eqn{\dot{\bar{\bfq}} := \lim_{t\goesto\obar{t}} \dot{\td{\bfq}}^+(t).}
\end{proof}

Let $\e{Z}\subset\e{J}$ denote the set of modes visited infinitely often by $\chi$.  
Since the sequence $\sigma(\chi) = \set{J_n}_{n=1}^\infty\subset\e{J}$ of discrete modes visited by $\chi$ is an infinite sequence of elements taken from a finite set, $\e{Z}\ne\emptyset$.

Although the previous result guarantees the continuous state of the system associated with a Zeno execution 
has a well defined limit, we must further guarantee that it limits on a consistent mode as well. The 
following result guarantees that this limiting state is indeed achieved in a well defined mode which 
is also physically meaningful in the sense of being composed of any and all of the constraints that had 
been active infinitely often during the Zeno~execution.

\begin{corollary}\label{cor:zenolim}
Let $\bar{Z} = \bigcup\e{Z}$ denote the set of all constraints visited during a 
Zeno exection $\chi:T\into D$. The set is a valid mode,
representing the asymptotic contact mode, and 
the zeno limit $(\bar{\bfq},\dot{\bar{\bfq}})$ from~\eqref{eq:zenolim} lies in the domain $D_{\bar{Z}}$,~i.e., 
\eqn{
\bar{Z} \in \calJ, \qquad \bar{\chi} := (\bar{\bfq},\dot{\bar{\bfq}})\in D_{\bar{Z}}.}
\end{corollary}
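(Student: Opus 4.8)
The plan is to verify the two displayed assertions directly, by unwinding the defining conditions \eqref{eq:jdef} and \eqref{eq:Ddef} and passing to the limit $t\goesto\obar{t}$ along the infinitely-recurrent modes, relying throughout on Theorem~\ref{thm:zenolim} for the existence and regularity of the limit. Write $\e{Z}=\set{J^{(1)},\dots,J^{(p)}}$, which is finite since $\e{Z}\subseteq\e{J}$, so that $\bar{Z}=\bigcup_{i=1}^p J^{(i)}$; by construction each $J^{(i)}$ is occupied on infinitely many time intervals $T_\ell$, and since every transition time tends to $\obar{t}$, the endpoints of these intervals form sequences converging to $\obar{t}$.

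First I would dispatch the membership $\bar{Z}\in\calJ$, which is the more combinatorial of the two claims. The $\alpha$-closure condition $\alpha(\bar{Z}\cap\KT)\subseteq\bar{Z}$ requires no analysis: any tangential index $k\in\bar{Z}\cap\KT$ lies in some $J^{(i)}\in\e{Z}\subseteq\calJ$, and because $J^{(i)}$ already satisfies $\alpha(J^{(i)}_t)\subseteq J^{(i)}$ by \eqref{eq:jdef}, we obtain $\alpha(k)\in J^{(i)}\subseteq\bar{Z}$. The other condition, $\bfa_{\bar{Z}\cap\KN}^{-1}(0)\neq\varnothing$, will be supplied for free once I exhibit $\bar{\bfq}$ as a zero of $\bfa_{\bar{Z}\cap\KN}$, which is the content of the next step.

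Next I would check the three clauses of \eqref{eq:Ddef} for $(\bar{\bfq},\dot{\bar{\bfq}})\in D_{\bar{Z}}$. The global non-penetration bound $\bfa_{\KN}(\bar{\bfq})\geq0$ is immediate, since every point of the execution lies in some domain and hence satisfies $\bfa_{\KN}(\bfq(t))\geq0$, and this weak inequality survives passage to the limit by continuity of $\bfq$ (established in Theorem~\ref{thm:zenolim}) and of $\bfa_{\KN}$. For the active-constraint clause $\bfa_{\bar{Z}\cap\KN}(\bar{\bfq})=0$, I would fix each $J^{(i)}\in\e{Z}$ together with a sequence of times $t_\ell\goesto\obar{t}$ at which the execution occupies mode $J^{(i)}$; then $\bfa_{J^{(i)}_n}(\bfq(t_\ell))=0$ along the sequence, so continuity forces $\bfa_{J^{(i)}_n}(\bar{\bfq})=0$, and taking the union over $i$ yields $\bfa_{\bar{Z}\cap\KN}(\bar{\bfq})=0$ (which simultaneously closes the nonempty-preimage condition left open above).

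The delicate clause is the velocity constraint $\bfA_{\bar{Z}}(\bar{\bfq})\dot{\bar{\bfq}}=0$, and this is where I expect the main obstacle: unlike the position, the velocity jumps at every reset, so I cannot evaluate $\bfA_{\bar{Z}}\dot{\bfq}$ along an arbitrary sequence and pass to the limit naively. The resolution is to work with the right-continuous velocity $\dot{\td{\bfq}}^+$ from Theorem~\ref{thm:zenolim}, evaluated at times where the mode is known to be recurrent. Fix $k\in\bar{Z}$, choose $J^{(i)}\in\e{Z}$ with $k\in J^{(i)}$, and let $T_\ell=[t_{\ell-1}^+,t_\ell^-]$ range over the intervals on which the mode equals $J^{(i)}$. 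The flow on $T_\ell$ remains in $D_{J^{(i)}}$ (the entry state lies in $D_{J^{(i)}}$ by the reset-image computation in part~6 of the proof of Theorem~\ref{thm:agree}, and the vector field is tangent to the domain), so $\bfA_{J^{(i)}}(\bfq)\dot{\bfq}=0$ holds throughout $T_\ell$; evaluating at the left endpoint, where the right-continuous velocity $\dot{\td{\bfq}}^+(t_{\ell-1})$ coincides with $\dot{\bfq}|_{T_\ell}(t_{\ell-1})$, gives $\bfA_k(\td{\bfq}(t_{\ell-1}))\,\dot{\td{\bfq}}^+(t_{\ell-1})=0$. Since $t_{\ell-1}\goesto\obar{t}$ and Theorem~\ref{thm:zenolim} guarantees $\dot{\td{\bfq}}^+(t)\goesto\dot{\bar{\bfq}}$ along \emph{every} approach to $\obar{t}$, continuity of $\bfA_k$ yields $\bfA_k(\bar{\bfq})\dot{\bar{\bfq}}=0$; ranging over all $k\in\bar{Z}$ assembles $\bfA_{\bar{Z}}(\bar{\bfq})\dot{\bar{\bfq}}=0$ and completes the verification.
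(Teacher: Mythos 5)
Your proposal is correct in overall structure and reaches every conclusion of the corollary, but on the one delicate step --- the velocity clause $\bfA_{\bar{Z}}(\bar{\bfq})\dot{\bar{\bfq}}=0$ --- you take a genuinely different route from the paper. The paper does not argue from domain invariance inside the recurrent modes; it runs an energy-dissipation argument instead: setting $\bfW_Z = \bfA_Z^T(\bfA_Z\Minv\bfA_Z^T)^{-1}\bfA_Z$, it computes that each impact removes kinetic energy $\tfrac{1}{2}\dot{\bfq}(t_i^-)^T\bfW_{J_i}\dot{\bfq}(t_i^-)\ge 0$, invokes the velocity bound~\eqref{eq:vbar} to conclude that the total energy extracted by impacts is finite~\eqref{eq:sumWJi}, hence that the per-impact loss tends to zero~\eqref{eq:limWJi}, and then uses the identity $\dot{\bfq}^T\bfW_Z\dot{\bfq} = \absM{\bfS_Z\bfA_Z\dot{\bfq}}^2$ to force the pre-impact constraint velocities to vanish asymptotically, which combined with the existence of the full limit (Theorem~\ref{thm:zenolim}) gives~\eqref{eq:limAz}. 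Your replacement --- that during every visit to a recurrent mode the constraint velocity is \emph{exactly} zero by the definition of the domain~\eqref{eq:Ddef} and of execution ($\chi|_{T_i}\subset D_{J_i}$), so one may simply evaluate along a sequence of such times and pass to the already-established limit --- is shorter and more elementary, bypassing the energy bookkeeping entirely; what the paper's route buys in exchange are the quantitative dissipation statements~\eqref{eq:sumWJi}--\eqref{eq:limWJi} (summability of the impact losses), which your argument does not produce, and an evaluation point (the pre-impact velocity $\dot{\bfq}(t_i^-)$) that is unambiguous at every transition instant. The remainder of your proof ($\alpha$-closure of $\bar{Z}$, $\bfa_{\bar{Z}\cap\KN}(\bar{\bfq})=0$ by continuity along recurrent times, and $\bfa_{\KN}(\bar{\bfq})\ge 0$ in the limit) coincides with the paper's.

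The one place your writeup needs an additional sentence is the identification $\dot{\td{\bfq}}^+(t_{\ell-1}) = \dot{\bfq}|_{T_\ell}(t_{\ell-1})$ at a left endpoint: this is immediate from the construction of the right-continuous curve only when $T_\ell$ is non-degenerate. A mode of $\e{Z}$ could a priori recur only through degenerate, zero-duration intervals (the touchdown-then-immediate-liftoff pattern), in which case $\dot{\td{\bfq}}^+$ at that instant takes its value from the \emph{following} interval, not from $T_\ell$. The identification nonetheless holds here: the corollary inherits from Theorem~\ref{thm:zenolim} the hypothesis that $\Mbar$ is nondegenerate, so Theorem~\ref{thm:dbltrans} applies, the only same-instant transition out of such a degenerate interval is a liftoff, and liftoff resets are the identity on velocity, whence $\dot{\bfq}|_{T_\ell}(t_{\ell-1}) = \dot{\bfq}|_{T_{\ell+1}}(t_{\ell-1}) = \dot{\td{\bfq}}^+(t_{\ell-1})$. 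Stating this explicitly closes the only gap; with it, your argument is complete.
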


\begin{proof}
We continue with the notational conventions from the Proof of Theorem~\ref{thm:zenolim}.
For all $Z\in\e{Z}$, let $\bfW_Z = \bfA_Z^T(\bfA_Z \Minv \bfA_Z^T)^{-1} \bfA_Z$ and note:
$\bfW_Z = \bfW_Z^T \ge 0$; 
$\bfW_Z \Minv \bfW{_Z} = \bfW_Z$;
$\dot{\bfq}(t_i^+) = \Id - \Minv\bfW_{J_i}) \dot{\bfq}(t_i^-)$ for all $i\in\Nat$; and
$\exists\;\bfS_Z$ such that $(\bfA_Z \Minv \bfA_Z^T)^{-1} = \bfS_Z^T \bfS_Z$.
Impacts do not increase energy since for all $i\in\Nat$:
\eqn{
\frac{1}{2}\dot{\bfq}(t_i^-)^T \Mbar \dot{\bfq}(t_i^-) - \dot{\bfq}(t_i^+)^T \Mbar \dot{\bfq}(t_i^+) 
= & \frac{1}{2}\dot{\bfq}(t_i^-)^T \Mbar \dot{\bfq}(t_i^-) 
- \frac{1}{2}\dot{\bfq}(t_i^-)^T(\Id - \bfW_{J_i} \Minv) \Mbar (\Id - \Minv \bfW_{J_i})\dot{\bfq}(t_i^-) \\
= & \frac{1}{2}\dot{\bfq}(t_i^-)^T ( \Mbar - \Mbar + \bfW_{J_i} + \bfW_{J_i} - \bfW_{J_i} \Minv \bfW_{J_i}) \dot{\bfq}(t_i^-) \\
= & \frac{1}{2}\dot{\bfq}(t_i^-)^T ( 2 \bfW_{J_i} - \bfW_{J_i} ) \dot{\bfq}(t_i^-) 
= 
\frac{1}{2}\dot{\bfq}(t_i^-)^T \bfW_{J_i} \dot{\bfq}(t_i^-) \ge 0.
}
Equation~\eqref{eq:vbar} implies impacts must extract a finite amount of energy,
\eqnn{\label{eq:sumWJi}
\sum_{i=1}^\infty \dot{\bfq}(t_i^-)^T \bfW_{J_i} \dot{\bfq}(t_i^-) < \infty.
}
and hence in particular,
\eqnn{\label{eq:limWJi}
\lim_{i\goesto\infty} \dot{\bfq}(t_i^-)^T \bfW_{J_i} \dot{\bfq}(t_i^-) = 0.
}
Taking~\eqref{eq:limWJi} together with,
\eqn{\dot{\bfq}^T \bfW_Z\dot{\bfq} = \dot{\bfq}^T \bfA_Z^T(\bfA_Z \Minv \bfA_Z^T)^{-1} \bfA_Z \dot{\bfq} = \absM{\bfS_Z \bfA_Z \dot{\bfq}}^2,}
implies each Zeno constraint is asymptotically satisfied:
\eqnn{\label{eq:limAz}
\forall\; z\in\bar{Z} = \bigcup\e{Z} : \lim_{t\goesto\obar{t}} \bfA_{z} \dot{\bfq}(t) = 0.
}
For all constraints $j\in \bar{Z}, j \in Z_j$ for at least one $Z_j \in \e{Z}$ visited
infinitely often in $\chi$. By the definition of $D_{Z_j}$,
\eqnn{\label{eq:limaz}
\forall\; z\in\bar{Z}: \lim_{t\goesto\obar{t}} \bfa_{z}(\bfq(t)) = 0.
}
and therefore $\bar{\chi} \in \bfa_{\bar{Z}}^{-1} \neq \varnothing$. Furthermore
for all constraints $j\in \bar{Z}$ and all modes $Z_j \in \e{Z}$ containing $j$,
the definition of mode $Z_j$,~\eqref{eq:jdef}, requires that $\alpha(j) \in Z_j \subset \bar{Z}$.
Therefore $\bar{Z} \in \calJ$ by~\eqref{eq:jdef}.
The domain $D_{\bar{Z}}$ has three requirements,~\eqref{eq:Ddef}, two of which we have already
shown to be met by $\bar{\chi}$ in~\eqref{eq:limAz} \&~\eqref{eq:limaz}. Finally,
as in~\eqref{eq:limaz}, for all constraints $k \in \KN$, and all modes $Z \in \e{Z}$,
by definition of $D_{Z}$,
\eqnn{\label{eq:limap}
\forall\; k\in\KN: \lim_{t\goesto\obar{t}} \bfa_{k}(\bfq(t)) \geq 0.
}
Thus $\bar{\chi} = (\bar{\bfq},\dot{\bar{\bfq}})\in D_{\bar{Z}}$.
\end{proof}

\subsubsection{Effect of pseudo-impulse on Zeno executions}\label{sec:pseudozeno}
As suggested in Section~\ref{sec:pseudoimpulse}, the inclusion of the pseudo-impulse prevents an infinite number of liftoff transitions in a finite amount of time from constraints impinged upon by external forces.

\begin{theorem}\label{thm:zeno}
Let $\chi:\e{T}\into \e{D}$ be a Zeno execution of a self-manipulation hybrid dynamical system with exactly two contact constraints, so that the limiting set $\bar{Z} = \e{J}$.
Under the hypotheses and notation of Theorems~\ref{thm:zenolim} \& Corollary~\ref{cor:zenolim}, 
when the pseudo-impulse parameter is positive, $\delta_t > 0$, 
we conclude that,
\eqnn{\label{eq:zeno}
\forall\; z\in\bar{Z} : \bfU_z\paren{\bfAd_{\bar{Z}}(\bar{\bfq}) \left(\vphantom{\int}\Upsilon(\bar{\bfq},\dot{\bar{\bfq}}) 
- \Cbar(\bar{\bfq},\dot{\bar{\bfq}}) - \Nbar(\bar{\bfq},\dot{\bar{\bfq}}) \right)} \leq 0,
}
that is, 
the constraint forces cannot be positive for either constraint at the Zeno limit point.
\end{theorem}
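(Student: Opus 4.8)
The plan is to argue by contradiction, exploiting the fact that near a Zeno accumulation the plastic impulse vanishes, so that the only thing left that could hold a contact is the pseudo-impulse. Suppose some $z \in \bar Z$ violates~\eqref{eq:zeno}, i.e.
\[
\bfU_z\paren{\bfAd_{\bar Z}(\bar\bfq)\paren{\Upsilon(\bar\bfq,\dot{\bar\bfq}) - \Cbar(\bar\bfq,\dot{\bar\bfq}) - \Nbar(\bar\bfq,\dot{\bar\bfq})}} > 0,
\]
at the limit $(\bar\bfq,\dot{\bar\bfq})$ furnished by Theorem~\ref{thm:zenolim} and Corollary~\ref{cor:zenolim}. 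Because exactly two constraints are present and every mode is visited infinitely often, $z$ is active in some modes visited infinitely often and absent from others, so $z$ leaves the active set at infinitely many event times $t_i \to \obar t$. The crucial consequence of the two-constraint hypothesis is that the alternate mode in which $z$ is retained is the full limit mode $\bar Z$ itself, which is exactly why $\bfAd_{\bar Z}$ is the object appearing in the claim.

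First I would extract the two limiting facts that drive the argument. From Corollary~\ref{cor:zenolim}, equations~\eqref{eq:limAz} and~\eqref{eq:limWJi}, the constraint-frame velocity collapses, $\bfA_{\bar Z}(\bfq(t))\dot{\bfq}(t) \to 0$; since the plastic contact impulse~\eqref{eq:impulse} is $\limp_{\bar Z} = -\Lambda_{\bar Z}\bfA_{\bar Z}\dot{\bfq}$, this gives $\limp_{\bar Z} \to \mathbf{0}$ as $t\to\obar t$. Meanwhile the pseudo-impulse~\eqref{eq:pseudoimp}, $\dimp_{\bar Z} = \bfAd_{\bar Z}(\Upsilon - \Cbar\dot{\bfq} - \Nbar)\delta_t$, is continuous in the state and therefore converges to $\delta_t\,\bfAd_{\bar Z}(\bar\bfq)(\Upsilon(\bar\bfq,\dot{\bar\bfq}) - \Cbar(\bar\bfq,\dot{\bar\bfq}) - \Nbar(\bar\bfq,\dot{\bar\bfq}))$. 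Combining these, and using continuity together with the positive homogeneity of the cone functional $\bfU_z$, the modified momentum that enters the guard satisfies
\[
\lim_{t\to\obar t}\bfU_z\paren{\limp_{\bar Z} + \dimp_{\bar Z}} = \delta_t\,\bfU_z\paren{\bfAd_{\bar Z}(\bar\bfq)\paren{\Upsilon(\bar\bfq,\dot{\bar\bfq}) - \Cbar(\bar\bfq,\dot{\bar\bfq}) - \Nbar(\bar\bfq,\dot{\bar\bfq})}}.
\]

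The contradiction then follows from the $\PIV$ predicate~\eqref{eq:PIV}. At each event time $t_i$ where $z$ is impulsively shed we have $z\notin\CP_\PIV$, so both disjuncts fail and in particular $\bfU_z(\limp_{\bar Z}+\dimp_{\bar Z}) < 0$; passing to the limit along $t_i\to\obar t$ forces $\delta_t\,\bfU_z(\bfAd_{\bar Z}(\ldots)) \le 0$, and since $\delta_t>0$ this contradicts the assumed strict positivity. When instead $z$ is removed by a continuous (non-touchdown) liftoff, Theorem~\ref{thm:fac} and the force--acceleration predicate~\eqref{eq:CPFA} require $\bfU_z(\lambda_{\bar Z})$ to be trending negative, hence $\le 0$ at those instants, which is again incompatible with a force bounded below by a positive constant on a tail of the execution. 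Thus the assumption is untenable; and since the trending-positive alternative is excluded identically, we conclude $\bfU_z(\bfAd_{\bar Z}(\ldots)) \le 0$ for every $z\in\bar Z$, establishing~\eqref{eq:zeno}.

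I expect the main obstacle to be uniformity across the infinitely many distinct events: the threshold impacting speed below which the pseudo-impulse protects $z$ (the $\obar s$ of Theorem~\ref{thm:pseudoimp}) is \emph{a priori} state-dependent, so I must use convergence of the states to $(\bar\bfq,\dot{\bar\bfq})$ together with continuity of $\limp_{\bar Z}$, $\dimp_{\bar Z}$, and $\bfU_z$ to replace the per-event thresholds by a single tail threshold. A secondary subtlety is that $z$ may be shed by different mechanisms at different event times, so reconciling the force--acceleration branch (which controls the full multiplier $\lambda_{\bar Z}$, including its velocity-product term) with the precise pseudo-impulse quantity in~\eqref{eq:zeno} requires controlling the velocity-dependent contribution to $\lambda_{\bar Z}$ as $\bfA_{\bar Z}\dot{\bfq}\to 0$, and both branches must be closed on a common tail $(\obar t', \obar t)$.
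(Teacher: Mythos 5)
Your proposal is correct on the case the paper actually treats, and it rests on the same two pillars as the paper's own proof: the collapse of the constraint velocities at the Zeno limit (equation~\eqref{eq:limAz} of Corollary~\ref{cor:zenolim}), and the fact that once the plastic impulse $\limp_{\bar{Z}}$ vanishes, the pseudo-impulse term in the $\PIV$ predicate~\eqref{eq:PIV} is what decides whether a positively-forced constraint may be shed. The difference is in execution. The paper invokes Theorem~\ref{thm:pseudoimp} as a black box (``when the liftoff velocity drops below the threshold given implicitly by Theorem~\ref{thm:pseudoimp}, the pseudo-impulse prevents liftoff from constraint $i$ if it meets~\eqref{eq:pimpcond}''); but that theorem fixes the base state and scales only the impacting speed, so its threshold $\obar{s}$ is state-dependent across the infinitely many events --- exactly the non-uniformity you flag. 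Your route sidesteps this: you unroll the failed disjunction in~\eqref{eq:PIV} at each impulsive shedding event to get $\bfU_z(\limp_{\bar{Z}}+\dimp_{\bar{Z}})<0$ (with the retained-alternative mode equal to $\bar{Z}$ precisely because there are only two constraints), then pass to the limit along the converging event states using $\limp_{\bar{Z}}\to 0$, continuity, and positive homogeneity of $\bfU_z$, which yields~\eqref{eq:zeno} directly and dispenses with per-event thresholds. In effect your argument repairs the uniformity gap left by the paper's citation of Theorem~\ref{thm:pseudoimp}. The one loose end you leave --- constraints shed only by force--acceleration liftoff, where $\bfU_z(\lambda_{\bar{Z}})\preceq 0$ at the liftoff instants does not immediately give~\eqref{eq:zeno} because the velocity-product term $\Lambda_{\bar{Z}}\dot{\bfA}_{\bar{Z}}\dot{\bfq}$ in~\eqref{eq:ldyn} need not vanish at the limit even though $\bfA_{\bar{Z}}\dot{\bfq}\to 0$ --- is genuinely open in your write-up, but it is not covered by the paper's proof either, since Theorem~\ref{thm:pseudoimp} (and hence the paper's argument) addresses only impulsive shedding; so relative to what the paper establishes, your proof is complete and somewhat more rigorous.
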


\begin{proof}
We know $\lim_{t\goesto\obar{t}}\bfA_{z} \dot{\bfq}(t) = 0$ for all $z\in\bar{Z}$ by~\eqref{eq:limAz}. 
When the liftoff velocity drops below the threshold given implicitly by Theorem~\ref{thm:pseudoimp}, the 
pseudo-impulse prevents liftoff from constraint $i\in\bar{Z}$ if it violates~\eqref{eq:zeno} (i.e., meets
the condition~\eqref{eq:pimpcond}).
Therefore, the contact force must be negative for both constraints $z\in\bar{Z}$ that undergoes an infinite number of liftoff transitions.
\end{proof}

\section{Discussion}
\label{sec:discussion}

In this section we \changed{discuss} the limitations in physical scope incurred by the twelve assumptions of 
Section~\ref{sec:imp}, \changed{implementation details for numerical simulation,}
and some consequences of the results of this paper through additional examples.
Specifically, Section~\ref{sec:dis:ass} reviews on a conceptual level the meaning and implications of the original 
assumptions. Then Sections~\ref{sec:dis:ml} \&~\ref{sec:dis:pseudo} use a number of specific physical 
examples to elucidate the nature and origin of the more conservative restrictions that are helpful but, 
we speculate, not necessary to achieve the results of interest in Section~\ref{sec:hyb}. 
\changed{Section~\ref{sec:dis:numerical} briefly discusses the issues involved with numerically simulating
executions of a $C^r$ hybrid dynamical system.}
Finally, Section~\ref{sec:zenodisc} explores the relationship of these assumptions to the Zeno results
of Section~\ref{sec:zeno}.

\subsection{The Base Assumptions} 
\label{sec:dis:ass}

Most of the assumptions listed in Section~\ref{sec:imp} are quite common in the modeling of physical systems,
especially for models focused on robotics. These limit the scope of the physical settings of interest,
and while there are certainly examples of robots that would be poorly modeled by each (many of which are
explored in Section~\ref{sec:lit}), we believe that
there remains a large class of systems that are covered by most if not all of these. Specifically, this class at
its core consists of rigid bodies (Assumption~\ref{ass:rigid}) under Lagrangian dynamics (\ref{ass:complete})
with analytic (\ref{ass:analytic}), independent (\ref{ass:rank}) constraints.
That these constraints persist (\ref{ass:contact}) and are added through plastic impact (\ref{ass:plastic})
are certainly domain specific assumptions, but many robotic tasks involve touching an object or the environment
with the goal of continuing that contact in order to do some work. 

With these assumptions in place, solving a complementarity problem (\ref{ass:fac} \&~\ref{ass:ivc})
is the most direct and mathematically tractable way to formulate the change in contact conditions and
is in step with a large literature (reviewed in Section \ref{sec:lit:comp}).
Our insistence on unique solutions to the these problems, key to the consistency conclusions of
Section~\ref{sec:consistency}, has poorly understood consequences except for the case of independent plastic 
frictionless contacts for which these assumptions are known to hold. The unique structure of the complementarity
problems used here allows for the inclusion of other assumptions (in particular massless limbs and the pseudo-impulse, \ref{ass:uml} and \ref{ass:pseudo}),
and Theorems~\ref{thm:fac} \&~\ref{thm:ivc} (along with Lemma~\ref{thm:comp}) ensures that this form
of the complementarity assumptions agrees with the more common versions. 

Assumption~\ref{ass:complete} is imposed both in the interest of the physical scope (Lagrangian dynamics) 
and mathematical consistency. It arises from the same motivation as the familiar conditions that 
preclude finite escape in classical dynamical systems but must nevertheless be couched in more 
technically involved language because of the hybrid setting. 
Thus we have found it expedient to provide further analysis of what is left behind: the results of 
Lemmas~\ref{lem:lip} and~\ref{lem:lip:ml} guarantee the admissibility of most physically interesting problem instances, 
but bar (for reasons reflecting the need for a more technical framework, we suspect, rather than mathematical necessity) 
only the case of nonholonomically constrained massless links.

The remaining assumptions are not imposed to facilitate the definitions and consistency proofs underlying 
the formal hybrid system (Section~\ref{sec:hyb}), but, rather, relate to the practicality of the physical
models they can support. We have found the following simplifying (and, strictly speaking, physical fidelity 
diminishing) assumptions critical to not merely the mathematical tractability but also the qualitative accuracy 
of the models we use in the robotic settings of interest (as exemplified by the illustrative cases explored below).  
Thus, the formal results of this paper have been adapted wherever possible to allow for their consistent inclusion.
In particular the massless leg assumptions, \ref{ass:cml} and~\ref{ass:uml}, are sometimes made for
mathematical tractability, but often are not analyzed carefully. 
Briefly, \ref{ass:cml} is tantamount to the assertion that the Lagrange D'Alembert formulation of 
constrained mechanics should admit smooth generalized coordinates relative to which the kinetic energy 
is nonsingular, while Assumption \ref{ass:uml} similarly requires that any massless degrees of freedom not in contact 
be assigned some reasonable dynamics. Lemmas~\ref{thm:mlequiv}, \ref{thm:dynamics}, \&~\ref{thm:impulse} and
Theorems~\ref{thm:fac} \&~\ref{thm:ivc} all concern the inclusion of massless limbs with the other assumptions,
and the theorems of Section~\ref{sec:consistency} ensure that the resulting system is consistent.

Friction in various forms is a common modeling assumption, however the specific setup in Assumption~\ref{ass:friction}
(which divides contacts into either completely slipping or completely sticking but precludes 
sliding-to-sticking transitions) consists of common components but in a very restricted manner. 
This particular combination is evidently not the best model of friction for many systems.
The existence and uniqueness of solutions to the corresponding complementarity problems for this setup
has never been demonstrated and therefore is simply assumed in this paper. 

Finally, the practical and theoretical implications of the new pseudo-impulse model assumed 
in~\ref{ass:pseudo} have a complicated interplay to whose exploration we devote Section~\ref{sec:dis:pseudo}.

\subsection{Massless Limbs}
\label{sec:dis:ml}

One common set of circumstances that satisfy the requirements of the massless limbs assumption (\ref{ass:cml}) arises when only the robot's most distal link (the finger,
lower leg, foot, or in the case of RHex, the entire leg) is massless and the motion of its most distal point is 
completely constrained when it is on the ground. Although the rank requirement is not limited to
this setting, it represents the immediate motivation for our inquiry.

Though there are no truly massless limbs, computing the dynamics using \eqref{eq:dyn}--\eqref{eq:ldyn} 
is numerically more stable than inverting $\Mbar_\epsilon$ in the presence of large disparities in limb segment 
masses \cite[Sec.~4.3]{Holmes_Full_Koditschek_Guckenheimer_2006}.
This is evidenced by an order of magnitude improvement in 
the condition number (ratio of largest to smallest singular values) for the RHex model used 
here \cite[Sec.~5.1.1]{thesis:johnson-2014}.

\subsection{Pseudo-Impulse}
\label{sec:dis:pseudo}

\begin{figure}
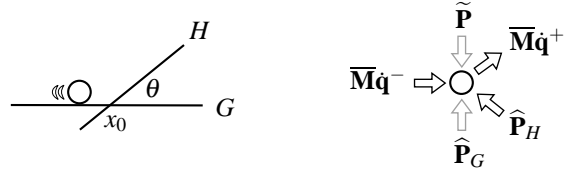

\centering
\def\svgwidth{8.5cm}
\include{slidingpt}
\caption{\emph{Left:} A point sliding along ground $G$ approaches hill $H$. \emph{Right:} Free body diagram showing impulses 
at point of contact. Without $\dimp$ no positive impulse from the ground $\limp_G$ is possible for 
any initial momentum $\Mbar\dot{\bfq}^-$ and any hill slope $\theta<90^\circ$.}
\label{fig:slidingpt}
\end{figure}

As a simple example that motivates the need for the pseudo-impulse (Assumption~\ref{ass:pseudo}), consider a point sliding on the ground as in Figure~\ref{fig:slidingpt}, which
hits a hill at some slope $\theta$. The contact impulse from the hill $\limp_H$ causes the 
particle to break contact with the ground and leave with some velocity sliding up the hill. This is true
for any initial velocity, no matter how small (Lemma~\ref{lem:pseudoimp}), and any $\theta <90^\circ$. With a pseudo-impulse
$\dimp$ acting in the direction of gravity,~\eqref{eq:pseudoimp}, Theorem~\ref{thm:pseudoimp} states that there are initial conditions that result in
the point coming to rest with impulses from both the ground and the hill (i.e., all impulses are positive
and sum to zero in Figure~\ref{fig:slidingpt}). Note that in this case all quantities scale linearly
with mass and as such the solution is the same for any mass.

\begin{figure}
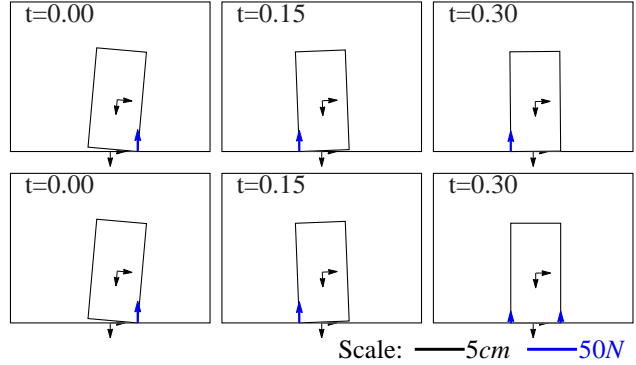

\centering
\def\svgwidth{8.3cm}
\include{block_ex}
  \vspace{8pt}
\caption{A rocking block (height $h=10$cm, width $w=5$cm, mass $m=5$kg) settling on the ground.
 \emph{Top Row:} Without pseudo-impulse ($\delta_t=0$). \emph{Bottom Row:} With pseudo-impulse ($\delta_t = 0.03$).  
The execution is identical until the last frame. }
\label{fig:blockex}
\end{figure}

To see how the pseudo-impulse resolves Zeno executions, consider the ``rocking block'' example of a rectangular 
rigid body in Figure~\ref{fig:blockex} of width $w$, height $h$, mass $m_b$, and inertia $I_b$
(where if a uniform distribution is assumed $I_b=m_b(w^2+h^2)/12$), as studied in e.g.\ \cite{housner1963,McGeer_Wobbling_1989,LygerosJohansson2003,yilmaz2009solving}. 
As it is falling onto the ground if a corner (labeled ``l'') is touching down\footnote{ In
this example the contact points are assumed to resist sliding friction, although when they are both in
contact with the ground one of the redundant tangential constraints is dropped. The phenomenon of interest occurs
equally well with frictionless contact however the analysis is simpler in the frictional case as presented here.} then
the normal direction impulse at that corner when the other corner (labeled ``r'') hits the ground is,
\begin{align}
\bfU_{l}(\limp) &= \frac{\dot{z}(2I_b+m_b(w^2 - h^2)/2)}{w^2},
\end{align}
(note that by convention a positive velocity $\dot{z}$ is one that is towards the ground)
and the required impulse is negative~if,
\begin{align}
h^2> w^2 + \frac{4I_b}{m_b} \Rightarrow \bfU_{l}(\limp) <0,
\end{align}
in which case the contact is broken no matter how slow the block is moving -- this is exactly what
Lemma~\ref{lem:pseudoimp} predicts.
The system exhibits Zeno behavior requiring infinite transitions in finite time 
as each impact removes some energy but does not immobilize the block, as \changed{shown} in the 
upper row of Figure~\ref{fig:blockdbz} which plots the vertical velocity as the system undergoes a Zeno execution.

Instead if the pseudo-impulse is considered,
\begin{align}
\bfU_{l}(\limp+\dimp) &= \frac{\dot{z}(2I_b+m_b(w^2 - h^2)/2)}{w^2} + \frac{ \delta_t m_b g}{2}
\end{align}
the contact is broken if,
\begin{align}
  h^2 &>   w^2 + \frac{4I_b}{m_b} +  \frac{\delta_t g w^2}{\dot{z}} \Rightarrow \bfU_{l}(\limp+\dimp)<0,
\end{align}
where as the speed goes to zero ($\dot{z}\rightarrow 0$) the threshold on height that  
allows the contact to persist grows and eventually is met -- this is exactly the case considered in 
Theorem~\ref{thm:pseudoimp}.  This truncation of 
the Zeno execution \changed{shown} in the lower row of Figure~\ref{fig:blockdbz}, where for
the dimensions used the block comes to rest if the vertical speed at impact is
less than $6.3cm/s$.

\begin{figure}
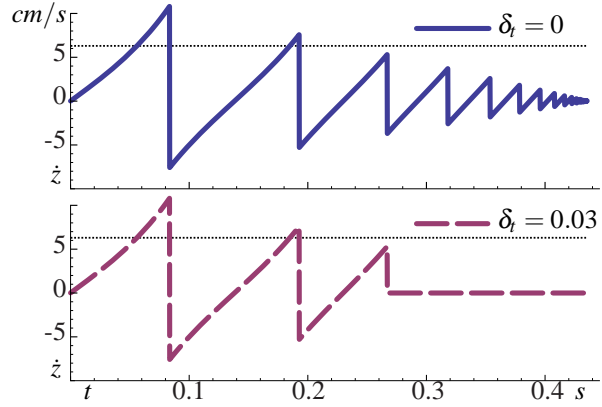

\centering
\def\svgwidth{7.5cm}
\include{block_dbz}
\caption{Comparison of the vertical velocity of a settling block for evaluations
with and without the pseudo-impulse. 
The execution is identical until the impact at $t=0.27$s. The pseudo-impulse implicitly
bounds the vertical velocity such that an impact at speeds lower than $6.3cm/s$ 
causes the block to come to rest, as indicated by the dotted line.}
\label{fig:blockdbz}
\end{figure}

Finally, note that the (somewhat restrictive) result in Theorem~\ref{thm:zeno} applies exactly to this rocking block example.
The word associated with the Zeno execution of interest alternates between the left and right constraints being active,~i.e.,
\eqnn{
\sigma(\chi) = \set{\set{l},\set{r},\set{l},\set{r},\dots}.
}
Since the (gravitational) force violates~\eqref{eq:zeno}, any value of the pseudo-impulse parameter 
$\delta_t > 0$ prevents an infinite number of liftoff (and hence touchdown) transitions for either constraint.
We conclude in this case that inclusion of the pseudo-impulse has the effect of \emph{truncating} the 
Zeno execution, i.e.\ preventing an infinite number of discrete transitions in finite time, as explored
further in \changed{Section~\ref{sec:zenodisc}}.

\changed{
\subsection{Numerical Simulation}
\label{sec:dis:numerical}

As noted in the Introduction, this paper is focused 
on constructing from the rigid body dynamics assumptions of Section~\ref{sec:imp} a hybrid
dynamical systems model in Section~\ref{sec:hyb} that has formally established, useful mathematical properties. 
However, as is generally true of classical ODE models in engineering applications, it often 
lends additional insight to approximate an execution of this hybrid system through numerical simulation.
For example, simulations of this model are used to generate Figures~\ref{fig:leapsim}, 
\ref{fig:pseudoimp}, \ref{fig:lam1ncomp}, \ref{fig:vertsim}, \ref{fig:blockex}, 
\ref{fig:blockdbz}, and \ref{fig:ledgesim}. A full exploration of the issues involved
in numerical simulation lies far outside the scope of the present paper.
In this section we briefly describe
some of the most critical details needed to simulate executions of our self-manipulation hybrid dynamical system.

The simulations presented in this paper were implemented in 
Mathematica\footnote{\url{http://www.wolfram.com/mathematica/}} using a conventional event-driven scheme%
\footnote{\changed{
Proposed originally in~\cite{Witsenhausen1966} and subsequently popularized by~\cite{ShampineGladwell1991,Back_Guckenheimer_Myers_1993},
this simulation algorithm for hybrid systems was proven to converge to \emph{orbitally stable} trajectories that encounter
(i) isolated transitions by~\cite{Tavernini1987}
and
(ii) simultaneous transitions by~\cite{burden2013metrization}. 
We refer the reader to~\cite{burden2013metrization} as a comprehensive reference for the definition of orbitally stable trajectories, implementation details for the simulation algorithm, and proofs of convergence of simulations to executions.
}}.
An execution of the self-manipulation hybrid dynamical system,
Definition~\ref{def:ex}, is calculated for each interval of the hybrid time domain, 
Definition~\ref{def:time}, sequentially. The flow in that particular contact mode, $F_I$, \eqref{eq:smflow}, is integrated
from the state at the initial time as an ordinary or differential-algebraic equation~\cite{AscherPetzold1998}, as appropriate, using the {\tt NDSolve} command with default parameters. 
The integration is stopped when the state reaches any guard, $G_{I}$, \eqref{eq:sm:outlet}, 
as detected by the {\tt WhenEvent} command. Separate events are used to test the new touchdown predicate,
$\NTD$, \eqref{eq:NTD}, and liftoff predicate, $\LO$, \eqref{eq:LO}, which make up the guard
(described in further detail below).
When an event is detected the integration stops and the subsequent contact mode is determined
by evaluating the appropriate complementarity problem according to~\eqref{eq:gdJcalc}. Using
this combined guard set requires fewer conditions than testing for each guard separately, however
the resulting executions are identical. Finally the reset map, $R_{I,J}$,~\eqref{eq:reset}, is applied
and the numerical integration continues again in the next time interval in contact mode $J$. 

The numerical implementation of the two event predicate detections can be simplified from
the full definition used in the above proofs. Specifically, the touchdown predicate
may be checked with an event $\bfa_k(\bfq) < 0$, which given the domain constraints,
\eqref{eq:Ddef}, becomes true at the moment the conditions of \eqref{eq:TDsimp} hold.
Similarly, by Lemma~\ref{lem:trend}, the liftoff predicate (which includes a trending condition, $\prec$)
may be checked with an event
$\bfU_k(\lambda_I)<0$, which becomes true at the moment the conditions of \eqref{eq:LO} hold.
Numerically determining these event times and states involves integrating beyond the event
(since the relevant quantities can be formally extended outside the domain $\calD$) 
and then stepping back to approximate the zero crossing to some desired precision using a numerical root-finding algorithm%
\footnote{\changed{
Although there exist pathological cases wherein this scheme determines event times inaccurately (or fail to detect events entirely), so long as the desired execution satisfies a mild \emph{orbital stability} property this scheme is guaranteed to succeed~\cite[Theorem~27]{burden2013metrization}.
}}. 

Once an event is detected, the complementarity problem, either $\CP_\PIV$ or $\CP_\FA$, 
can be solved in many ways, see e.g.\ \cite{cottle_pivot_1968}, \cite{pang1996complementarity}. For
small systems of contacts, a simple but inefficient method is to simply check the truth value
of the predicate, either $\PIV$, \eqref{eq:PIV}, or $\FA$, \eqref{eq:CPFA}, for all possible subsequent 
modes, $J \subseteq \calI$, and choose the unique mode where the predicate is true. 

The scope, $\calI$, \eqref{eq:IScope}, of these complementarity problems can also be simplified, as noted in 
Section~\ref{sec:complementarity}. For liftoff guards, in general it suffices to check simply
the active constraints, $I\subseteq\calI$. Any constraints that are not in $I$ and therefore
not algebraically guaranteed to satisfy the equality condition in \eqref{eq:IScope} will, 
due to numerical error, only be close. 
Using $I$ instead of $\calI$ will miss cases such as Figure~\ref{fig:ptex} (d), wherein
a constraint $j$ that is not in the current active mode ($j \notin I$) satisfies the 
scope in \eqref{eq:IScope} ($j \in \calI$). However these examples are not generic as any perturbation
in the state or constraint resolves this problem -- they make up a set of
measure zero which we do not expect can be reached numerically. Similarly the scope for touchdown
guards may be taken as the active set plus any impacting constraints, as stated in~\eqref{eq:IVScope2}.

}

\subsection{Zeno convergence results}\label{sec:zenodisc}
It is instructive to contrast Theorems~\ref{thm:zenolim} and~\ref{thm:zeno} and Corollary~\ref{cor:zenolim} with the \emph{completion} results in \cite{OrAmes2011}.
For Lagrangian systems subject to plastic impact with a single unilateral constraint,
the \emph{completion} of the \emph{simple Lagrangian hybrid system} in \cite{OrAmes2011} coincides with
the \emph{self-manipulation hybrid system} we develop in Section~\ref{sec:smsystem}.
Specifically, the completion is a hybrid dynamical system (in the sense of Definition~\ref{def:hs} with one \emph{constrained} and one \emph{unconstrained} mode.
Transition from the unconstrained to the constrained mode occurs at impact; outward-trending forces trigger the transition back to the unconstrained mode.
Our self-manipulation system can therefore be viewed as a generalization of the completion to Lagrangian systems undergoing plastic impact with an arbitrary number of unilateral constraints, a situation not considered in \cite{OrAmes2011}.
In connection with the numerical simulation literature discussed in Section~\ref{sec:sim},
we note that~\cite[Lem.~12]{stewart1998convergence} provides an accumulation result for time-stepping algorithms that is analogous to our Corollary~\ref{cor:zenolim}.
 
We further clarify the relationship between our contributions and the results in \cite{OrAmes2011} in the case of purely inelastic (i.e., plastic) impact.
Although the (un-completed) simple Lagrangian hybrid system allows plastic impacts (i.e., a \emph{coefficient of restitution} $e = 0$ in \cite[Eqn.~4]{OrAmes2011}), the definition of the guard in \cite[Sec.~II-A-3]{OrAmes2011} implies that every plastic impact is a Zeno event -- every $(\bfq,\dot{\bfq})\in T\e{Q}$ for which $\bfa(\bfq) = 0$ and $\bfA(\bfq)\dot{\bfq} = 0$ is a fixed point of the reset map in \cite[Eqn.~4]{OrAmes2011}.
This stands in contrast to our guard definition~\eqref{eq:gdpiv}--\eqref{eq:gdffa}, where we have excised such points from the domains of the reset maps.
In plain language, we ensure that constraints may persist after an impact without instantaneously triggering Zeno events.
As an illustration, consider just a single impact event in the \emph{rocking block} of Figure~\ref{fig:blockex}.
In the simple Lagrangian hybrid system of \cite{OrAmes2011}, plastic impact at time $t\in\Real$ results in a 
Zeno execution over a hybrid time trajectory $\e{T} = \mycoprod_{i=1}^\infty \set{t}$ that spans zero 
(continuous) time, thus conflicting with Theorem~\ref{thm:dbltrans} (before, possibly, completion and
continuing exection to the Zeno execution considered in Section~\ref{sec:dis:pseudo}).
In our self-manipulation hybrid system, the execution continues past this impact as illustrated in Figure~\ref{fig:blockdbz} 
(\emph{top}) by transitioning to a constrained mode. 
We note that the behavior of our system (and, equivalently, the completion from \cite{OrAmes2011}) is consistent with the analysis of the rocking block in \cite[Sec.~2]{housner1963}.

We also comment on the relationship between the truncation effect introduced by our \emph{pseudo-impulse} and the \emph{reliable truncation} proposed in \cite[Def.~6]{OrAmes2011}.
The pseudo-impulse we proposed in Section~\ref{sec:pseudoimpulse} prevents an infinite number of isolated liftoffs in finite time from pairs of constraints impinged upon by the external forces; this is the content of Theorem~\ref{thm:zeno}. 
In \cite[Def.~6]{OrAmes2011}, \emph{reliable truncation conditions} were shown to yield simulated executions that approximate a Zeno execution to specified precision; this is the content of \cite[Thm.~3]{OrAmes2011}.
Thus our contribution is a phenomenological heuristic that augments the hybrid system to prevent some Zeno executions from arising (specifically, those Zeno executions that only involve two constraints impinged upon by external forces).
The contribution in \cite[Sec.~V]{OrAmes2011} is a formal guarantee of simulation accuracy for Zeno executions in the original hybrid system. 
We have yet to determine the ``reliability'' of our truncation in this sense (though, as noted in the discussion following Theorem~\ref{thm:zeno}, the psuedo-impulse truncation is reliable in this sense for the rocking block).

It is possible to relax the hypotheses in Theorem~\ref{thm:zenolim} in several ways that ensure the results in Section~\ref{sec:zenothms} still hold.
It is straightforward to allow time-dependent forcing \citep[as in][Thm.~10]{Ballard2000} so long as the applied and potential forces obey the estimate,
\eqn{\label{eq:effbddt}
\forall\; (\bfq,\dot{\bfq})\in T\e{Q}:&\absMinv{\Upsilon(t,\bfq,\dot{\bfq}) - \Nbar(t,\bfq,\dot{\bfq})} 
\le \ell(t)(1 + \absM{\dot{\bfq}} + d_{\Mbar}(\bfq_0,\bfq)),
}
where $\ell:\Real\into\Real$ is nonnegative and locally integrable.
Fully-actuated massless limbs can be included by constraining their motion with respect to the body 
degrees-of-freedom, e.g.\ through the use of ``mirror laws'' \cite[]{BuehlerKoditschek1994}, so long 
as the forces required to enforce the desired motion obeys the estimate in~\eqref{eq:effbddt}.
Care must be taken to allow the forcing to depend on the contact mode, since it is possible to introduce ``sliding modes'' wherein limbs cycle infinitely often between constrained and unconstrained modes at a single time instant; we discuss this issue further in Section~\ref{sec:forcingdisc}.

Finally, as we have not yet been able to construct an example wherein a constraint that meets~\eqref{eq:zeno}
is involved in a Zeno execution, we speculate that the pseudo-impulse truncates a larger class of 
Zeno executions than handled by Theorem~\ref{thm:zeno}. Such an extension would require a careful treatment
of the interaction between the complementarity conditions and the Zeno execution, resulting in either a proof
that~\eqref{eq:zeno} can not hold for a constraint undergoing Zeno or conditions for Zeno executions involving
more than two contact constraints.

\subsection{Contact-dependent forcing}\label{sec:forcingdisc}
The developments in Sections~\ref{sec:imp} and~\ref{sec:hyb} allow, in principle, for the applied forces $\Upsilon_I$ to depend on the set of active constraints $I\in\e{J}$.
This is a desirable feature of our formalism since many extant robots sense their contact state with 
the world and accordingly alter their actuator commands. This also enables the separate handling of massless
limbs that make or break contact with the ground, Assumptions~\ref{ass:cml} and \ref{ass:uml}.
The hybrid system formalism provides a direct route to incorporate this sort of feedback.
Indeed, so long as one can ensure that the complementarity assumptions hold,~\ref{ass:fac} and~\ref{ass:ivc}, then the self-manipulation system (Def.~\ref{def:smhs}) has disjoint guards (Thm.~\ref{thm:disjoint}) and hence is deterministic (Thm.~\ref{thm:det}) and non-blocking (Thm.~\ref{thm:nb}).
Note, however, that the complementarity problems do not depend on contact mode, and so care must be used to enable
contact-dependent forcing that does not break these assumptions, otherwise the execution may alternate
between two adjacent modes.

\section{Conclusion}
\label{sec:Conclusion}

\begin{figure}
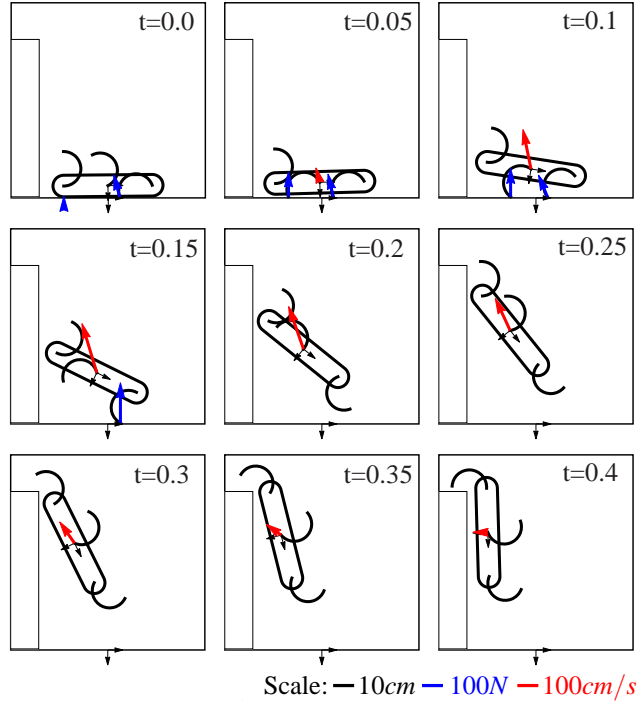

\centering
\def\svgwidth{8.3cm}
\include{ledge}
  \vspace{-8pt}
\caption{Keyframes from RHex simulation leaping onto a 73cm ledge. Blue arrows show contact forces while
the red arrow shows body velocity. The coefficient of friction is $\mu=0.8$ and the relative leg
timing is $t_2=0.06s$.
}
\label{fig:ledgesim}
\end{figure}

The hybrid system model presented here provides for the consistent inclusion of many
common simplifying physical assumptions, including rigid bodies and plastic impacts, as well as some less
common assumptions, such as the pseudo-impulse. These assumptions are well understood to be 
only approximations to the real physics: our central contribution is to develop sufficiently compatible 
refinements of previously investigated versions as to obviate their erstwhile conflicts.
Nevertheless, this refined model is still able to capture qualitatively many behaviors of interest in robotics -- not 
merely the familiar steady state tasks \citep[e.g.][]{BuehlerKoditschek1994,Holmes_Full_Koditschek_Guckenheimer_2006} but also transitional maneuvers
such as, archetypally, the leap onto a ledge shown in Figure~\ref{fig:ledgesim} (a behavior first demonstrated
in \cite{paper:johnson-icra-2013}). Simulation results such as these suggest the descriptive power of
our refined collection of physical assumptions, while the consistency properties of 
Section~\ref{sec:consistency} ensure that they avoid these potential conflicts. However, 
as noted in the text, there remain a few cases
where the formal proofs included here are limited to a still further constrained subset of mechanical 
settings than admitted by these assumptions, most notably 
Theorems~\ref{thm:pseudoimp}, \ref{thm:dbltrans}, \ref{thm:zenolim}, \&~\ref{thm:zeno}.
As noted at several points throughout the text, we believe that the conclusions remain true under 
the broader conditions (i.e., those listed in the assumptions themselves), but more general 
proofs of these properties remain an open research~question. 

Including such explicit assumptions makes it clear that certain extensions of this model satisfying these
assumptions are trivially admissible, such as more complicated rolling contact 
conditions, \cite[Sec. 5.2.1]{book:mls-1994}, over rough (though still semi-analytic) terrain. At the same 
time extensions that violate these assumptions require that some of the formal proofs be reconsidered, 
for example elastic impacts. One assumption that is often relaxed is the persistence of contact,
Assumption~\ref{ass:contact}, which precludes the use of time-stepping formulations 
\citep[e.g., presented in][]{stewart1996implicit,anitescu1997formulating} that have gained in popularity
as a modeling and simulation framework. The remaining assumptions do not explicitly depend on 
Assumption~\ref{ass:contact}, and so it may be possible to extend some of the results from 
Section~\ref{sec:imp} to these settings (although the massless leg conditions, Assumptions~\ref{ass:cml}
\&~\ref{ass:uml}, and the pseudo-impulse assumption, Assumption~\ref{ass:pseudo}, may prove challenging
to~maintain).

Indexing the contact mode as a subset of the possible contact constraints suggests a natural simplicial 
topology \cite[]{Hatcher2002} over these contact modes, as first suggested in \cite{paper:johnson-icra-2013}.
This organization of the hybrid system should enable the inspection of structural properties of the system
as a whole. Furthermore the various guard sets imply a refinement of the domains into disjoint sets that
reach a unique next guard (or remain in that mode forever). More broadly, we believe that this physically 
motivated hybrid system definition, formal consistency now established, invites study as a mathematical 
object whose properties may likely yield formal insights into the nature of these mechanical systems and 
promote the design of more complex robot behaviors that can exploit~them.

\section*{Acknowledgments}
\pdfbookmark{Acknowledgments}{Acknowledgments}

This was supported in part by the ARL/GDRS RCTA project under Cooperative Agreement Number W911NF-10-2-0016. 
\changed{The authors would like to thank the reviewers and editors for their substantial 
time and expert consideration of this work, as well as their numerous helpful suggestions 
for sharpening the claims and improving the readability of the final paper. We also thank 
Avik De, Andy Ruina, Russ Tedrake, and Michael Posa for helpful related discussions over the past few years.}
An earlier version of parts of this work previously appeared in \cite{thesis:johnson-2014}.

\appendix
\section*{Appendix}
\pdfbookmark{Appendix}{Appendix}
\renewcommand{\thesubsection}{\Alph{subsection}}

\subsection{Proof of Lemma~\ref{lem:twofun}}
\label{app:twofun}

\begin{proof}
First consider $(h(x)\prec_F 0) \Rightarrow (g(x)h(x)\prec_F 0)$.
Let $m_h$ be the index of the first nonzero derivative in the definition of $h(x)\prec_F 0$,~\eqref{eq:thru},
and so for all $\ell<m_h$ the $\ell^{th}$ Lie derivative is zero, $(\Lie^\ell_F h)(x) =0$. Therefore
we also have,
\begin{align}
\big(\Lie^\ell_F (g\cdot h)\big)(x) &= \sum_{k=0}^\ell {\ell \choose k} \big(\Lie^{\ell-k}_F g(x)\big) \cdot \big(\Lie^k_F h(x)\big) 
=  \sum_{k=0}^\ell {\ell \choose k} \big(\Lie^{\ell-k}_F g(x) \big) \cdot \big(0\big) = 0,
\end{align}
and similarly,
\begin{align}
(\Lie^{m_h}_F g\cdot h)(x) &= \sum_{k=0}^{m_h} {{m_h} \choose k} \big(\Lie^{{m_h}-k}_F g(x)\big) \cdot \big(\Lie^k_F h(x)\big) 
= g(x) \cdot \big(\Lie^{m_h}_F h(x)\big),
\end{align}
where since $g(x)>0$, $g(x) \cdot (\Lie^{m_h}_F h(x)) <0 \Leftrightarrow (\Lie^{m_h}_F h(x)) <0$.
Therefore $(h(x)\prec_F 0) \Rightarrow (g(x)h(x)\prec_F 0)$.

Now consider $(g(x)h(x)\prec_F 0) \Rightarrow (h(x)\prec_F 0)$. 
Let $m_{gh}$ be the index of the first nonzero derivative in the definition 
of $g(x)\cdot h(x) \prec_F 0$,~\eqref{eq:thru}. 
The proof proceeds by strong induction on $\ell$, where $0 \leq \ell \leq m_{gh}$, relative to the proposition,
\begin{align}
\big(\Lie^\ell_F g \cdot h\big) (x) = g(x)\Lie^\ell_F h(x), \label{eq:indprop}
\end{align}
i.e.\ that the $\ell^{th}$ 
Lie derivative of the product is equal to the product of positive function, $g(x)$, with
the $\ell^{th}$ Lie derivative of the other factor, $h(x)$. 
The base case is trivial, as for the $0^{th}$ derivative,
\begin{align}
\big(\Lie^0_F g \cdot h\big) (x) = g(x)\Lie^0_F h(x) =  g(x) h(x).
\end{align}
If $m_{gh} = 0$, then $g(x)\cdot h(x) <0$, but
since $g(x)>0$ we get that $h(x)<0$ and therefore $h(x) \prec 0$. If instead 
$m_{gh} >0$, then $g(x)\cdot h(x) = 0$ and therefore $h(x) =0$.
For the inductive step, suppose that the statement is true for all $k < \ell$, implying~that,
\begin{align}
\big(\Lie^k_F g \cdot h\big) (x) = g(x)\Lie^k_F h(x) = 0,
\end{align}
(as recall that $\ell \leq m_{gh}$), and therefore, $\Lie^{k}_F  h (x) = 0 $. Thus,
\begin{align}
\big(\Lie^\ell_F (g\cdot h)\big)(x) &= \sum_{k=0}^\ell {\ell \choose k} \big(\Lie^{\ell-k}_F g(x)\big) \cdot \big(\Lie^k_F h(x)\big) 
=  g(x) \cdot \big(\Lie^\ell_F h(x)\big),
\end{align}
and the induction complete, we now conclude that the
proposition,~\eqref{eq:indprop}, holds for all $\ell\leq m_{gh}$.  

If $g(x)h(x)\prec_F 0$, then by~\eqref{eq:thru} for all $\ell<m_{gh}$,
$(\Lie^{\ell}_F (g\cdot h)(x)) =0$, and so using~\eqref{eq:indprop} and $g(x)>0$,
we conclude that $\Lie^{\ell}_F h(x) =0 $. 
Similarly for $\ell=m_{gh}$, 
$ (\Lie^{m_{gh}}_F (g \cdot h) (x)) <0\Leftrightarrow \Lie^{m_{gh}}_F h(x) <0 $.
Taken together, these are exactly the conditions for $h(x)\prec_F 0$,~\eqref{eq:thru}, and so
$(g(x)h(x)\prec_F 0) \Rightarrow (h(x)\prec_F 0)$.

\end{proof}

\subsection{Linear Algebra}\label{app:la}

For additional notes on the Schur complement and block matrix inverse, see e.g.\ \cite{cottle1974manifestations}, \cite{Lu2002119}, or \cite{Jo2004}.
Consider a block matrix $M$ defined as,
\begin{align}
M:=\left[\begin{array}{cc} E & F \\ G & H \end{array}\right].
\end{align}
If $E$ is nonsingular, then the Schur complement of $E$ in $M$ is,
\begin{align}
S_E := H-GE^{-1}F, \label{eq:schur}
\end{align}
which is sometimes written as $(M|E)$.

If $M$ is also nonsingular, the inverse of $M$ is,
\begin{align}
\left[\begin{array}{cc} E & F \\ G & H \end{array}\right]^{-1} &= 
\left[\begin{array}{cc} E^{-1}+E^{-1}FS_E^{-1}GE^{-1} & -E^{-1}FS_E^{-1} \\
-S_E^{-1}GE^{-1} & S_E^{-1} \end{array}\right].
\end{align}

In particular when $\Mbar$ is invertible the block matrix inverse of~\eqref{eq:astardef} can be written as,
\begin{align}
&  \left[\begin{array}{cc} \Md_J & \bfAdT_J \\ \bfAd_J & \Lambda_{J} \end{array} \right] 
:= \left[\begin{array}{cc}\Mbar & \bfA_J^T \\ \bfA_J & \mathbf{0}_{J\times J} \end{array} \right]^{-1} 
\label{eq:mdexp} 
=\left[\begin{smallmatrix}\Mbar^{-1} - \Mbar^{-1} \bfA^T (\bfA \Mbar^{-1} \bfA^T)^{-1} \bfA \Mbar^{-1} & 
      \quad \Mbar^{-1}\bfA^T(\bfA \Mbar^{-1} \bfA^T)^{-1} \\
      (\bfA \Mbar^{-1} \bfA^T)^{-1}\bfA\Mbar^{-1} &-(\bfA \Mbar^{-1} \bfA^T)^{-1} \end{smallmatrix} \right].
\end{align}
  
Where when $\Mbar$ is positive definite, so is $(\bfA \Mbar^{-1} \bfA^T)^{-1}$, and therefore $\Lambda$ is negative definite. 
Similarly, when $\Mbar$ is only positive semi-definite, $\Lambda$ is negative semi-definite.

A common refinement to this inverse that comes up when considering some constraint 
sets $J$ and $K$ such that $K = J \cup \{k\}$ is\footnote{ Note that the Schur complement, $S_E$, 
used here is with respect to the blocks used in~\eqref{eq:blocksetup} as defined 
explicitly in~\eqref{eq:SAdef}.},
\begin{align}
&\bfA_K = \left[ \begin{array}{c} \bfA_J\\ \bfA_k\end{array}\right],\\
&\left[ \begin{array}{cc} \Mbar & \bfA_K^T \\ \bfA_K & \mathbf{0}_{K\times K} \end{array} \right]^{-1}=
\left[ \begin{array}{cc} 
  \left[\begin{array}{cc}\Mbar & \bfA_J^T \\ \bfA_J & \mathbf{0}_{J\times J} \end{array} \right]
&
  \left[\begin{array}{c}\bfA_k^T \\ \mathbf{0}_{J\times 1} \end{array} \right]
\\
  \left[\begin{array}{cc}\bfA_k & \mathbf{0}_{1\times J} \end{array} \right]
& 
  0
\end{array}\right]^{-1}=\label{eq:blocksetup}\\
&\left[\!\!\!\!\!\! \begin{array}{cc} 
  \left[\!\begin{smallmatrix} \Md_J & \bfAdT_J \\ \bfAd_J & \Lambda_{J} \end{smallmatrix}\! \right]\!\Big(\Id \!+\!\! 
  \left[\!\begin{smallmatrix} \bfA_k^T \\ \mathbf{0} \end{smallmatrix} \!\right] 
  \!\!\begin{smallmatrix}S_E^{-1}\end{smallmatrix}\!\!
  \left[\!\begin{smallmatrix}\bfA_k & \mathbf{0} \end{smallmatrix} \!\right]\!\!
  \left[\!\begin{smallmatrix}\Md_J & \bfAdT_J \\ \bfAd_J & \Lambda_{J} \end{smallmatrix} \!\right]\Big)
&
  \!\minus\!\left[\!\begin{smallmatrix} \Md_J & \bfAdT_J \\ \bfAd_J & \Lambda_{J} \end{smallmatrix}\! \right] \!\!\!\!
  \left[\!\begin{smallmatrix}\bfA_k^T \\ \mathbf{0} \end{smallmatrix} \!\right]
  \!\!\begin{smallmatrix}S_E^{-1}\end{smallmatrix}\!\!
\\
  \!\minus\begin{smallmatrix}S_E^{-1}\end{smallmatrix} \!\!
  \left[\!\begin{smallmatrix}\bfA_k & \mathbf{0} \end{smallmatrix} \!\right] \!\!
  \left[\!\begin{smallmatrix} \Md_J & \bfAdT_J \\ \bfAd_J & \Lambda_{J} \end{smallmatrix}\! \right] \!\!
&
  \begin{smallmatrix}S_E^{-1}\end{smallmatrix} 
\end{array}\!\!\!\!\!\!\right] 
=\left[\!\!\!\!\!\! \begin{array}{cc} 
  \left[\!\begin{smallmatrix} \Md_J & \bfAdT_J \\ \bfAd_J & \Lambda_{J} \end{smallmatrix}\! \right] \!\!+\!\! 
  \left[\!\begin{smallmatrix} \Md_J \bfA_k^T\\ \bfAd_J\bfA_k^T \end{smallmatrix} \!\right] 
  \!\!\begin{smallmatrix}S_E^{-1}\end{smallmatrix}\!\!
  \left[\!\begin{smallmatrix}\bfA_k\Md_J & \bfA_k\bfAdT_J  \end{smallmatrix} \!\right]
&
  \!\minus\!\left[\!\begin{smallmatrix} \Md_J\bfA_k^T \\ \bfAd_J\bfA_k^T \end{smallmatrix}\! \right] 
  \!\!\begin{smallmatrix}S_E^{-1}\end{smallmatrix}\!\!
\\
  \!\minus\begin{smallmatrix}S_E^{-1}\end{smallmatrix} \!\!
  \left[\!\begin{smallmatrix} \bfA_k\Md_J & \bfA_k \bfAdT_J  \end{smallmatrix}\! \right] \!\!
&
  \begin{smallmatrix}S_E^{-1}\end{smallmatrix} 
\end{array}\!\!\!\!\!\!\right]\nonumber\\
&=\left[ \!\!\!\begin{array}{cc}
 \left[ \begin{smallmatrix}  \Md_J +\Md_J \bfA_k^T S_E^{-1}\bfA_k\Md_J \end{smallmatrix} \right]
& 
  \left[\begin{smallmatrix}\bfAdT_J+\Md_J \bfA_k^TS_E^{-1} \bfA_k\bfAdT_J &   -\Md_J\bfA_k^T S_E^{-1} \end{smallmatrix} \right]
\\[.3em]
  \left[\begin{smallmatrix}\bfAd_J + \bfAd_J\bfA_k^T S_E^{-1}\bfA_k\Md_J \\  -S_E^{-1} \bfA_k\Md_J \end{smallmatrix}\right]
& 
  \left[\begin{smallmatrix} \Lambda_{J} +\bfAd_J\bfA_k^T S_E^{-1} \bfA_k\bfAdT_J 
&
  -\bfAd_J\bfA_k^T S_E^{-1}
\\
  -S_E^{-1} \bfA_k \bfAdT_J 
&
  S_E^{-1} \end{smallmatrix} \right]  
\end{array}\!\!\!\right ]\label{eq:finalbreakout} 
=:\left[ \begin{array}{cc}
  \Md_K
& 
  \bfAdT_K
\\ 
  \bfAd_K 
& 
  \Lambda_{K}
\end{array}\right],
\\
&S_E := 0 -   \left[\!\begin{smallmatrix} \bfA_k& \mathbf{0} \end{smallmatrix} \!\right] 
  \left[\!\begin{smallmatrix} \Md_J & \bfAdT_J \\ \bfAd_J & \Lambda_{J} \end{smallmatrix}\! \right]
  \left[\!\begin{smallmatrix}\bfA_k^T \\ \mathbf{0} \end{smallmatrix} \!\right]\!\! = -\bfA_k\Md_J \bfA_k^T. \label{eq:SAdef}
\end{align}
Note that when both the matrix and the first block in~\eqref{eq:blocksetup} are invertible, $S_E$ must be non-zero
as $S_E^{-1}$ is an element of the inverse in~\eqref{eq:finalbreakout}. 
Since $\Lambda_K$ is negative semi-definite, so are its principle minors, in particular $S_E^{-1}$.
Therefore $\bfA_k\Md_J \bfA_k^T$, as a positive semi-definite and non-zero scalar, is a positive number. 
This final expansion,~\eqref{eq:finalbreakout}, expresses the components of $\bfAd_K, \Md_K,$ 
and $\Lambda_K$ in terms of $\bfAd_J, \Md_J,$ and $\Lambda_J$ together with the added constraint $\bfA_k$.

\subsubsection{Proof of Lemma~\ref{thm:mlequiv}}
\label{app:pfmle}

\begin{proof}

Define some set of generalized coordinates (as in \cite[Sec.~II.G]{johnson_selfmanip_2013}), $\bfy$, such
that $\dot{\bfy}=\bfY\dot{\bfq}$ and that the Jacobian of the corresponding implicit function is defined so that $\dot{\bfq}=\bfH\dot{\bfy}$,
\begin{align}
\bfH = \left[ \begin{array}{c} \bfA \\ \bfY \end{array} \right]^{-1}\left[ \begin{array}{c} 0 \\ \Id \end{array} \right],\qquad  
\end{align}

For this proof we need to show that $\tilde{\bfM} = \bfH^T \Mbar \bfH$ is invertible if and only if $\bigl[\begin{smallmatrix}\Mbar & \bfA^T \\ \bfA& 0\end{smallmatrix} \bigr]$ is. 
%
%
The Rank Theorem \cite[Thm.~4.12]{Lee2012} implies there exists a parameterization such that the constraint can be decoupled into
a full rank c$\times$c subblock, 
$\bfA = \left[\bfB \quad 0_{c\times e} \right],$
and therefore we choose a parameterization such that,
$\bfY = \left[0_{e\times c} \quad \Id_e \right]$.
Thus,
\begin{align}
\bfH &= \left[\begin{array}{cc} \bfB & 0_{c\times e} \\ 0_{e\times c} & \Id_e \end{array}\right]^{-1} \left[\begin{array}{c} 0 \\ \Id_e \end{array}\right] 
= \left[\begin{array}{cc} \bfB^{-1} & 0_{c\times e} \\ 0_{e\times c} & \Id_e \end{array}\right] \left[\begin{array}{c} 0 \\ \Id_e \end{array}\right] = \left[\begin{array}{c} 0 \\ \Id_e \end{array}\right],
\end{align}
\begin{align}
\tilde{\bfM} &= \bfH^T \Mbar \bfH = \left[\begin{array}{cc} 0 & \Id_e \end{array}\right]
\left[\begin{array}{cc} \Mbar_{11} &\Mbar_{12} \\ \Mbar_{21} &\Mbar_{22}\end{array}\right] \left[\begin{array}{c} 0 \\ \Id_e \end{array}\right] 
= \Mbar_{22},
\end{align}
and so the requirement is that $\tilde{\bfM}$ is invertible reduces down to simply requiring that $\Mbar_{22}$ is invertible. 

On the other hand we have,
\begin{align}
\left[\begin{array}{cc} \Mbar & \bfA^T \\ \bfA& 0\end{array} \right] &= 
\left[\begin{array}{ccc} \Mbar_{11} &\Mbar_{12} & \bfB^T \\ \Mbar_{21} &\Mbar_{22} & 0\\ \bfB & 0 & 0 \end{array}\right].
\end{align}
Since $\bfB$ is full rank then the invertability of this matrix again reduces to simply invertibility of $\Mbar_{22}$
\citep[e.g.][Corollary~3.3]{Lu2002119}, and thus
the conditions are equivalent. 
\end{proof}

\subsubsection{Proof of Lemma~\ref{thm:dynamics}}
\label{app:pfdyn}
\begin{proof}
Recall that $\lim_{\epsilon\rightarrow0}\Mbar_\epsilon = \Mbar$ and that $\Mbar_\epsilon$ 
is invertible for all $\epsilon\in(0,\bar{\epsilon})$, for some $\bar{\epsilon} > 0$.
For all $\epsilon \ge 0$, define $\Md_\epsilon$, $\bfAd_\epsilon$, and $\Lambda_\epsilon$ by 
replacing $\Mbar$ with $\Mbar_\epsilon$ in~\eqref{eq:astardef}.
Using~\eqref{eq:Md}--\eqref{eq:Lambda} we rewrite the dynamics,~\eqref{eq:dyn}--\eqref{eq:ldyn},
\begin{align}
\lambda &= \bfAd \left(\Upsilon  - \Cbar\dot{\bfq} - \Nbar\right) - \Lambda \dot{\bfA} \dot{\bfq} 
= \lim_{\epsilon\rightarrow 0} \bfAd_\epsilon \left(\Upsilon  - \Cbar\dot{\bfq} - \Nbar\right) - \Lambda_\epsilon \dot{\bfA} \dot{\bfq} \\
&= \lim_{\epsilon\rightarrow 0} \left((\bfA \Mbar_\epsilon^{-1} \bfA^T)^{-1}\bfA\Mbar_\epsilon^{-1}\right) \left(\Upsilon  - \Cbar\dot{\bfq} - \Nbar\right) 
+ (\bfA \Mbar_\epsilon^{-1} \bfA^T)^{-1}\dot{\bfA} \dot{\bfq}\\
&= \lim_{\epsilon\rightarrow 0} (\bfA \Mbar_\epsilon^{-1} \bfA^T)^{-1}\left(\bfA\Mbar_\epsilon^{-1}\left(\Upsilon  - \Cbar\dot{\bfq} - \Nbar\right) + \dot{\bfA} \dot{\bfq} \right), \label{eq:lameps}\\
\ddot{\bfq} &= \Md \left(\Upsilon  - \Cbar\dot{\bfq} - \Nbar\right) - \bfAdT \dot{\bfA} \dot{\bfq} 
= \lim_{\epsilon\rightarrow 0} \Md_\epsilon \left(\Upsilon  - \Cbar\dot{\bfq} - \Nbar\right) - \bfAdT_\epsilon \dot{\bfA} \dot{\bfq}\\
&= \lim_{\epsilon\rightarrow 0} \big(\Mbar_\epsilon^{-1}\!- \Mbar_\epsilon^{-1} \bfA^T (\bfA \Mbar_\epsilon^{-1} \bfA^T)^{-1} \bfA \Mbar_\epsilon^{-1}\big) 
(\Upsilon  - \Cbar\dot{\bfq} - \Nbar)-  \Mbar_\epsilon^{-1}\bfA^T(\bfA \Mbar_\epsilon^{-1} \bfA^T)^{-1}\dot{\bfA} \dot{\bfq} \\
&= \lim_{\epsilon\rightarrow 0} \Mbar_\epsilon^{-1} \Big( \Upsilon  - \Cbar\dot{\bfq} - \Nbar  
- \bfA^T\big((\bfA \Mbar_\epsilon^{-1} \bfA^T)^{-1}(\bfA\Mbar_\epsilon^{-1}(\Upsilon  - \Cbar\dot{\bfq} - \Nbar) + \dot{\bfA} \dot{\bfq} ) \big)\! \Big) 
\\
&= \lim_{\epsilon\rightarrow 0} \Mbar_\epsilon^{-1} \left( \Upsilon  - \Cbar\dot{\bfq} - \Nbar -\bfA^T \lambda \right), \label{eq:ddqeps}
\end{align}
where~\eqref{eq:lameps} and~\eqref{eq:ddqeps} are identically equal to the desired formulation 
of~\eqref{eq:ddqmass} and~\eqref{eq:lammass} when $\Mbar_0$ is non-singular. 
\end{proof} 

\subsubsection{Proof of Lemma~\ref{thm:impulse}}
\label{app:pfimp}
\begin{proof}
Recall that $\lim_{\epsilon\rightarrow0}\Mbar_\epsilon = \Mbar$ and that $\Mbar_\epsilon$ 
is invertible for all $\epsilon\in(0,\bar{\epsilon})$, for some $\bar{\epsilon} > 0$.
For all $\epsilon \ge 0$, define $\Md_\epsilon$, $\bfAd_\epsilon$, and $\Lambda_\epsilon$ by 
replacing $\Mbar$ with $\Mbar_\epsilon$ in~\eqref{eq:astardef}.
Then using equation~\eqref{eq:bfAdT} we rewrite the impulse (where all constraints $\bfA$ are taken to be
for the target contact mode $J$),
\begin{align}
&\bfP_\lambda = - \Lambda \bfA \dot{\bfq}^- =  \lim_{\epsilon\rightarrow0}-\Lambda_\epsilon \bfA \dot{\bfq}^- 
=  \lim_{\epsilon\rightarrow0} (\bfA \Mbar_\epsilon^{-1} \bfA^T)^{-1} \bfA \dot{\bfq}^-,
\end{align}
which is identically equal to~\eqref{eq:plammass} when $\Mbar_0$ is non-singular.
\end{proof}

\subsection{Differential Topology}\label{app:dg}

Let $M$ be a $C^r$ manifold with boundary where $r\in\Nat\cup\set{\infty,\omega}$.
There are several natural constructions associated with $M$ we invoke repeatedly, so we briefly introduce them here and refer the reader to~\cite{Lee2012} for formal definitions.
At every point $x\in M$ there is an associated \emph{tangent space} $T_x M$, which is a vector space with the same dimension as $M$, i.e.\ $\dim T_x M = \dim M$; if $M$ is a submanifold in a Euclidean space of suitable dimension, $T_x M$ may be regarded as a hypersurface in the ambient Euclidean space.
Collating these tangent spaces yields the \emph{tangent bundle} $TM = \mycoprod_{x\in M} T_x M$, which is naturally a $C^r$ manifold with boundary whose dimension is twice that of $M$, i.e.\ $\dim TM = 2 \dim M$.
There is a canonical projection $\pi:TM\into M$ that simply ``forgets'' the tangent vector portion of a point $(x,v)\in TM$, i.e.\ $\pi(x,v) = x$.
At every point $x\in M$ there is an associated \emph{cotangent space} $T_x^* M$, which is the dual of the vector space $T_x M$ (i.e.\ every $\nu\in T_x^* M$ is a linear operator $v:T_x M\into\Real$).
Collating these cotangent spaces yields the \emph{cotangent bundle} $T^*M = \mycoprod_{x\in M} T_x^* M$, which is naturally a $C^r$ manifold with boundary whose dimension is twice that of $M$, i.e.\ $\dim T^*M = 2 \dim M$.
A $C^r$ map $F:M\into TM$ is called a \emph{vector field} if $\pi\circ F = \Id_M$ where $\pi:TM\into M$ is the canonical projection and $\Id:M\into M$ is the canonical identity function.
Given a $C^1$ map $f:M\into N$ between $C^r$ manifolds, there is an associated \emph{pushforward} map $Df:TM\into TN$ between their tangent spaces that evaluates to a linear map $Df(x):T_x M\into T_{f(x)} N$ at every $x\in M$; in coordinates, $Df$ is the familiar Jacobian derivative of $f$.
The \emph{rank} of $f$ at $x\in M$ is defined to be the rank of the linear operator $Df(x)$; if the rank of $f$ does not vary over $M$, it is called \emph{constant rank}.

By the Whitney Embedding Theorem \cite[Thm.~6.15]{Lee2012} (if $r\in\Nat\cup\set{\infty}$) or the Nash Embedding Theorem \cite[]{Nash1966} (if $r=\omega$), $M$ admits a $C^r$ embedding $\iota:M\hookrightarrow\Real^{2n+1}$; 
thus any $C^r$ manifold may be regarded as a submanifold of a Euclidean space of suitably high dimension.
Since $F$ is $C^r$, the pushforward $D\iota\circ F$ admits a $C^r$ extension $\td{F}:\td{M}\into T\td{M}$ over an open neighborhood $\td{M}\subset\Real^{2n+1}$ of the embedded image of $M$.
The Fundamental Theorem on Flows \cite[Thm.~9.12]{Lee2012} implies there exists a \emph{maximal flow} $\td{\Phi}\in C^r(\td{\e{O}},\td{M})$ for $\td{F}$ where 
$\td{\e{O}}\subset\Real\times \td{M}$ is the \emph{maximal flow domain}.
We may restrict $\td{\Phi}$ to obtain a flow over $M$ as follows.
For each $x\in M$, let,
\eqn{
a_x & = \inf \big\{t \le 0 \mid 
(t,\iota(x))\in\td{\e{O}} \wedge \forall\; s\in(t,0] : \td{\Phi}(s,\iota(x))\in\iota(M)\big\}, \\
b_x & = \sup\big\{t \ge 0 \mid 
(t,\iota(x))\in\td{\e{O}} \wedge \forall\; s\in[0,t) : \td{\Phi}(s,\iota(x))\in\iota(M)\big\}.
}
Let $T_x\subset\Real$ be the interval between $a_x$ and $b_x$, including the endpoint if the corresponding infimum or supremum is achieved.
Then let $\e{O} = \bigcup_{x\in M}T_x\times\set{x}\subset\Real\times M$ and, noting that $\td{\Phi}(t,\iota(x))\in\iota(M)$ if $(t,x)\in\e{O}$, define the flow $\Phi:\e{O}\into M$ by $\Phi(t,x) = \iota^{-1}(\td{\Phi}(t,\iota(x)))$.
Note that $\Phi$ is $C^r$ in the sense that $\iota\circ\Phi$ admits a $C^r$ extension, $\td{\Phi}$.

For any $G\subset\bd M$, let,
\eqn{
 H  = \set{x\in M \mid \exists\; t \ge 0 : (t,x)\in\e{O}\wedge\Phi(t,x)\in G}.
}
Define $\eta: H \into\Real$ by,
\eqn{
\forall\; x\in H  : \eta(x) = \inf\set{t \ge 0 \mid (t,x)\in\e{O}\wedge\Phi(t,x)\in G},
}
and $\psi: H \into G$ by $\psi(x) = \Phi(\eta(x),x)$ for all $x\in H$.
Letting $\td{H} = \set{x\in H : F(\psi(x))\not\in T_{\psi(x)} \bd M}$, it is clear that $\eta|_{\td{H}}\in C^r( \td{H} ,\Real)$.
Note that $\eta$ is not differentiable at any point $x\in H$ for which $F(\psi(x))\in T_{\psi(x)} \bd M$; changing coordinates to a flowbox makes this obvious.
Intuitively, the impact time has unbounded sensitivity to initial conditions near such a point of tangency.

\subsection{Hybrid Differential Topology}
\label{app:hdg}
In mechanical systems undergoing intermittent contact with the environment (i.e., terrain or objects), the dynamics are ``piecewise-defined'' (or \emph{hybrid}): whenever a limb attaches or detaches from the substrate there is an instantaneous change in the set of active constraints, leading in general to a discontinuous change in velocity and (constraint) force.
Though it is possible to analyze these discontinuous dynamics in the ambient tangent bundle as in \cite{Ballard2000}, introducing a distinct portion of state space associated with every contact mode renders both the continuous-time dynamics (given by the flow of a vector field) and discrete-time dynamics (specified by a reset map) smooth.
Thus although additional notational overhead is required to index the constituent dynamical elements, the extra effort is partially compensated by enabling the use of elementary constructions from differential topology (rather than sophisticated measure-theoretic techniques used in \cite{Ballard2000}).

Motivated largely by these observations, \cite{BurdenRevzen2013} proposed to 
define hybrid dynamical systems over a finite disjoint union,
\eqn{ 
M = \mycoprod_{J\in {\cal J}} M_J = \bigcup_{J\in\e{J}} \set{J}\times M_J = \set{(J,x) : J\in\e{J}, x\in M_J},
} 
where $M_J$ is a finite dimensional $C^r$ manifold (possibly with corners) for each $J\in {\cal J}$.
By endowing $M$ with the unique largest topology with respect to which the (canonical) inclusions $M_J \hookrightarrow M$ are continuous \cite[Prop.~A.25]{Lee2012},
the set $M$ becomes a \emph{second-countable}, \emph{Hausdorff} topological space which is \emph{locally Euclidean} in the sense that each point $x\in M$ has a neighborhood that is homeomorphic to an open subset of $\Real^{n_x}$, some $n_x\in\Nat$.
Since the dimension is no longer required to be fixed, $M$ is technically not a topological manifold \cite[Chapter~1]{Lee2012}.
However, it is a mild generalization\footnote{
Since, crucially, each of the distinct finite components $M_J$ is  a conventional smooth $C^r$ manifold (of necessarily fixed dimension).}, 
hence we refer to it as a \emph{hybrid topological manifold}.

Motivated by the self-manipulation system from Section~\ref{sec:hyb}, we extend the definition in \cite{BurdenRevzen2013} to allow the component manifolds $M_J$ to possess \emph{corners}.
Unfortunately there is not presently a consensus on what ought to be the definition of a manifold with corners \cite[Remark~2.11]{Joyce2012}.
Fortunately, for our purposes the most straightforward definition in \cite[Ch.~16]{Lee2012} suffices.
This variant, for instance, ensures smooth extensibility of maps between manifolds with corners; see the bottom paragraph of \cite[p.~27]{Lee2012}.
(Note that the discussion of manifolds with boundary in \cite[Sec.~1.4]{Hirsch1976} (termed $\bd$-manifolds) does not address this, though \cite[Lem.~3.1 in Sec.~2.3]{Hirsch1976} should make it unsurprising.)
This coincides with \cite[Def.~1,~2]{Joyce2012}.

For each $J\in\e{J}$, $M_J$ has an associated maximal $C^r$ atlas $\e{A}_J$.
We construct a maximal $C^r$ hybrid atlas for $M$ by collecting charts from the atlases on the components of $M$:
\eqn{
\e{A} = \set{ (\set{J}\times U, \vphi\circ\pi_J) : J\in\e{J}, (U,\vphi)\in\e{A}_J},
}
where $\pi_J :\set{J}\times M_J\into M_J$ is the canonical projection.
We refer to the pair $(M,\e{A})$ as a \emph{$C^r$ hybrid manifold}, but may suppress the atlas when it is clear from context.
We define the \emph{hybrid tangent bundle} as the disjoint union of the component tangent~bundles, 
\eqn{
TM = \mycoprod_{J\in\e{J}} TM_J,
}
and the \emph{hybrid boundary} as the disjoint union of the boundaries,
\eqn{
\bd M = \mycoprod_{J\in\e{J}} \bd M_J.
}

Let $M = \mycoprod_{J\in {\cal J}} M_J$ and $N = \mycoprod_{L\in {\cal L}} N_L$ be two hybrid manifolds.
Note that if a map $R:M\into N$ is continuous 
as a map between topological spaces, then for each $J\in {\cal J}$ there exists $L\in {\cal L}$ 
such that $R(M_J)\subset N_L$ and hence $R|_{M_J}:M_J\into N_L$. 
Using this observation, we define differentiability for continuous maps between hybrid manifolds.
Namely, a map $R:M\into N$ is called \emph{$C^r$} if $R$ is continuous and $R|_{M_J}:M_J\into N$ is $C^r$ for each $J\in {\cal J}$.
In this case the \emph{hybrid pushforward} $DR:TM\into TN$ is the $C^r$ map defined piecewise as $DR|_{TM_J} = D(R|_{M_J})$ for each $J\in {\cal J}$.
A $C^r$ map $F:M\into TM$ is called a \emph{hybrid vector field} if $\pi\circ F = \Id_M$ where $\pi:TM\into M$ is the canonical projection and $\Id:M\into M$ is the canonical identity function.

\subsection{Proofs supporting Theorem~\ref{thm:zenolim}}
\subsubsection{Proof that velocity is bounded}\label{app:mech}
The following are standard results used in the proof of Theorem~\ref{thm:zenolim} to prove that velocity is bounded.
For convenience, we transcribe and apply to our setting these statements from \cite{Ballard2000}, which applies them to achieve 
a similar aim, however this is not to imply that \cite{Ballard2000} is necessarily the original source of these results.

\begin{lemma}[\citealt{Ballard2000}, Prop.~7]\label{lem:bd1} 
If $(\td{\bfq},\dot{\td{\bfq}}^+):[\ubar{t},\obar{t})\into T\e{Q}$ is a right-continuous trajectory of a Lagrangian system subject to perfect unilateral constraints with efforts map $\bff := \Upsilon - \Nbar$,
then for all $t\in [\ubar{t},\obar{t})$:
\eqn{
\frac{1}{2}\absM{\dot{\td{\bfq}}^+(t)}^2 - \frac{1}{2}\absM{\dot{\td{\bfq}}^+(\ubar{t})}^2 \le \int_{\ubar{t}}^t \bff(\td{\bfq}(s),\dot{\td{\bfq}}^+(s)) \dot{\td{\bfq}}^+(s) ds.
}
\end{lemma}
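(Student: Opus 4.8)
The plan is to prove this energy estimate by decomposing the trajectory over $[\ubar{t},t]$ into its smooth (flowing) portions and its impact instants, showing that kinetic energy is produced along smooth portions exactly by the work of the efforts and can only be \emph{destroyed} at impacts, and then summing. Throughout I work in coordinates $(V,\psi)$ with the kinetic energy written as $\frac{1}{2}\absM{\dot{\bfq}}^2 = \frac{1}{2}\dot{\bfq}^T\Mbar(\bfq)\dot{\bfq}$.

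First I would treat a single smooth interval $[t_{i-1}^+,t_i^-]$, on which the execution follows the vector field~\eqref{eq:dyn} in some mode $J_i$. Differentiating the momentum balance~\eqref{eq:dq} gives $\frac{d}{dt}(\Mbar\dot{\bfq}) = \Upsilon - \Nbar - \Ctd - \bfA_{J_i}^T\lambda_{J_i}$, whence
\[\dt{t}\paren{\tfrac{1}{2}\dot{\bfq}^T\Mbar\dot{\bfq}} = \dot{\bfq}^T\paren{\Upsilon - \Nbar} - \dot{\bfq}^T\Ctd - (\bfA_{J_i}\dot{\bfq})^T\lambda_{J_i} - \tfrac{1}{2}\dot{\bfq}^T\dot{\Mbar}\dot{\bfq}.\]
The two key cancellations are: the constraint-force power $(\bfA_{J_i}\dot{\bfq})^T\lambda_{J_i}$ vanishes because the velocity constraint $\bfA_{J_i}\dot{\bfq}=0$ holds throughout the mode~\eqref{eq:Ddef} (ideal constraints do no work); and the two remaining velocity-cubic terms cancel identically, since the definition of the Coriolis covector gives $\dot{\bfq}^T\Ctd = -\frac{1}{2}\dot{\bfq}^T\dot{\Mbar}\dot{\bfq}$ (metric compatibility written out in coordinates). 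Integrating over the interval yields the \emph{exact} balance $\frac{1}{2}\absM{\dot{\bfq}(t_i^-)}^2 - \frac{1}{2}\absM{\dot{\bfq}(t_{i-1}^+)}^2 = \int_{t_{i-1}}^{t_i}\bff(\td{\bfq},\dot{\td{\bfq}}^+)\dot{\td{\bfq}}^+\,ds$. Next I would bound each impact: since the reset~\eqref{eq:dqp} sets $\dot{\bfq}^+ = (\Id - \bfAdT_J\bfA_J)\dot{\bfq}^-$, which is precisely the $\Mbar$-orthogonal projection of $\dot{\bfq}^-$ onto the admissible subspace $\set{v:\bfA_J v=0}$, the impact is non-expansive in the kinetic-energy norm; the computation also used in the proof of Corollary~\ref{cor:zenolim} gives $\frac{1}{2}\absM{\dot{\bfq}(t_i^+)}^2 - \frac{1}{2}\absM{\dot{\bfq}(t_i^-)}^2 = -\frac{1}{2}\dot{\bfq}(t_i^-)^T\bfW_{J_i}\dot{\bfq}(t_i^-)\le 0$. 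Telescoping the smooth-interval equalities through the impact instants and discarding the non-positive impact terms then produces the claimed inequality.

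The main obstacle will be the bookkeeping when infinitely many impacts accumulate in $[\ubar{t},t]$ (the Zeno case for which this lemma is ultimately invoked): the telescoping becomes an infinite series of smooth increments interleaved with non-positive jumps, and I must justify that summing over all events and dropping the jumps is legitimate. I would handle this by first proving the inequality on $[\ubar{t},t']$ for any $t'$ preceded by only finitely many events, and then passing to the limit using right-continuity of $\dot{\td{\bfq}}^+$. Because the omitted impact terms are each $\le 0$, discarding them preserves the inequality at every finite stage, so no convergence of the jump series is needed for the upper bound itself. I would also check that the efforts integral on the right is well-defined along the right-continuous velocity, which follows from continuity of $\bff$ together with the a priori local boundedness of $\dot{\td{\bfq}}^+$ on each sub-interval between consecutive events.
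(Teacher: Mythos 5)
Your proposal is essentially correct for the setting in which this lemma is actually used, but it takes a genuinely different route from the paper, because the paper does not prove Lemma~\ref{lem:bd1} at all: Appendix~\ref{app:mech} states explicitly that the lemma is transcribed from \cite[Prop.~7]{Ballard2000}, so the ``paper's proof'' is an appeal to Ballard's measure-theoretic treatment of Lagrangian systems with bounded-variation velocities. Your argument instead rebuilds the estimate from the paper's own hybrid machinery: an exact work--energy balance on each smooth epoch (your cancellation $\dot{\bfq}^T\Ctd = -\frac{1}{2}\dot{\bfq}^T\dot{\Mbar}\dot{\bfq}$ is the same coordinate identity exploited in Appendix~\ref{app:parts}, and the constraint power vanishes because $\bfA_{J_i}\dot{\bfq}=0$ throughout the mode,~\eqref{eq:Ddef}), combined with the fact that the plastic reset~\eqref{eq:dqp} is the $\Mbar$-orthogonal projection onto $\ker\bfA_{J_i}$ and hence dissipates kinetic energy --- the very computation the paper performs inside the proof of Corollary~\ref{cor:zenolim}. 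What your route buys is a self-contained, elementary proof tailored to the self-manipulation system; what the paper's citation buys is generality, and that is precisely where your one soft spot lies.

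The soft spot is the limit-passing step for impacts accumulating at an interior instant $t^*$. Right-continuity of $\dot{\td{\bfq}}^+$ relates the velocity at $t^*$ to times \emph{after} $t^*$, whereas your telescoping approaches $t^*$ from the left; to bridge the two you need that energy cannot jump upward across the accumulation, e.g.\ by applying the impact law between the left limit $\lim_{t\uparrow t^*}\dot{\td{\bfq}}^+(t)$ and $\dot{\td{\bfq}}^+(t^*)$, or by a lower-semicontinuity argument for kinetic energy --- this is exactly the part that Ballard's bounded-variation framework supplies and that right-continuity alone does not. Fortunately, in the only place the paper invokes the lemma (bounding velocity in the proof of Theorem~\ref{thm:zenolim}), the trajectory is defined on $[\ubar{t},\obar{t})$ with impact times accumulating only at the excluded right endpoint $\obar{t}$, so every $t$ in the domain is preceded by finitely many events and your finite telescoping already yields the claim there. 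If you either restrict the statement to that situation or fill in the accumulation-point argument, your proof stands.
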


\begin{lemma}[\citealt{Ballard2000}, Lem.~17]\label{lem:bd2}
Let $a:[{\ubar{t}},\obar{t}]\into\Real$ be integrable and nonnegative for almost all $t\in({\ubar{t}},\obar{t})$.
If $\phi:[{\ubar{t}},\obar{t}]\into\Real$ has bounded variation and,
\eqn{
\forall\; t\in[{\ubar{t}},\obar{t}] : \frac{1}{2}\phi^2(t) \le \frac{1}{2} \alpha^2 + \int_{\ubar{t}}^t a(s)\phi(s) ds,
}
for some $\alpha \ge 0$ then,
\eqn{
\forall\; t\in[{\ubar{t}}, \obar{t}] : \abs{\phi(t)} \le \alpha + \int_{\ubar{t}}^t a(s) ds.
}
\end{lemma}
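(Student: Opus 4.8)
The plan is to treat this as a quadratic Gronwall--Bellman inequality, using the integral term itself as the auxiliary comparison function. I would define $\Psi:[\ubar{t},\obar{t}]\into\Real$ by $\Psi(t) := \frac{1}{2}\alpha^2 + \int_{\ubar{t}}^t a(s)\phi(s)\,ds$, so that the hypothesis reads $\frac{1}{2}\phi^2(t)\le\Psi(t)$, and in particular $\abs{\phi(t)}\le\sqrt{2\Psi(t)}$ wherever $\Psi(t)\ge 0$. Since $\phi$ has bounded variation on a compact interval it is bounded and measurable, and $a$ is integrable, so the product $a\phi$ is integrable and $\Psi$ is absolutely continuous with $\Psi'(t)=a(t)\phi(t)$ for almost every $t$. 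The goal is then to integrate the differential inequality satisfied by $\sqrt{2\Psi}$ and evaluate the endpoints.

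The key computation is that, at points where $\Psi>0$, the chain rule for absolutely continuous functions gives $\frac{d}{dt}\sqrt{2\Psi(t)}=\frac{\Psi'(t)}{\sqrt{2\Psi(t)}}=\frac{a(t)\phi(t)}{\sqrt{2\Psi(t)}}\le\frac{a(t)\abs{\phi(t)}}{\sqrt{2\Psi(t)}}\le a(t)$, where the last step uses $a\ge 0$ together with $\abs{\phi}\le\sqrt{2\Psi}$. To avoid dividing by zero I would regularize: for $\epsilon>0$ set $\Psi_\epsilon:=\Psi+\epsilon\ge\epsilon>0$, which is still absolutely continuous with the same derivative. Because $\abs{\phi(t)}\le\sqrt{2\Psi(t)}\le\sqrt{2\Psi_\epsilon(t)}$, the same manipulation yields $\frac{d}{dt}\sqrt{2\Psi_\epsilon(t)}\le a(t)$ almost everywhere. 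Integrating from $\ubar{t}$ to $t$ and using $\Psi_\epsilon(\ubar{t})=\frac{1}{2}\alpha^2+\epsilon$ gives $\sqrt{2\Psi_\epsilon(t)}\le\sqrt{\alpha^2+2\epsilon}+\int_{\ubar{t}}^t a(s)\,ds$, whence $\abs{\phi(t)}\le\sqrt{2\Psi_\epsilon(t)}\le\sqrt{\alpha^2+2\epsilon}+\int_{\ubar{t}}^t a(s)\,ds$. Letting $\epsilon\goesto 0^+$ then yields the claimed bound.

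The main obstacle is analytic bookkeeping rather than any deep idea: I must justify that $\sqrt{2\Psi_\epsilon}$ is absolutely continuous (it is, since $\Psi_\epsilon$ is absolutely continuous and $x\mapsto\sqrt{x}$ is Lipschitz on $[\epsilon,\infty)$) so that the fundamental theorem of calculus applies to its almost-everywhere derivative, and I must confirm that the pointwise bound $\frac{1}{2}\phi^2\le\Psi$ really holds for every $t$ and not merely almost every $t$ -- this is where the hypothesis is stated pointwise and the bounded-variation assumption guarantees that $\phi$ is genuinely defined and finite at every point. The regularization is precisely what handles the delicate case in which $\Psi(t)$ touches zero, at which $\sqrt{2\Psi}$ fails to be differentiable; the $\epsilon$-shift sidesteps this while preserving the inequality in the limit. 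No use of the specific mechanical structure is needed -- the lemma is a purely real-analytic inequality that Theorem~\ref{thm:zenolim} invokes, via the energy estimate of Lemma~\ref{lem:bd1}, to bound the velocity.
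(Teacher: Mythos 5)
Your proof is correct. Note, however, that the paper contains no proof of this statement to compare against: Lemma~\ref{lem:bd2} is transcribed verbatim from Ballard (2000, Lem.~17) and used as a black box in the appendix supporting Theorem~\ref{thm:zenolim}. Your argument --- introduce $\Psi(t) = \frac{1}{2}\alpha^2 + \int_{\ubar{t}}^t a\phi\,ds$, observe it is absolutely continuous and nonnegative (since $\Psi \ge \frac{1}{2}\phi^2 \ge 0$ pointwise), shift by $\epsilon$ to keep the square root Lipschitz, bound $\frac{d}{dt}\sqrt{2\Psi_\epsilon} \le a$ almost everywhere, integrate, and let $\epsilon \to 0^+$ --- is the standard regularized Gronwall-type proof of this quadratic integral inequality, and all the analytic bookkeeping you flag (boundedness and measurability of $\phi$ from bounded variation, integrability of $a\phi$, absolute continuity of the composition, validity of the fundamental theorem of calculus for absolutely continuous functions) is handled correctly, so it would serve as a legitimate self-contained replacement for the citation.
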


\begin{lemma}[\citealt{Ballard2000}, Lem.~15; \citealt{Sastry1999}, Prop.~3.21]\label{lem:bd3}
Let $a_1:[\ubar{t},\obar{t}]\into\Real$ have bounded variation and $a_2:[\ubar{t},\obar{t}]\into\Real$ be integrable and nonnegative for almost all $t\in (\ubar{t},\obar{t})$.
If $\phi:[\ubar{t},\obar{t}]\into\Real$ has bounded variation and,
\eqn{
\forall\; t\in[\ubar{t}, \obar{t}] : \phi(t) \le a_1(t) + \int_{\ubar{t}}^t a_2(s) \phi(s) ds,
}
then,
\eqn{
&\forall\; t\in[\ubar{t}, \obar{t}] : \phi(t) \le a_1(t) 
+ \int_{\ubar{t}}^t a_1(s)a_2(s)\exp\paren{\int_s^t a_2(\sigma) d\sigma}ds.
}
\end{lemma}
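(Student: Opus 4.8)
The plan is to prove this generalized Gr\"onwall inequality by the classical integrating-factor argument, taking care to track the reduced regularity of the data. First I would introduce the auxiliary function $\psi(t) := \int_{\ubar{t}}^t a_2(s)\phi(s)\,ds$. Since $\phi$ has bounded variation it is bounded, and $a_2$ is integrable, so the integrand $a_2\phi$ is integrable and $\psi$ is absolutely continuous on $[\ubar{t},\obar{t}]$ with $\psi(\ubar{t}) = 0$ and $\psi'(t) = a_2(t)\phi(t)$ for almost every $t$. The hypothesis $\phi(t)\le a_1(t)+\psi(t)$ together with $a_2(t)\ge 0$ then yields the differential inequality $\psi'(t) \le a_1(t)a_2(t) + a_2(t)\psi(t)$ almost everywhere.

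Next I would introduce the integrating factor $E(t) := \exp\paren{-\int_{\ubar{t}}^t a_2(\sigma)\,d\sigma}$, which is absolutely continuous and strictly positive with $E'(t) = -a_2(t)E(t)$ almost everywhere. Multiplying the differential inequality through by the nonnegative factor $E$ and regrouping, the product $\psi E$ is absolutely continuous and satisfies $\frac{d}{dt}\paren{\psi(t)E(t)} = \paren{\psi'(t) - a_2(t)\psi(t)}E(t) \le a_1(t)a_2(t)E(t)$ for almost every $t$. Integrating from $\ubar{t}$ to $t$ (valid by the fundamental theorem of calculus for absolutely continuous functions) and using $\psi(\ubar{t})E(\ubar{t}) = 0$ gives $\psi(t)E(t) \le \int_{\ubar{t}}^t a_1(s)a_2(s)E(s)\,ds$. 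Dividing by $E(t) > 0$ and combining the exponentials via $E(s)/E(t) = \exp\paren{\int_s^t a_2(\sigma)\,d\sigma}$ produces the bound $\psi(t) \le \int_{\ubar{t}}^t a_1(s)a_2(s)\exp\paren{\int_s^t a_2(\sigma)\,d\sigma}\,ds$. The conclusion then follows immediately from $\phi(t) \le a_1(t) + \psi(t)$.

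The main obstacle I anticipate is technical rather than conceptual: justifying each manipulation under the bounded-variation and merely-integrable hypotheses rather than under the continuity/differentiability one would prefer. The facts I would lean on are that bounded-variation functions are bounded and measurable, that $\psi$ and $E$ are absolutely continuous so that the product rule and the fundamental theorem of calculus hold in the almost-everywhere (Lebesgue) sense, and that multiplying an almost-everywhere inequality by the nonnegative factors $a_2$ and $E$ preserves it almost everywhere. As an alternative that sidesteps the integrating factor, I could instead iterate the defining inequality $\phi \le a_1 + \int a_2\phi$ to obtain a Neumann-series representation and recognize the resulting kernel as $\exp\paren{\int_s^t a_2}$; this avoids differentiation entirely but requires a dominated-convergence argument to sum the series, so I would favor the integrating-factor route.
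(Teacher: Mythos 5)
Your proof is correct, but there is nothing in the paper to compare it against: the paper does not prove this lemma at all, it transcribes it from the literature (\cite[Lem.~15]{Ballard2000}; \cite[Prop.~3.21]{Sastry1999}) and uses it as a black box in the velocity bound of the appendix supporting Theorem~\ref{thm:zenolim}. Your integrating-factor argument is the classical proof of this Gr\"onwall--Bellman estimate, and your handling of the reduced regularity is sound: bounded variation of $\phi$ gives boundedness and measurability, so $\psi(t)=\int_{\ubar{t}}^{t}a_2(s)\phi(s)\,ds$ is absolutely continuous with $\psi'=a_2\phi$ almost everywhere; multiplying the pointwise hypothesis $\phi\le a_1+\psi$ by $a_2\ge0$ yields $\psi'\le a_1a_2+a_2\psi$ a.e.; the factor $E(t)=\exp\paren{-\int_{\ubar{t}}^{t}a_2(\sigma)\,d\sigma}$ is absolutely continuous because its exponent is absolutely continuous and bounded on the compact interval; and the product rule together with the fundamental theorem of calculus for absolutely continuous functions justifies integrating $(\psi E)'\le a_1a_2E$. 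Two points worth making explicit in a final write-up: integrability of $a_1a_2$ follows from $a_1$ being bounded (as a BV function) and $a_2$ integrable, and no sign condition on $\phi$ or $a_1$ is needed anywhere. What your proof adds over the paper's citation is a self-contained check that the weak hypotheses (bounded variation plus integrability, rather than continuity) genuinely suffice; what the paper's approach buys is brevity, which is defensible since the lemma is standard and the paper's actual contribution lies in applying it, via Lemmas~\ref{lem:bd1} and~\ref{lem:bd2}, to bound the velocity of Zeno executions.
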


We apply the Lemmas above as in the proof of \cite[Thm.~10]{Ballard2000} to establish that the velocity is bounded on finite time horizons.
Let $(\td{\bfq}^+,\dot{\td{\bfq}}):[\ubar{t},\obar{t})\into T\e{Q}$ be a right-continuous trajectory of a Lagrangian system subject to perfect unilateral constraints and with forces that satisfies the bound in~\eqref{eq:effbddJ}.
Lemma~\ref{lem:bd1} yields for all $t\in T := [\ubar{t},\obar{t})$:
\eqnn{
\frac{1}{2}\absM{\dot{\td{\bfq}}^+(t)}^2 - \frac{1}{2}\absM{\dot{\td{\bfq}}^+(\ubar{t})}^2 \le \int_{\ubar{t}}^t \bff(\td{\bfq}(s),\dot{\td{\bfq}}^+(s)) \dot{\td{\bfq}}^+(s) ds.
}
Applying Lemma~\ref{lem:bd2} with $\phi(t) = \absM{\dot{\td{\bfq}}^+(t)}$, $\alpha = \absM{\dot{\td{\bfq}}^+(\ubar{t})}$, and $a(s) = \absMinv{\bff(\td{\bfq}(s),\dot{\td{\bfq}}^+(s))}$
combined with~\eqref{eq:effbddJ} 
implies for $t\in T$:
\eqnn{
\absM{\dot{\td{\bfq}}^+(t)} \le & \absM{\dot{\td{\bfq}}^+(\ubar{t})} + \int_{\ubar{t}}^t \absMinv{\bff(\td{\bfq}(s),\dot{\td{\bfq}}^+(s))} ds \\
\le & \absM{\dot{\td{\bfq}}^+(\ubar{t})} 
+ \int_{\ubar{t}}^t C\left[1 + \absM{\dot{\td{\bfq}}^+(s)} + d_{\Mbar}(\td{\bfq}(\ubar{t}),\td{\bfq}(s))\right] ds.
}
Recalling that $d_{\Mbar}(\td{\bfq}(\ubar{t}),\td{\bfq}(t)) \le \int_{\ubar{t}}^t \absM{\dot{\td{\bfq}}^+(s)} ds$ we find,
\eqnn{
d_{\Mbar}&(\td{\bfq}(\ubar{t}),\td{\bfq}(t)) + \absM{\dot{\td{\bfq}}^+(t)} 
\le \absM{\dot{\td{\bfq}}^+(\ubar{t})} + C(t - \ubar{t}) 
+ \int_{\ubar{t}}^t (1 + C) \brak{\absM{\dot{\td{\bfq}}^+(s)} + d_{\Mbar}(\td{\bfq}(\ubar{t}),\td{\bfq}(s))} ds.
}
Applying Lemma~\ref{lem:bd3}
with 
$\phi(t) = d_{\Mbar}(\td{\bfq}(\ubar{t}),\td{\bfq}(t)) + \absM{\dot{\td{\bfq}}^+(t)}$,
$a_1(s) = \absM{\dot{\td{\bfq}}^+(\ubar{t})} + C(t - \ubar{t})$,
$a_2(s) = 1 + C$
yields,
\eqnn{
d_{\Mbar}&(\td{\bfq}(\ubar{t}),\td{\bfq}(t)) + \absM{\dot{\td{\bfq}}^+(t)} 
 \le a_1(t) + \int_{\ubar{t}}^t a_1(s) a_2(s)\exp\brak{\int_s^t a_2(\sigma)d\sigma}ds.\\
}
In particular, since the right-hand-side of the inequality is bounded on finite time horizons, velocity is also bounded on finite time horizons,
\eqnn{\label{eq:vbar_}
\bar{v} := \sup_{t\in T}\set{\absM{\dot{\td{\bfq}}^+(t)}} < \infty.
}

\subsubsection{Integration-by-Parts}\label{app:parts}
Suppose $(\bfq,\dot{\bfq}):[\ubar{t},t]\into T\e{Q}$ satisfies~\eqref{eq:ddqmass}.
Then left-multiplying by $\Mbar$ and rearranging,
\eqnn{\label{eq:Mddq}
\Mbar(\bfq) \ddot{\bfq} + \Cbar(\bfq,\dot{\bfq}) \dot{\bfq}= \Upsilon(\bfq,\dot{\bfq}) - \Nbar(\bfq,\dot{\bfq}) - \bfA^T(\bfq) \lambda(\bfq,\dot{\bfq}),
}
where for all $i,j\in\set{1,\dots,\rmq}$,
\eqn{
\Cbar_{ij}(\bfq,\dot{\bfq}) := \sum_{k=1}^{\rmq} 
\frac{1}{2}\paren{\!
\vfof{\Mbar_{ij}(\bfq)}{\bfq^k} + 
\vfof{\Mbar_{ik}(\bfq)}{\bfq^j} - 
\vfof{\Mbar_{kj}(\bfq)}{\bfq^i}\!}
\dot{\bfq}^k,
}
see \cite[Eqn.~4.23]{book:mls-1994} or \cite[Eqn.~30]{johnson_selfmanip_2013} for details.
Note that for all $i\in\set{1,\dots,\rmq}$,
\eqn{
& \big[ \Mbar \ddot{\bfq} + \Cbar\dot{\bfq} \big]^i
 = 
\sum_{j=1}^{\rmq}\brak{\Mbar_{ij} \ddot{\bfq}^j} 
+ \sum_{j,k=1}^{\rmq}\brak{\frac{1}{2}\paren{
\vfof{\Mbar_{ij}(\bfq)}{\bfq^k} + 
\vfof{\Mbar_{ik}(\bfq)}{\bfq^j} - 
\vfof{\Mbar_{kj}(\bfq)}{\bfq^i}}\dot{\bfq}^k\dot{\bfq}^j} \\
& = 
\sum_{j=1}^{\rmq}\brak{\Mbar_{ij} \ddot{\bfq}^j} 
+ \sum_{j,k=1}^{\rmq}\brak{\paren{
\vfof{\Mbar_{ij}(\bfq)}{\bfq^k} 
- \frac{1}{2}\vfof{\Mbar_{kj}(\bfq)}{\bfq^i}}\dot{\bfq}^k\dot{\bfq}^j} \\
& = 
\dt{t} \sum_{j=1}^{\rmq}\brak{\Mbar_{ij} \dot{\bfq}^j} 
- \sum_{j,k=1}^{\rmq}\brak{ \frac{1}{2}\vfof{\Mbar_{kj}(\bfq)}{\bfq^i}\dot{\bfq}^k\dot{\bfq}^j}
= \dt{t} \sum_{j=1}^{\rmq}\brak{\Mbar_{ij} \dot{\bfq}^j} 
+ \Ctd^i(\bfq,\dot{\bfq}),
}
where,
\eqn{
\Ctd^i(\bfq,\dot{\bfq}) := -\frac{1}{2}\sum_{j,k=1}^{\mathrm{q}} \vfof{\Mbar_{kj}(\bfq)}{\bfq^i} \dot{\bfq}^k \dot{\bfq}^j.
}
Therefore rearranging~\eqref{eq:Mddq} we have for each $i\in\set{1,\dots,\rmq}$,
\eqnn{\label{eq:Mddq2}
\dt{t} \sum_{j=1}^{\rmq}\brak{\Mbar_{ij} \dot{\bfq}^j} = \Upsilon^i - \Nbar^i - \Ctd^i - (\bfA^T \lambda)^i.
}

Integrating both sides of~\eqref{eq:Mddq2} over the time interval $[\ubar{t},t]$, reintroducing the dependence on $(\bfq,\dot{\bfq})$ and time, and vectorizing over the index $i$,
\eqn{
\Mbar&(\bfq(t))\dot{\bfq}(t) - \Mbar(\bfq(\ubar{t}))\dot{\bfq}(\ubar{t})
= \int_{\ubar{t}}^t\left(\vphantom{\int} \Upsilon(\bfq(s),\dot{\bfq}(s)) - \Nbar(\bfq(s),\dot{\bfq}(s))  - \Ctd(\bfq(s),\dot{\bfq}(s)) \right. 
\left.\vphantom{\int} - \bfA(\bfq(s))^T \lambda(\bfq(s),\dot{\bfq}(s)) \right) ds,
}
as used in~\eqref{eq:dq}.

\subsubsection{Proof that constraint forces and impulses are bounded}\label{app:eff}

The following is a transcription of the argument used in the proof of \cite[Prop.~18]{Ballard2000} to show that 
constraint forces and impulses are bounded on bounded time intervals.

Given a right-continuous trajectory $(\td{\bfq},\dot{\td{\bfq}}^+):[\ubar{t},\obar{t})\into T\e{Q}$ of a Lagrangian system subject to perfect unilateral constraints, we assume that: 
the inertia tensor $\Mbar$ is nondegenerate; 
the position tends to a limit $\bar{q} := \lim_{t\goesto \obar{t}} \td{\bfq}$; 
and the velocity is bounded by 
$\bar{v} := \sup_{t\in T}\set{\absM{\dot{\td{\bfq}}^+(t)}} < \infty$ where $T = [\ubar{t},\obar{t})$.
This ensures there exists a compact neighborhood $K\subset V$ such that $\td{\bfq}({[\ubar{t},\obar{t})})\subset K$ and hence with $\obar{B}(0,\obar{v})\subset\Real^{\rmq}$ denoting the closed ball of radius $\obar{v}$ centered at the origin, the compact subset $K' := \obar{B}(0,\obar{v})\times K\subset T V$ contains $(\td{\bfq},\dot{\td{\bfq}}^+)({[\ubar{t},\obar{t})})$.
This implies the following constants are~finite:
\eqn{
F &:= \max_{j\in\set{1,\dots,\rmq}} \max_{(\bfq,\dot{\bfq})\in K'} \abs{\Upsilon^j(\bfq,\dot{\bfq}) - \Nbar^j(\bfq,\dot{\bfq})}, \\
G &:= \max_{j,k,\ell\in\set{1,\dots,\rmq}} \max_{\bfq\in K} \abs{\vfof{\Mbar_{k\ell}(\bfq)}{\bfq^j}}.
}
Letting $\sigma_{\max}$ and $\sigma_{\min}$ denote the maximum and minimum singular values of $\Mbar$ over $K$, we obtain the following~bounds:
\eqn{
\max_{j\in\set{1,\dots,\rmq}} \max_{(\bfq,\dot{\bfq})\in K'} \abs{\sum_{k=1}^{\rmq} \Mbar_{jk}(\bfq)\dot{\bfq}^k} &\le \sqrt{\sigma_{\max}} \obar{v}, \\
\max_{j\in\set{1,\dots,\rmq}} \max_{(\bfq,\dot{\bfq})\in K'} \abs{\dot{\bfq}^j} &\le \frac{\obar{v}}{\sqrt{\sigma_{\min}}}.
}
Suppressing dependence on $\td{\bfq}$ and $\dot{\td{\bfq}}^+$, we arrive at the bound
 that for each $j\in\set{1,\dots,\abs{\e{K}}}$,
\eqn{
&-\brak{\Mbar^j(t)\dot{\td{\bfq}}^+(t) - \Mbar^j(t_{\ubar{m}})\dot{\td{\bfq}}^+(t_{\ubar{m}})} 
 + \int_{t_{\ubar{m}}}^t \Upsilon^j - \Nbar^j - \Ctd^j ds
\leq 2\sqrt{\sigma_{\max}} \mathbf{\bar{v}}  + \paren{F + \frac{\rmq^2 G \obar{v}^2}{2\sigma_{\min}}}(t - t_{\ubar{m}})  < \infty,
}
thus satisfying the condition on~\eqref{eq:abbound}.

\bibliographystyle{dcu}
\pdfbookmark{\refname}{\refname}
\bibliography{impulses}

\end{document}